\newcommand{\shubhada}[1]{\textcolor{orange}{Shubhada: #1}}
\newcommand{\arun}[1]{\textcolor{teal}{Arun: #1}}
\newcommand{\KS}[1]{\textcolor{blue}{KS: #1}}
\newcommand{\lrset}[1]{\left\{{#1}\right\}}
\newcommand{\lrp}[1]{\left({#1}\right)}
\newcommand{\lrs}[1]{\left[{#1}\right]}
\newcommand{\E}{\mathbb{E}}
\newcommand{\KL}{\operatorname{KL}}
\newcommand{\argmax}{\arg\!\max}
\newcommand{\KLinfL}{\operatorname{KL^{L}_{inf}}}
\newcommand{\KLinfU}{\operatorname{KL^{U}_{inf}}}
\newtheorem{theorem}{Theorem}[section]
\newtheorem{lemma}[theorem]{Lemma}
\newtheorem{proposition}[theorem]{Proposition}
\newtheorem{definition}[theorem]{Definition}
\title{Optimal Best-Arm Identification in Bandits with Access to Offline Data}
\author{Shubhada Agrawal\thanks{A significant part of this work was done as a student researcher at Google Research, India, when the author was a PhD student at TIFR Mumbai, India.} \\
  Georgia Institute of Technology, USA\\
  \texttt{shubhadaiitd@gmail.com} \\
\And
  Sandeep Juneja \\
  TIFR Mumbai, India \\
  Visiting Researcher, Google Research, India\\
\texttt{juneja@tifr.res.in} \\
  \AND
  Karthikeyan Shanmugam\\
  Google Research, India \\
\texttt{karthikeyanvs@google.com} \\
  \And
  Arun Sai Suggala\\
  Google Research, India \\
\texttt{arunss@google.com} \\
}
\begin{document}
\maketitle
\begin{abstract}
Learning paradigms based purely on offline data as well as those based solely on sequential online learning have been well-studied in the literature. In this paper, we consider combining offline data with online learning, an area less studied but of obvious practical importance. We consider the stochastic $K$-armed bandit problem, where our goal is to identify the arm with the highest mean in the presence of relevant offline data, with confidence $1-\delta$. We conduct a lower bound analysis on policies that provide such $1-\delta$ probabilistic correctness guarantees. We develop algorithms that match the lower bound on sample complexity when $\delta$ is small. Our algorithms are computationally efficient with an average per-sample acquisition cost of $\tilde{O}(K)$, and rely on a careful characterization of the optimality conditions of the lower bound problem.

\end{abstract}

\section{Introduction}

Bandit optimization (BO) and reinforcement learning (RL) are general frameworks for sequential decision-making when the dynamics of the underlying environment are a priori unknown and have shown great empirical success in areas including games and recommendation systems~\cite{li2010contextual, silver2017mastering, lattimore2020bandit}. Despite their success, they have found limited applications in critical areas such as healthcare and robotics because repeated interactions with the environment is often expensive. To overcome this drawback, recent works have developed the frameworks of offline RL that totally avoid online interactions with the environment~\cite{levine2020offline, nguyen2021offline}. They instead rely on offline data collected in the past to learn good policies due to access to significant amounts of offline data. However, a major drawback is that naively using offline data may be bottle necked by quality of offline data. For example, if the data is  from a less exploratory policy, then the learned policies have poor performance~\cite{liu2020provably}. 

In this work, we consider a learning paradigm which is in between the purely online and purely offline learning paradigms. To be precise, we consider sequential decision-making problems where the learner can use both online interactions and offline data to come up with good policies.  This paradigm has the potential to achieve the best of both worlds: if the quantity of offline data is good and is representative of the current environment, then the learner can learn good policies with zero or few online interactions. On the other hand, if the offline data is stale, distributionally unrepresentative, inadequate or of poor quality, then the learner can still learn good policies but with more online samples. This combined offline-online (o-o) paradigm has received little attention in the literature. A few recent works have considered this, but as detailed in a later section, they are mostly empirical in nature and do not provide (instance) optimal algorithms as we do in our work. 

We focus on the problem of best-arm identification (BAI) for stochastic multi-armed bandits (MAB) in the offline-online (o-o) paradigm. This is a simple yet powerful setting that captures several practically-important scenarios arising in fields such as healthcare and recommendation systems. In this problem, the decision-maker or the learner is presented with $K$ unknown and independent probability distributions or arms from which it can generate samples. On selecting an arm, the learner observes a sample drawn independently from the underlying distribution. The learner is also provided with side information in the form of offline data, which is generated by an unknown policy. The goal of the learner is to identify the best arm (\emph{i.e.,} arm with the largest mean) in as few online rounds as possible, with at least $1-\delta$ confidence, while making optimal use of the offline data. This problem has been well-studied in the purely online setting where the learner doesn't have access to offline data~\cite{kaufmann2016complexity, garivier2016optimal, russo2016simple, jourdan2022top}. However, to the best of our knowledge, ours is the first work to formally study this problem in the o-o paradigm.

\noindent {\bf Contributions:} We first study the fundamental limits of this problem. In particular, we derive a lower bound on the expected number of online samples generated by an algorithm with access to offline data under the assumption that the offline data comes from an unknown policy which generates samples from the same bandit instance as the one in the online phase, and the online algorithm limits the error probability due to randomness in both and offline and online data to a specified $\delta $. In this framework, we provide a computationally-efficient algorithm that achieves the optimal sample complexity as $\delta \rightarrow 0$  (such asymptotic guarantees are standard in BAI literature). 
Our algorithm is a track-and-stop algorithm that pulls arms in proportions suggested by the lower bound. It requires repeatedly solving an optimization problem associated with the lower bound. We bring out key structural properties of this optimization problem, show that it has a unique solution, and use the associated properties to solve it efficiently. The proposed algorithm is computationally-efficient with an amortized computational complexity of $\tilde{O}(K)$. 

The technical analysis in the o-o version is significantly harder
compared to the purely-online setting. A key technical challenge  is that the proportions suggested by the lower bound optimization problem are a function of the amount of offline data available and can 
possibly converge to $0$ in the limit of small $\delta$. 
This requires delicate sample complexity analysis. 
Our analysis also clearly brings out the benefits  of offline data.
We also highlight this in our numerical experiments.

We also briefly discuss an alternative lower bound formulation where the $\delta$-correct guarantee is
sought along every
offline sample path (more accurately, along a set of probability 1). Even along the rare paths where the samples can be highly  misleading. 
For such formulation
we need to consider guarantees on data generating probability measures conditioned on offline data.
We conclude that such guarantees
require us to ignore offline data and
our lower bound on online samples is identical to that in the pure online setting. 
At a high level, the intuitive rationale is that
the best arm identification problems are composite hypothesis
testing problems. An algorithm with probabilistic guarantees needs to generate sufficient data  to rule out probability measures that suggest an alternative hypothesis. 
However, since all probability measures conditioned on
offline data agree
on the conditioned data, this data is not useful in
separating them. The supporting analysis
is given in  Appendix~\ref{sec:cond_correct}.
This analysis  supports the framework that
we pursue in this paper where both offline and online data 
are assumed to be generated from a common probability measure, and we seek $\delta$-correctness guarantees 
under such joint probability measures.

\noindent{\bf Related work:}
The literature on bandit optimization is vast. Here, we  primarily focus on MAB in the stochastic setting and review literature that is relevant to our work.\vspace{0.05in}\\
\noindent \textit{Purely-online learning.} MAB was first studied by \cite{thompson1933likelihood} in connection with designing adaptive clinical trials. Since then its variants have been extensively studied and are being used in practice, for example, recommendation systems, advertisement placement, routing over networks, resource allocation, etc. (see, \cite{li2010contextual, bubeck2012regret,slivkins2019introduction, lattimore2020bandit}). MAB is usually studied with one of the following three metrics: regret minimization~\cite{lai1985asymptotically, agrawal1995sample,burnetas1996optimal, auer2002finite, honda2010asymptotically,honda2009asymptotically}, BAI with fixed budget~\cite{audibert2010best,bubeck2009pure,barrier2022best}, and BAI with fixed confidence~\cite{even2006action, kalyanakrishnan2012pac,kaufmann2013information, kaufmann2016complexity,degenne2019non}.  Lower bounds and optimal algorithms for these metrics (except for BAI with fixed budget) have been designed under a variety of settings. 

For regret minimization, $\KL$-UCB \cite{garivier2011kl,cappe2013kullback}, ${\KL}_{\operatorname{inf}}$-UCB \cite{agrawal2021regret} are asymptotically optimal for parametric and heavy-tailed distributions, respectively. Thompson sampling is also shown to be optimal for parametric distributions~\cite{agrawal2012analysis,kaufmann2012thompson,korda2013thompson}. In Appendix \ref{app:regret}, we discuss an o-o version of the classical regret-minimization problem. We derive a lower bound on the regret suffered by any reasonable algorithm and discuss a natural adaptation of the $\KL$-UCB algorithm of \cite{cappe2013kullback} for the o-o problem.

The fixed-confidence BAI problem was first studied by \cite{even2006action, gabillon2012best, kalyanakrishnan2012pac, karnin13, jamieson14}. For this problem, Track-and-Stop (TaS) algorithm is shown to be asymptotically-optimal (as $\delta\rightarrow 0$) for exponential families~\cite{garivier2016optimal} as well as for heavy-tailed and bounded settings~\cite{pmlr-v117-agrawal20a}. References \cite{karnin13, jamieson14, chen2017nearly, chen17b} provide algorithms with finite $\delta$ guarantees, but these algorithms are sub-optimal in $\delta \rightarrow 0$ regime. Moreover, \cite{karnin13, jamieson14} only have high-probability guarantees instead of the stronger expected bound. References \cite{kalyanakrishnan2012pac, kaufmann2013information} propose UCB-like algorithms (LUCB1 and KL-LUCB) for $\delta$-correct BAI giving high-probability as well as expected theoretical guarantees that hold even for finite $\delta$. While these algorithms have non-asymptotic theoretical guarantees, they are known to be both theoretically sub-optimal, and also do not beat the tracking-style algorithms empirically. We discuss a version of the $\KL$-LUCB algorithm in Appendix \ref{app:KL-LUCB} adapted to our o-o setting and show numerically that it under-performs. Hence, we only focus on the optimal tracking-based algorithm in the main text. 

\noindent \textit{Purely-offline learning.} MAB and its variants have also been studied in the purely offline setting~\cite{rashidinejad2021bridging, xiao2021optimality}. These works studied various notions of optimality in the offline paradigm and designed optimal algorithms under these notions. However, the quality of the learned policies are severely bottle-necked by the quality of the offline data. For example, the minimax rates for best-arm estimation in \cite{rashidinejad2021bridging} depend on the fraction of pulls of the best arm in the offline data. These rates become trivial as the fraction of pulls of best arm approaches $0$.

\textit{Offline+online learning.} A few recent works have studied MAB in the offline-online paradigm ~\cite{shivaswamy2012multi, bouneffouf2019optimal, ye2020combining, oetomo2021cutting, banerjee2022artificial}. These works have primarily focused on regret minimization. While these works derive worst-case regret bounds for the proposed algorithms, they do not talk about optimality of these bounds either in a minimax sense or instance dependent sense. Reference \cite{bu2020online} considers the o-o regret problem in a 1-D linear bandit setting. In Section~\ref{sec:alg}, we show that the \emph{artificial replay} algorithm proposed by \cite{banerjee2022artificial} is sub-optimal for BAI. The classical 
regret minimization is another practically relevant 
problem to consider in the o-o framework. In this case,
the generalization from the online setting is straightforward. 
We  discuss the o-o regret-minimization problem in Appendix \ref{app:regret}, where we develop a lower bound as well as an algorithm.

\section{Problem setting and lower bound}\label{sec:setup_lb}
\noindent{\bf Distributional assumption:} As is common in the MAB literature, we assume that the arm distributions belong to a known \emph{canonical single-parameter exponential family} of distributions (SPEF).
Then each distribution can be indexed by its mean. This  considerably simplifies
the technicalities and allows simpler illustration of ideas.  
While SPEF are discussed in greater detail in Appendix~\ref{app:SPEF}, here
we let $\cal S \subset \Re$ denote the possible means of the SPEF under consideration.

\noindent {\bf MAB setup and objective:}
The algorithm is presented with $K$ unknown probability distributions or \emph{arms}, $\mu = (\mu_1, \dots, \mu_K)$, where each  $\mu_i \in \cal S $ denotes the mean of the distribution from the SPEF under consideration (we refer to each $\mu_i$ interchangeably as a distribution as well as its mean in the SPEF context). As is standard in this framework (of exact fixed-confidence BAI), we assume that there is a unique arm with the largest mean. If there are $2$ arms tied for the largest mean, the algorithm will stop only with a small probability as it will need to statistically separate the tied arms. One way around it is to look for an $\epsilon$-best arm (an arm whose mean is within $\epsilon$ of the best arm). However, that is technically a significantly more demanding problem. We choose to make this assumption to bring out more simply the insights associated with the utility of offline data. Without loss of generality, let $\mu_1 > \mu_{a}$ for all $a \ne 1$.

In addition to the unknown bandit instance $\mu$ from which the algorithm can generate samples, it has access to $\tau_1$  samples from the arms in $\mu$. We refer to these historical samples as the offline data and denote
it by  ${H} = \{(a_{t}, x_{t})\}_{t = 1 }^{\tau_1}$. The arm choice $a_{t}$ in ${H}$ depends on the  unknown offline policy $\pi^{0}$ that collected the data and is a function of the past data $\{(a_{t'}, x_{t'})\}_{t' < t}$. Reward $x_t$ is a realization  from distribution $\mu_{a_t}$, that is assumed to be generated independent of the past data. Next, recall that a policy specifies the data driven conditional probabilities that at each time $t+1$ determine the next action as a function of the history of realized rewards and arms pulled till time $t$. The online phase of the algorithm starts from $\tau_1+1$ and sequentially continues till some random stopping time $\tau_1 + \tau_\delta$. 

\textit{$\delta-$correct policy/algorithm:} Given $\delta > 0$, 
we are interested in an online policy that stops at time $\tau_1 + \tau_\delta>0$ and outputs a best arm estimate $k_{\tau_1 + \tau_\delta}$ such that $\mathbb{P}(k_{\tau_1 + \tau_\delta} \neq 1 ) \leq \delta $, i.e., it identifies the arm with the maximum mean with probability at least $1-\delta$. Here, the probability is with respect to joint distribution over the offline data as well as online sampling.

We are interested in such $\delta$-correct policies that minimize $\mathbb{E}[\tau_{\delta}]$. Here, the expectation is computed over the randomness in the offline and online policies as well as offline and online samples. 
This is the typical \emph{fixed-confidence} setting of the best-arm identification (BAI) framework of the MAB problem, extensively studied in literature (see, \cite{lattimore2020bandit} for a survey of known results). However, access to the offline data can significantly reduce the number of online samples ($\tau_\delta$) that need to be generated by a $\delta$-correct algorithm. We show this in Section~\ref{Sec:lb.prop} and discuss more concretely after Theorem~\ref{thm:stop_bound1}. A nuance in our analysis is that the amount of offline data, $\tau_1$, is allowed to be a function of $\delta$.

\subsection{Lower bound}\label{sec:lb}

    We now present a lower bound on the expected number of online samples that any $\delta$-correct algorithm will need to generate, while also using the available offline samples. Let $\tau_\delta$ denote the total number of online samples generated. Note that $\tau_\delta$ may depend on the observations generated by the algorithm. Hence, it is a stopping time. Let the total number of samples from arm $a$ in the offline data be denoted by $N^o_a$, and that from online sampling till time $\tau_\delta$ be $N_a$. Recall
    that  $\mu_1 > \mu_{a}$ for all $a \ne 1$. With a view to deriving the lower bound, let us define the following optimization problem ($\textbf{P1}$):
    \begin{align*}
    & \min~~~ \sum\nolimits_{a=1}^K N_a \qquad \text{s.t.} ~~~~ S_{1,a}({\bf N}) \ge \log\frac{1}{2.4 \delta}  \quad\forall a \ne 1,~
    N_a \ge 0,
\end{align*}
where ${\bf N}= (N_a: a \in [K])$, $S_{1,a}$ is defined below in~\eqref{eq:Sab} with $\KL(\nu,x)$ denoting the Kullback-Leibler divergence between two distributions in ${\cal S}$ with means $\nu$ and $x$.
\begin{align} 
    S_{1,a}({\bf N}) = \inf\limits_{x} ~ & \left\{(\mathbb {E}[N^o_{1}] + N_{1})\KL(\mu_{1}, x) + (\mathbb{E}[N^o_{a}] + N_{a})\KL(\mu_{a}, x) \right\}. \label{eq:Sab}
\end{align}
 It is known that for the SPEF family, the infimum in $S_{1,a}$ is given by (see Lemma \ref{lem:optima_SPEF}): 
\begin{align}\label{eq:unique_inf}
x_{1,a} = \frac{ (\mathbb {E}[N^o_{1}]+N_1)\mu_1 + (\mathbb{E}[N^o_a] + N_a ) \mu_a }{\mathbb {E}[N^o_{1}] + N_1 + \mathbb {E}[N^o_{a}] + N_a}.
\end{align}
Let $T^{*}$ denote the optimal value of the problem ({\bf P1}). Then, we have the following result that lower bounds the stopping time for any $\delta$-correct algorithm.

\begin{theorem}\label{thm:Lower_bound}
  For all $\mu \in {\cal S}^K$, $\mu_1 > \mu_{a}$, $a \ne 1$, a $\delta$-correct policy satisfies: $\mathbb{E}[\tau_\delta] \geq T^{*}$.
\end{theorem}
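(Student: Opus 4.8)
The plan is to adapt the classical change-of-measure lower-bound technique (in the spirit of Garivier--Kaufmann) so as to incorporate the offline data. Write $k := k_{\tau_1 + \tau_\delta}$ for the arm recommended at the stopping time, and let $\mathbb{P}_\mu$ denote the law of the full offline-plus-online trajectory under instance $\mu$. First I would fix a confusing alternative $\mu' \in {\cal S}^K$ under which arm $1$ is no longer the best arm, apply the data-processing inequality to the binary event $\{k = 1\}$, and invoke $\delta$-correctness: on $\mu$ (where arm $1$ is best) $\mathbb{P}_\mu(k=1) \ge 1-\delta$, whereas on $\mu'$ we have $\mathbb{P}_{\mu'}(k=1) \le \delta$. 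The data-processing inequality then gives $\KL(\mathbb{P}_\mu \,\|\, \mathbb{P}_{\mu'}) \ge d(1-\delta, \delta)$, where $d$ is the binary relative entropy, and the standard estimate $d(1-\delta,\delta) \ge \log\frac{1}{2.4\delta}$ produces exactly the threshold appearing in (\textbf{P1}).

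The step genuinely new to the o-o setting is decomposing the trajectory divergence into arm-wise contributions over both data sources. By the chain rule for relative entropy along the sequential data, the conditional arm-selection terms --- arising from the \emph{offline} policy $\pi^0$ and from the online algorithm alike --- cancel, since each is a fixed function of the observed history and is therefore identical under $\mu$ and $\mu'$; only the reward-emission terms survive. A Wald-type identity applied to the resulting randomly stopped sums then yields
\[
\KL\big(\mathbb{P}_\mu \,\|\, \mathbb{P}_{\mu'}\big) \;=\; \sum_{b=1}^K \big(\mathbb{E}[N^o_b] + \mathbb{E}[N_b]\big)\,\KL(\mu_b, \mu'_b),
\]
with all expectations taken under $\mu$. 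Hence the offline data enters precisely through the coefficients $\mathbb{E}[N^o_b]$, matching those that define $S_{1,a}$.

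Combining the two steps, every alternative $\mu'$ that makes arm $1$ non-best satisfies $\sum_b (\mathbb{E}[N^o_b] + \mathbb{E}[N_b])\KL(\mu_b,\mu'_b) \ge \log\frac{1}{2.4\delta}$. I would then specialize to the cheapest such alternatives: for each fixed $a \ne 1$, leave all arms other than $1$ and $a$ untouched (zero contribution) and set $\mu'_1 = \mu'_a = x$ for a common mean $x$ (formally, push arm $a$ infinitesimally above arm $1$ and pass to the limit, so as to remain within the unique-best-arm class). Since the bound holds for every such $x$, it holds for the infimum over $x$, which by \eqref{eq:Sab} and its optimizer \eqref{eq:unique_inf} equals $S_{1,a}(\mathbb{E}[{\bf N}])$; thus $S_{1,a}(\mathbb{E}[{\bf N}]) \ge \log\frac{1}{2.4\delta}$ for every $a \ne 1$, where $\mathbb{E}[{\bf N}] = (\mathbb{E}[N_a])_{a \in [K]}$. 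This exhibits the vector of expected online pulls as a feasible point of (\textbf{P1}). Since $\mathbb{E}[\tau_\delta] = \sum_a \mathbb{E}[N_a]$ is exactly the objective of (\textbf{P1}) and $T^*$ is its minimum over the feasible region, the bound $\mathbb{E}[\tau_\delta] \ge T^*$ follows.

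The main obstacle I anticipate is making the offline decomposition rigorous. The offline count $N^o_b$ is random and driven by an \emph{unknown}, history-dependent policy $\pi^0$, and $\tau_1$ may itself scale with $\delta$; carefully justifying both the cancellation of the $\pi^0$-selection terms and the Wald identity for a policy-driven, randomly stopped sum under the joint offline--online law is the crux of the argument. This is precisely where the proof departs from the purely online setting, in which only the algorithm's own sampling rule contributes to the likelihood ratio.
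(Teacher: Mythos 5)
Your proposal is correct and follows essentially the same route as the paper's proof: a change-of-measure/data-processing argument on the recommendation event, the arm-wise KL decomposition $\sum_a(\mathbb{E}[N^o_a]+\mathbb{E}[N_a])\KL(\mu_a,\tilde\nu_a)$ over the joint offline--online trajectory, reduction to two-arm alternatives with a common mean $x$ (the paper's Lemma~\ref{lem:optima_SPEF}), and the conclusion that the expected allocations are feasible for $\mathbf{P1}$. The only cosmetic difference is that you flag the rigor of the offline-policy cancellation and the Wald identity more explicitly than the paper, which states the decomposition in~\eqref{eq:averagell} directly.
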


As is common in the bandit literature, the lower bound in the above theorem is obtained by a \emph{change-of-measure} argument, which is captured by the data processing inequality relating the likelihood ratio of observing samples under two different bandit instances to the likelihood of occurrence of an appropriate event under these two bandit instances (see \cite{kaufmann2016complexity, combes2014unimodal, garivier2019explore}). We give a formal proof in  Appendix \ref{app:LB}. 

\subsection{Properties of the lower bound}\label{Sec:lb.prop}

\textit{Intuitive Picture:} When is the offline data sufficient to make the correct decision at $1- \delta$ confidence? What can we qualitatively say about the online samples needed, when is the offline data not sufficient? Below we discuss these issues. 

\begin{wrapfigure}[19]{r}{0.55\textwidth}
\centering
\includegraphics[width=0.54\textwidth]{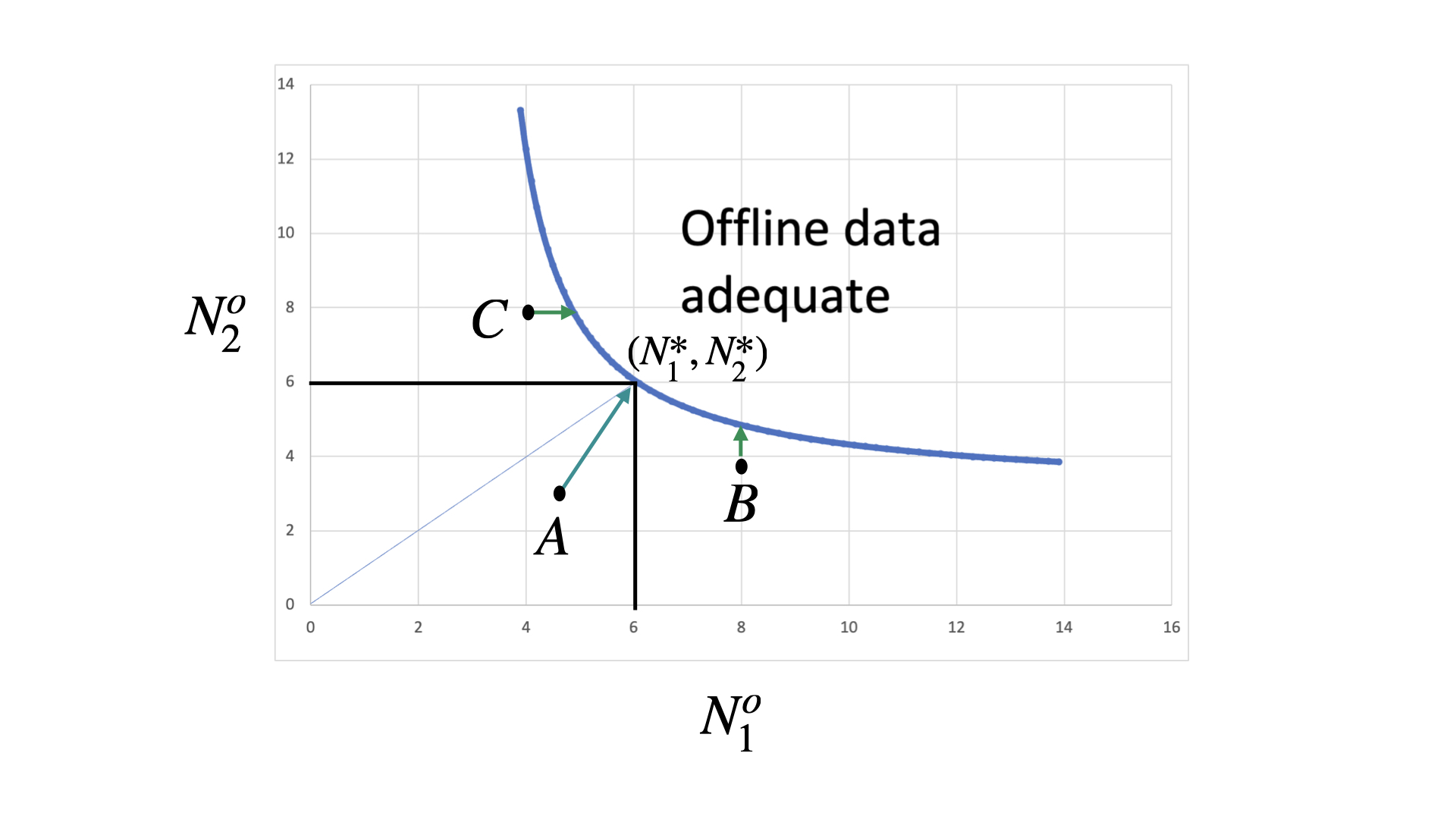}
\caption{Plot depicting the set $A_{\delta}(\mu)$ for a 2-armed MAB with Gaussian rewards, where $\mu_1-\mu_2=2$, and $\delta= 0.001$. The blue curve is the set of points $(N_1,N_2)$ where the constraint is tight. $(N^*_1, N^*_2)$ is the optimal online allocation. Point $A$ corresponds to an offline allocation with $(N_1^o, N_2^o) < (N_1^*, N_2^*)$,
$B$ to $N_1^o> N_1^*$, $N_2^o<N_2^*$, and  
$C$ to $N_1^o < N_1^*$, $N_2^o> N_2^*$. }
\label{fig:online_offline_lb_properties}
\end{wrapfigure}

Let $A_{\delta}(\nu)$ denote the set of 
${\bf N} \in \Re^K_+$ that satisfy
$\inf_{x} \{ N_{1} \KL(\nu_1, x) + N_{a} \KL(\nu_a, x)\} \geq \log\frac{1}{2.4\delta}$, for all $ a\ne 1.$ A key observation we often rely on in this section is that the LHS of the above inequality is an increasing function of $N_1$ and  $N_a$ (see Lemma~\ref{lem:convex.alloc}).

Recall, in the purely online setting, the optimal allocation  ${\bf N^*}$ in the lower bound analysis is obtained by solving the following problem (\cite{kaufmann2016complexity}):
$ \min \sum_{a=1}^K N_a ~\mbox{  such that  } ~ {\bf N} \in A_{\delta}(\mu).
$ Let ${\bf \widetilde{N}}= (\widetilde{N}_1, \ldots, \widetilde{N}_K)$ denote the optimal solution to the corresponding  o-o lower bound problem ($\mathbf{P1}$), which can be re-written as: $\min_{{\bf N} \succeq 0} \sum_{a=1}^K
N_a ~\mbox{  such that  }~ {\bf N^o + N} \in A_{\delta}(\mu).
$ Here, ${\bf N^{o}}$ are the offline samples (ignoring the expectation sign for simplicity) which we assume are known
for this discussion. 

First, note that $\sum_{a=1}^K N^*_a \geq \sum_{a=1}^K \widetilde{N}_a.$ This follows since ${\bf N^*}$ is a feasible point for the constraints in $\mathbf{P1}$. This indicates that offline data - irrespective of the policy used to collect it -  \textbf{reduces} the number of required online samples. We now discuss more nuanced properties of the solution of $\mathbf{P1}$ (illustrated in Figure~\ref{fig:online_offline_lb_properties}). To the extent that algorithms that closely match the lower bound closely mimic these properties, this also sheds light on their performance.

Suppose  ${\bf N}^o \succeq {\bf N^*}$ (i.e., $N_a^o \geq N^*_a$ for all $a \in [K]$), then online samples ${\bf \widetilde{N}} ={\bf 0}$, and we don't need any online samples. This follows from the observation that ${\bf N^*}$, and as a consequence, ${\bf N^o}$ lie in $A_{\delta}(\mu).$  On the other hand, suppose ${\bf N}^o \preceq {\bf N^*}$, then ${\bf \widetilde{N}} = {\bf N^*} -{\bf N^{o}}$, and one simply needs to allocate $N^*_a- N_a^o$ samples to each arm to solve $\bf P1$. Now, suppose ${\bf N^o} \not\preceq {\bf N^*}$, and ${\bf N^o} \not\succeq {\bf N^*}$. Consider three sub-cases suggested by Figure~\ref{fig:online_offline_lb_properties}\vspace{0.05in}:

\textbf{(a)} If $N_1^o >  N^*_1,$ then $\widetilde{N}_1= 0$.

\textbf{(b)} For 2-armed bandit problems, if $N_2^o >  N^*_2,$ then $\widetilde{N}_2= 0$. But this need not be true when $K>2$.

\textbf{(c)} For ${\bf N^o}$ to lie in the constraint set $A_{\delta}(\mu)$, we need \[N_1^o > \max\limits_{a \in [K]\setminus{\{1\}}}~ \frac{\log\frac{1}{2.4 \delta}}{ \KL(\mu_1, \mu_a)}.\] 
Similarly, for each $a\in [K]\setminus{\{1\}}$, we require 
\[N_a^o > \frac{\log\frac{1}{2.4 \delta}}{\KL(\mu_a, \mu_1)}.\] These observations are easy to see from Theorem~\ref{th:optsol.char}. We give supporting arguments towards the end of Appendix~\ref{app:lb.prop}.

\noindent {\bf Optimal solution to $\mathbf{P1}$:}
 Theorem~\ref{th:optsol.char} below characterizes the optimal solution to   the o-o lower bound problem. In particular, it establishes  the uniqueness of the solution. This extends the uniqueness of solution of the equivalent online problem shown in \cite{garivier2016optimal}. As we discuss after the theorem, its proof is much simpler, and brings out the simplicity of the problem. Recall, we assume that arm 1 is the unique best arm in $\mu$.
\begin{theorem}\label{th:optsol.char}
 If ${\bf \tilde{N}}$ is an
optimal solution  to $\mathbf{P1}$, then for arms $b \in A_1:=\{a \ne 1: \tilde{N}_a>0\}$, the index constraints are tight, i.e., $S_{1,b}({\bf \tilde{N}}) = \log\frac{1}{2.4\delta}$. For each arm $a$, let $\tilde{x}_{1,a}$ denote the corresponding infimizer in (\ref{eq:unique_inf}), let $A_2 :=  \lbrace a \ne 1: ~~\tilde{N}_a = 0,
~S_{1,a}({\bf \tilde{N}}) = \log \frac{1}{2.4 \delta} \rbrace$, and $A := A_1 \cup A_2$. Then, 
\begin{equation} \label{eqn:optsol_1}
\sum_{a \in A_1}
\frac{\KL(\mu_{1},\tilde{x}_{1,a})}{\KL(\mu_a,\tilde{x}_{1,a})}
\leq 1.
\end{equation}
If $\tilde{N}_{1}>0$, then 
\begin{equation} \label{eqn:optsol_2}
\sum_{a \in A}
\frac{\KL(\mu_{1},\tilde{x}_{1,a})}{\KL(\mu_a,\tilde{x}_{1,a})}
\geq 1.
\end{equation}
Furthermore, these conditions uniquely identify the optimal solution.  
\end{theorem}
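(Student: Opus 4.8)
The plan is to treat $\mathbf{P1}$ as a convex program and read off the characterization from its KKT conditions. First I would record that each $S_{1,a}$ is a concave function of $\mathbf N$ — being an infimum over $x$ of functions affine in $\mathbf N$ — and depends only on the pair $(N_1,N_a)$; hence, writing $c:=\log\frac{1}{2.4\delta}$, the feasible set $\{\mathbf N\succeq 0: S_{1,a}(\mathbf N)\ge c\ \forall a\ne 1\}$ is convex and the objective $\sum_a N_a$ is linear. Since the infimum defining $S_{1,a}$ is attained at the unique point $x_{1,a}$ of \eqref{eq:unique_inf}, Danskin's (envelope) theorem gives $\partial S_{1,a}/\partial N_1=\KL(\mu_1,x_{1,a})$ and $\partial S_{1,a}/\partial N_a=\KL(\mu_a,x_{1,a})$, the $\partial x$ contribution vanishing by first-order optimality of $x_{1,a}$. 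I would also note that at any feasible point both effective counts $\mathbb E[N^o_1]+N_1$ and $\mathbb E[N^o_a]+N_a$ are strictly positive (else $S_{1,a}=0<c$), so $x_{1,a}$ lies strictly between $\mu_a$ and $\mu_1$ and both KL terms are strictly positive, making the ratios in \eqref{eqn:optsol_1}--\eqref{eqn:optsol_2} well defined.

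Next I would write the KKT system with multipliers $\lambda_a\ge 0$ for $S_{1,a}\ge c$ and $\eta_a\ge 0$ for $N_a\ge 0$; Slater's condition holds (scaling $\mathbf N$ up sends every $S_{1,a}\to\infty$ by homogeneity), so KKT is necessary and sufficient. Stationarity in $N_b$ for $b\ne 1$ reads $1=\lambda_b\KL(\mu_b,x_{1,b})+\eta_b$, and in $N_1$ reads $1=\sum_{a\ne 1}\lambda_a\KL(\mu_1,x_{1,a})+\eta_1$. For $b\in A_1$ we have $\eta_b=0$, so $\lambda_b=1/\KL(\mu_b,x_{1,b})>0$, and complementary slackness then forces $S_{1,b}=c$, which is the tightness claim. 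Substituting these $\lambda_b$ into the $N_1$-equation and using $\eta_1\ge 0$ together with nonnegativity of the discarded terms $a\notin A_1$ yields \eqref{eqn:optsol_1}. When $\tilde N_1>0$ we have $\eta_1=0$, so the $N_1$-equation is an exact equality; since $\lambda_a=0$ for slack constraints $a\notin A$ and $\lambda_a\le 1/\KL(\mu_a,x_{1,a})$ for $a\in A_2$ (from its own stationarity with $\eta_a\ge 0$), bounding each term from above gives \eqref{eqn:optsol_2}.

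The hard part is uniqueness, which does \emph{not} follow from convexity alone: a linear objective over a convex set can have a flat optimal face, and indeed $S_{1,a}$ is only concave, not strictly so, being positively homogeneous of degree one in the effective counts. My plan is to exploit the decoupled structure. Once $N_1$ is fixed, constraint $a$ involves only $N_a$ and $S_{1,a}$ is strictly increasing in $N_a$, so the minimal feasible value $N_a^{\min}(N_1)$ is uniquely determined, reducing $\mathbf{P1}$ to minimizing $F(N_1):=N_1+\sum_{a\ne 1}N_a^{\min}(N_1)$ over the feasible range of $N_1\ge 0$. I would then show each $N_a^{\min}$ is convex and, on the range where it is strictly positive, strictly convex. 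The engine is the profile $h(r):=\inf_x\{\KL(\mu_1,x)+r\,\KL(\mu_a,x)\}$: by the envelope theorem $h'(r)=\KL(\mu_a,x^\ast(r))$ is strictly decreasing, since the minimizer $x^\ast(r)$ moves strictly toward $\mu_a$ as $r$ grows, so $h$ is strictly concave; transporting this through the level relation $S_{1,a}=c$ shows the boundary curve $N_a^{\min}(N_1)$ is strictly convex wherever active. Hence $F$ is strictly convex on any interval containing an active arm and strictly increasing elsewhere, so it admits a unique minimizer $N_1^\ast$, which then pins down every $N_a$.

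Finally, because the KKT relations derived above are necessary and sufficient for optimality of this convex program, they hold at exactly this single point, so the stated conditions single out the optimal solution; the decoupling also explains why the argument is simpler than the purely-online uniqueness proof of \cite{garivier2016optimal}. I expect the strict-convexity step — extracting genuine strict convexity of $F$ from the merely concave, homogeneous $S_{1,a}$ via the one-dimensional profile $h$ — to be the main obstacle, with the remaining edge cases (an empty $A_1$, or the optimum lying on the boundary $N_1=0$) handled directly since there $F$ is strictly increasing and its minimizer is again unique.
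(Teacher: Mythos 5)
Your KKT derivation of the tightness claim and of \eqref{eqn:optsol_1}--\eqref{eqn:optsol_2} is correct and is essentially the paper's own argument: same Lagrangian, same stationarity equations in $N_1$ and $N_a$, same use of complementary slackness and of the sign constraints on the multipliers to pass from the exact identity $\sum_a \lambda_a \KL(\mu_1,\tilde x_{1,a})+\gamma_1=1$ to the two inequalities. Where you genuinely diverge is the uniqueness step. The paper argues by contradiction: it takes two optimal solutions, orders them by their first coordinate ($\tilde N_1=\hat N_1$, $\tilde N_1>\hat N_1>0$, $\tilde N_1>\hat N_1=0$), and in each case shows the two sum-ratio conditions cannot simultaneously hold, using the fact that $\tilde x_{1,a}>\hat x_{1,a}$ makes each ratio $\KL(\mu_1,x)/\KL(\mu_a,x)$ strictly smaller. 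You instead reduce to the one-dimensional problem $F(N_1)=N_1+\sum_{a\ne 1}\max\{0,N_a(N_1)\}$ (which is exactly the paper's $\mathcal O_2$ from the computational section) and upgrade the paper's convexity statement (Lemma~\ref{lem:convex.alloc}) to strict convexity on any interval where some arm is active, while $F$ is strictly increasing where none is; a convex function with no flat piece has a unique minimizer, and the minimal-$N_a$ reduction then pins down the rest. The two routes run on the same engine --- the slope of the level curve is $-\KL(\mu_1,x_{1,a})/\KL(\mu_a,x_{1,a})$ and $x_{1,a}$ moves strictly monotonically along it --- but your packaging buys a cleaner, case-free uniqueness proof and directly certifies that the bisection search of Section~\ref{sec:comp_complexity} converges to \emph{the} optimum, whereas the paper's case analysis works purely from the stated primal conditions and so more directly establishes that those conditions (rather than the full KKT system) admit at most one solution. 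One small caveat on your last step: like the paper, you gesture at rather than prove that the three stated conditions are \emph{sufficient} (i.e., that a feasible point satisfying them admits feasible dual variables); if you want that direction airtight, you should exhibit the multipliers $\lambda_a,\gamma_a$ explicitly from the conditions, which is routine but worth writing out.
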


The formal proof of Theorem~\ref{th:optsol.char} relies on Lagrangian duality and is given in Appendix \ref{app:lb.prop}. The key ideas are seen more simply in the  online setting where each $\mathbb {E}[N_a^o]=0$. We outline these. Let us replace RHS  $\log\frac{1}{2.4 \delta}$ in constraints in  ${\bf P1}$ by 1 since the solution to ${\bf P1}$ simply scales with it. We now argue that the online solution is unique, strictly positive ${\bf N^*}$, such that $A_1=[K]\setminus{1}$ and (\ref{eqn:optsol_1}) is tight.
To see the necessity of these conditions, observe that 
we cannot have $N_1^*=0$ or $N_a^*=0$ as that implies 
$S_{1,a}({\bf N^{*}})=0$. Thus, $A_2 = \emptyset$. Further if $S_{1,a}({\bf N^{*}})>1$, the objective improves by reducing $N^*_a$ so $A_1=[K]\setminus{\{1\}}$. 
To see the tightness of (\ref{eqn:optsol_1}) 
first observe through a quick calculation that derivative of 
$S_{1,a}({\bf N})$ with respect to $N_1$ and $N_a$, respectively,  equals 
$\KL(\mu_{1},x_{1,a})$ and $\KL(\mu_{a},x_{1,a})$. 

Now, perturbing $N_1$ 
by a tiny $\epsilon$ and adjusting each $N_a$ by
$\frac{\KL(\mu_{1},x_{1,a})}{\KL(\mu_a,x_{1,a})} \epsilon$
maintains the value of $S_{1,a}({\bf N})$.
Thus, at optimal ${\bf N^*}$, necessity of  tightness of (\ref{eqn:optsol_1}) follows. (This argument also  justifies 
the necessity of (\ref{eqn:optsol_1}) and (\ref{eqn:optsol_2}) in the optimal solution
in the more general o-o setting).

The fact that these three criteria uniquely specify the optimal solution
is seen by observing that for any $N_1>0$ and sufficiently large,
there exists a
unique $N_a(N_1)>0$ (through implicit function theorem) such that  $S_{1,a}({\bf N})=1$. Further, 
$N_a(N_1)$ decreases with $N_1$ and the numerator  in 
$\frac{\KL(\mu_{1},x_{1,a})}{\KL(\mu_a,x_{1,a})}$ continuously decreases with $N_1$
while the denominator continuously increases with it. 
As $N_1 \rightarrow 0$, the numerator converges to
$\KL(\mu_1, \mu_a)$, while the denominator goes to zero. Similarly,
as $N_1 \rightarrow \infty$, the numerator converges to zero
 while the denominator goes to $\KL(\mu_a, \mu_1)$.
Thus, (\ref{eqn:optsol_1}) equals 1 for a unique $N_1$. 

The algorithm that we propose in Section \ref{sec:alg} is a version of the TaS algorithm pioneered by \cite{garivier2016optimal} in that it sequentially solves for the maximizers for the lower-bound optimization problem for the empirical mean vector and uses these to decide the arm to sample at each time. Properties of the solution to the lower bound problems described above are crucially used to solve the optimization problems efficiently through bisection search in the TaS algorithm.

\section{The algorithm}\label{sec:alg}
An algorithm for the fixed-confidence BAI problem has three main components: a) \emph{sampling rule} is a specification of the arm to sample at each step; b) \emph{stopping rule} decides if the algorithm should stop generating more samples; and c) \emph{recommendation rule} outputs an estimate for the arm with the maximum mean at the stopping time. We now introduce some notation that will be useful in specifying these different components of the algorithm. 
    
At each time $t$, let the empirical mean corresponding to $N^o_a+N_a(t)$ samples from arm $a$ be denoted by $\hat{\mu}_a(t)$. It equals $\sum\nolimits_{l=1}^{\tau_1 + t} \frac{X_l {\bf 1}\lrp{A_l = a}}{N^o_a + N_a(t)},$ where $X_l$ denotes the random reward at time $l$. Let $\hat{\mu}(t)= (\hat{\mu}_a(t): a \in [K])$. Let     $i^*(t) \in \argmax_a ~ \hat{\mu}_a(t)$ be the empirical best arm. Here, ties (if any) are broken arbitrarily. Let $\mathbf{U}_K = \left[\frac{1}{K} \ldots \frac{1}{K} \right]$. We first describe various components of Algorithm 1: 
    
\noindent{\bf Stopping rule: } We use the generalized likelihood ratio test (GLRT) to decide when to stop. Recall that for $p\in\Re$ and $q\in\Re$, $\KL(p,q)$ denotes the $\KL$ divergence between unique probability distributions in $\cal S$ with means $p$ and $q$. For $t\in\mathbb{N}$, define threshold $\beta(t,\delta)$ as   
    \begin{align}\label{eq:beta}
        \beta(t,\delta) = \log\frac{K-1}{\delta} &+ 6\log\lrp{\log \frac{t}{2} + 1 }+ 8 \log\lrp{1 + \log\frac{K-1}{\delta}}.
    \end{align}
    
For ${\bf N}(t) = \lrset{N_1(t), \dots, N_K(t)} $ and for $b\ne a$, define 
\[Z_{a, b}({\bf N}(t), \hat{\mu}(t)) := \inf\nolimits_{x}  \{(N^o_{a}+N_{a}(t))\KL(\hat{\mu}_{a}(t), {x})+(N^o_{b}+N_{b}(t))\KL(\hat{\mu}_{b}(t), {x}) \}.\]
Observe that the statistic $Z_{a,b}(\cdot)$ is similar to $S_{a,b}(\cdot)$ defined in (\ref{eq:Sab}), except that in $Z_{a,b}(\cdot)$ we use the observed number of offline samples from each arm, and the empirical estimates of the means of each arm ($\hat{\mu}(t)$), instead of their actual values. To simplify the notation, in the sequel, we often drop the dependence of $Z_{a,b}(\cdot)$ on $\hat{\mu}(t)$. The stopping rule corresponds to checking if $Z_{i^*(t),b}({\bf N}(t))$ is at least $\beta(\tau_1 + t,\delta)$ for each arm $b\ne i^*(t)$. The statistic $Z_{i^*(t),b}(\cdot)$ is related to the generalized log-likelihood ratio (GLR) for testing if $i^*(t)$ is the arm with the maximum mean against all the alternative hypothesis. Let $\tau_\delta$ denote the time at which that algorithm stops and outputs $i^*(t)$. Then, \[ \tau_\delta = \min\{t\in\mathbb N: ~ Z_{i^*(t),b}({\bf N}(t)) \ge \beta(\tau_1 + t,\delta)~\forall b\ne i^*(t) \}. \]
    
\noindent\textbf{Sampling rule: }  Whenever $t \in \{r^2 K , ~ r \in \mathbb{N}\}$, our algorithm solves the following upper bound problem with plug-in empirical mean estimates $\hat{\mu}(t)$ (call it $\mathbf{P2}(\hat{\mu}(t))$):
\begin{align}\label{prob:P2}
    & \min\sum\nolimits_{a} N_a ~~ \text{ s.t. } ~~ Z_{i^{*}(t),b}({\bf N}, \hat{\mu}(t)) \ge \log\frac{1}{\delta} + \log\log\frac{1}{\delta}, ~\forall b\ne i^{*}(t), ~~ N_b  \ge 0, ~ \forall b \in [K].
\end{align}

The main differences between this and the lower bound problem $\mathbf{P1}$ are that the constraint is modified to $\log \frac{1}{\delta} + \log \log (\frac{1}{\delta})$ and usage of actual offline samples $N_a^{o}$ instead of their expectation. In Section~\ref{sec:comp_complexity}, we present an efficient algorithm to solve this optimization problem.

Let ${\bf {N}^*}$ be the optimal solution to $\mathbf{P2}$, and let $\hat{w}_a(t) := {{N}^*_a}/{\sum_{b} {N}^*_b}$. Our algorithm uses running average of $\hat{w}(t)$ to pull arms until the problem $\mathbf{P2}$ is solved again (at which point, the algorithm switches to the new proportions). To be precise, it maintains a running average of $\hat{w}(t)$ (call it $w(t)$), and pulls arms in such a way that the arm sampling proportions match ${w}(t)$. Whenever $t$ is such that $t=r^2K$ for some $r\in \mathbb{N}$, the algorithm goes into an exploration phase for $K$ rounds, where uniform proportions $\mathbf{U}_K$ are added to the running average of $\hat{w}(t)$. At the end of this exploration phase, $\hat{w}(t)$ is re-computed and $w(t)$ is set to $\hat{w}(t)$. See Algorithm~\ref{alg:tas_batch} for details.

 \textbf{Tracking online proportions:} The sampling strategy is based on the following rule: $A_t \leftarrow \argmax_{a} w_a(t)/N_a(t-1)$. This is known to ensure that the tracking error for the running-averages $w(t)$ remains within $\frac{K}{t}$ for any $t$ (see Appendix \ref{sec:tracking}).
 
\textbf{Computational cost: } In Section~\ref{sec:comp_complexity}, we discuss an algorithm to compute an $\epsilon$ approximation of $\hat{w}(t)$ that runs in $O(\log^2 \frac{1}{\epsilon})$ time. Since Algorithm~\ref{alg:tas_batch} only solves problem $\mathbf{P2}$ $\sqrt{T}$ many times till $T$ online trials, the amortized computational cost of our algorithm is $\Tilde{O}(K)$. We note that even very small values of epsilon do not blow up the computation in practice. We use $\epsilon = 10^{-6}$ in our experiments.
    
    \begin{algorithm2e}
   \caption{Batch Track-and-Stop (TaS)}
   \label{alg:tas_batch}
   	\DontPrintSemicolon 
	\KwIn{confidence level $\delta$, historic data $\left(\tau_1, \{N^o_a, \hat{\mu}_a^o\}_{a=1}^{K}\right)$}
    \SetKwProg{Fn}{}{:}{}
    { 
        \textbf{Initialization} Pull each arm once. Set  $t\leftarrow K, N_a(K) \leftarrow 1, \mathsf{count} \leftarrow0$, $w(K) \leftarrow \mathbf{U}_K$. Update $\hat{\mu}(K)$. \\
        \While{$\min\limits_{b\ne i^*(t)} Z_{i^*(t),b}({\bf N}(t)) \leq \beta(\tau_1 + t,\delta)$ }{
        \If{$\sqrt{\lfloor t/K\rfloor} \in \mathbb{N}$}
                {
                $\mathsf{count} \leftarrow \mathsf{count} + 1$ \\
                $w(t+1) \leftarrow \frac{t}{t+1} w\lrp{t}+ \frac{1}{t+1}  \mathbf{U}_K $ \\
                \If{$\mathsf{count} \equiv 0 \pmod K$}{
                 $\hat{w}(t+1) \leftarrow$ the estimated fractions from  solution of Problem $\mathbf{P2}$ in (\ref{prob:P2}).\\
                 Set $\mathsf{count} \leftarrow 0$. } 
                 
                }
                \Else{
                $\hat{w}(t+1) \leftarrow \hat{w}(t)$ \\
                 $w(t+1) \leftarrow \frac{t}{t+1} w\lrp{t}+ \frac{1}{t+1}  \hat{w}(t+1)$. 
                }
        Sample arm $A_{t+1} \leftarrow \argmax_{a\in[K]}  \frac{w_a(t+1)}{N_a(t)} $  \\
        Update $N_{A_t}(t+1)$ and $\hat{\mu}(t+1)$.  \\
        Set $t \leftarrow t + 1$.
        }
        \KwOut{$\argmax_{a\in[K]} \hat{\mu}_a(t)$}
    }
\end{algorithm2e}

\subsection{ Theoretical guarantees } 
The following theorem shows that the proposed algorithm is $\delta$-correct. We refer the reader to Appendix \ref{sec:delta_correctness} for its proof.
\begin{theorem}\label{thm:delta-correct}
 Over the randomness in offline samples, (unknown) offline policy, online policy, and samples, the algorithm proposed is $\delta$-correct.
\end{theorem}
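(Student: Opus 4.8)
The plan is to follow the classical GLRT correctness template of Garivier and Kaufmann, adapted to account for the offline samples. First I would isolate the error event. Since the algorithm outputs $i^*(\tau_\delta)$, a mistake occurs only on the event $\{\tau_\delta < \infty,\ i^*(\tau_\delta) = j\}$ for some $j \neq 1$. By the stopping rule, stopping at a time $t$ with empirical best arm $j$ forces $Z_{j,b}(\mathbf{N}(t)) \ge \beta(\tau_1+t,\delta)$ for every $b \neq j$, and in particular $Z_{j,1}(\mathbf{N}(t)) \ge \beta(\tau_1+t,\delta)$; moreover $i^*(t)=j$ entails $\hat\mu_j(t) \ge \hat\mu_1(t)$. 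Hence I can bound the error probability by
\[
\mathbb{P}\big(i^*(\tau_\delta)\neq 1\big) \le \sum_{j\neq 1} \mathbb{P}\big(\exists t:\ \hat\mu_j(t)\ge\hat\mu_1(t),\ Z_{j,1}(\mathbf{N}(t)) \ge \beta(\tau_1+t,\delta)\big).
\]

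The second step is to replace the data-dependent infimum $Z_{j,1}$ by a deviation of the empirical means from the true means. Since arm $1$ is the true best, $\mu_1 > \mu_j$, whereas on the event above $\hat\mu_j(t) \ge \hat\mu_1(t)$, so the empirical and true orderings of the pair disagree. For a single-parameter exponential family the infimizer $x_{j,1}$ is the count-weighted average of $\hat\mu_j(t)$ and $\hat\mu_1(t)$, hence lies in $[\hat\mu_1(t),\hat\mu_j(t)]$ (cf.\ \eqref{eq:unique_inf}), and a short convexity computation specific to exponential families shows that in this crossed-ordering regime
\[
Z_{j,1}(\mathbf{N}(t)) \le (N^o_1+N_1(t))\,\KL(\hat\mu_1(t),\mu_1) + (N^o_j+N_j(t))\,\KL(\hat\mu_j(t),\mu_j),
\]
i.e.\ the transportation cost of equalising the two empirical means is dominated by the cost of moving each empirical mean back to its own (correctly-ordered) true mean. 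This reduces each summand to the probability that the two-arm weighted-$\KL$ deviation of $(\hat\mu_1,\hat\mu_j)$ from $(\mu_1,\mu_j)$ ever exceeds $\beta(\tau_1+t,\delta)$.

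The third step is a time-uniform concentration bound. Here I would invoke a method-of-mixtures / mixture-martingale deviation inequality (in the style of Kaufmann and Koolen) for the two-dimensional statistic $(N^o_1+N_1(t))\KL(\hat\mu_1(t),\mu_1) + (N^o_j+N_j(t))\KL(\hat\mu_j(t),\mu_j)$. The precise form of $\beta$ in \eqref{eq:beta} --- the leading $\log\frac{K-1}{\delta}$ together with the $6\log(\log(t/2)+1)$ and $8\log(1+\log\frac{K-1}{\delta})$ correction terms --- is exactly calibrated so that each such two-dimensional (degrees-of-freedom two) event has probability at most $\delta/(K-1)$. Summing over the $K-1$ arms $j\neq 1$ then yields $\mathbb{P}(i^*(\tau_\delta)\neq 1)\le \delta$, which is the claim.

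I expect the main obstacle to be justifying that this concentration argument is unaffected by the offline data, even though the counts $N^o_a$ are random and the offline policy $\pi^0$ is unknown and possibly adaptive. The resolution rests on the paper's standing assumption that the offline samples are drawn from the same instance $\mu$: for each arm $a$ the offline and online rewards together form a single stream of i.i.d.\ draws from $\mu_a$, observed at an adapted, policy-determined set of times, so the exponential supermartingale underlying the mixture-martingale bound can be built on the combined stream. Concretely, I would start the martingale from the offline pulls, treat the offline counts $N^o_a$ (measurable given the offline history) as the initial number of observations for arm $a$, and verify that the optional-stopping / mixture step goes through verbatim for the pooled estimator $\hat\mu_a(t)$. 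Carefully setting up this combined filtration --- and confirming that the unknown offline policy does not break the per-arm i.i.d.\ structure of the rewards --- is the crux, while the remaining algebra is routine.
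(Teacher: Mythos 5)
Your proposal is correct and follows essentially the same route as the paper's proof: you contain the error event in $\{\exists t,\exists j\ne 1: Z_{j,1}(\mathbf{N}(t))\ge\beta(\tau_1+t,\delta)\}$, upper-bound $Z_{j,1}$ by $(N^o_1+N_1(t))\KL(\hat\mu_1(t),\mu_1)+(N^o_j+N_j(t))\KL(\hat\mu_j(t),\mu_j)$ (the paper gets this by noting $(x,y)=(\mu_j,\mu_1)$ is feasible in the two-variable form of the infimum from Lemma~\ref{lem:optima_SPEF}), and then invoke the Kaufmann--Koolen mixture-martingale deviation inequality, for which the threshold in~\eqref{eq:beta} is calibrated. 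Your closing discussion of pooling the offline and online reward streams into one filtration is a point the paper leaves implicit, but it is consistent with how the argument is meant to be read.
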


\noindent{\bf Stopping time analysis: } We now present the stopping time of the proposed algorithm. We are interested in solution to problem $\mathbf{P2}(\hat{\mu}(t))$ where $i^{*}(t)=1$ and when empirical estimates $\hat{\mu}(t)$ are used versus an identical version of the problem when true means $\mu$ are used, i.e. $\mathbf{P2}(\mu)$.
We would like to show that the solutions are close when $\hat{\mu}(t)$ gets close to $\mu$. For this, one needs solution space of the problem to be compact. The solution $N_a$ is potentially unbounded. So we consider the following normalized version of the problem which gives rise to a new max-min formulation compared to  the purely-online case.

Let us introduce some notation before stating the equivalent normalized version. Let $\hat{p} \in \Sigma_K$ denote the observed fraction of samples from each arm in the offline phase, i.e., $\hat{p}_a= {N_a^o}/{\sum_a N_a^o}$. For $z\in [0,1]$, $w\in\Sigma_K$, the probability simplex in $\Re^K$, $x \in \Re$, $j\in [K]$ and $j\ne 1$, let 
\begin{align*}
    g_j(w,z,x,\mu, \hat{p}) &:= (z \hat{p}_1 + (1-z)w_1)\KL(\mu_1, x) + (z \hat{p}_j + (1-z)w_j) \KL(\mu_j, x).
\end{align*}    
Define $ V(\mu,z,p):= \max\limits_{w\in\Sigma_K}\min\limits_{j\ne 1} \inf\limits_{x} g_j(w,z,x,\mu,p).$ Consider the following optimization problem: 
    \begin{align}\label{equiv3}
        \max\limits_{z\in [0,1]} ~~ z \quad \text{ s.t. }\quad V(\mu,z,p) \ge \frac{z}{\tau_1} \left( \log \frac{1}{\delta} + \log \log \frac{1}{\delta}\right).
    \end{align}
Call it $\mathbf{P3}$. Lemma \ref{lem:monotonicVmuz} in appendix shows that the l.h.s. in the constraint above is a non-increasing function of $z$. Since r.h.s. is monotonically increasing, there is a unique point at which the constraint holds as an equality. $z^*$ is this unique point of intersection if it lies in $[0,1]$, else $z^* = 1$.  

Since we will only be working with the empirically-observed offline fractions $\hat{p}$ in the algorithm, we drop the dependence of various functions on it in the sequel. In the following lemma, we show that for the empirical $\hat{p}\in\Sigma_K$, the problem $\mathbf{P2}(\mu)$ (that uses observed offline data) is equivalent to $\mathbf{P3}(\mu)$. This equivalence also holds when empirical means $\hat{\mu}(t)$ are used instead. 

\begin{lemma}\label{lem:equivalence}The optimal proportions obtained from the solution set of $\mathbf{P2}(\mu)$ are the same as those for $\mathbf{P3}(\mu)$. Moreover, from optimizers of $\bf P2(\mu)$, one recovers the optimizers of $\bf P3(\mu)$.
\end{lemma}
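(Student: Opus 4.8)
The plan is to exhibit an explicit change of variables that turns the minimization $\mathbf{P2}(\mu)$ over the unnormalized allocation ${\bf N}=(N_a)_a$ into the one-dimensional maximization $\mathbf{P3}(\mu)$ over the scalar $z$, and then to read off the optimal proportions directly from the inner maximizer defining $V$. Since arm $1$ is the unique best arm under $\mu$, the constraints of $\mathbf{P2}(\mu)$ read $Z_{1,b}({\bf N},\mu)\ge c_\delta$ for all $b\ne 1$, where $c_\delta := \log\frac{1}{\delta}+\log\log\frac{1}{\delta}$. Writing $T := \sum_a N_a$ for the total online budget and $w := {\bf N}/T \in \Sigma_K$ for the induced proportions (the quantities of interest), I set $z := \tau_1/(\tau_1+T)$, recalling $\tau_1=\sum_a N_a^o$ and $\hat p_a = N_a^o/\tau_1$. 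The map $T\mapsto z$ is a strictly decreasing bijection from $[0,\infty)$ onto $(0,1]$, so minimizing $T$ is the same as maximizing $z$; this already matches the objectives of the two problems.

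First I would establish the key algebraic identity. Since $N_a^o+N_a = \tau_1\hat p_a + T w_a = (\tau_1+T)\big(z\hat p_a+(1-z)w_a\big)$ for every arm $a$, and the coefficients $(N_a^o+N_a)$ enter $Z_{1,b}$ linearly inside the infimum over $x$, the common factor $(\tau_1+T)$ pulls out:
\begin{equation*}
Z_{1,b}({\bf N},\mu) = (\tau_1+T)\,\inf_x g_b(w,z,x,\mu,\hat p).
\end{equation*}
Using $\tau_1+T=\tau_1/z$, the constraint $Z_{1,b}({\bf N},\mu)\ge c_\delta$ is therefore equivalent to $\inf_x g_b(w,z,x,\mu,\hat p)\ge \frac{z}{\tau_1}c_\delta$ for each $b\ne 1$. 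Passing to feasibility, for a fixed $z$ (equivalently fixed $T$) there exists a $w\in\Sigma_K$ meeting all these constraints if and only if $\max_{w\in\Sigma_K}\min_{b\ne 1}\inf_x g_b(w,z,x,\mu,\hat p) = V(\mu,z,\hat p)\ge \frac{z}{\tau_1}c_\delta$, the maximum being attained because $\Sigma_K$ is compact and $w\mapsto \min_{b}\inf_x g_b$ is concave, hence continuous. This is exactly the constraint of $\mathbf{P3}$. Combined with the objective correspondence, $\mathbf{P2}(\mu)$ and $\mathbf{P3}(\mu)$ share the optimal value of $z$, namely the largest feasible one; by Lemma~\ref{lem:monotonicVmuz} the left-hand side $V(\mu,\cdot,\hat p)$ is non-increasing while the right-hand side is increasing, so this is the unique crossing point $z^*$ (or $z^*=1$).

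Finally I would match the optimizers. At $z=z^*$ the $\mathbf{P3}$ constraint binds, $V(\mu,z^*,\hat p)=\frac{z^*}{\tau_1}c_\delta$. Any optimal ${\bf N}^\star$ of $\mathbf{P2}(\mu)$ has $z=z^*$ and induces proportions $w^\star$ satisfying $\min_{b}\inf_x g_b(w^\star,z^*,x,\mu,\hat p)\ge \frac{z^*}{\tau_1}c_\delta = V(\mu,z^*,\hat p)$; since $V$ is the maximum of the very same quantity over $\Sigma_K$, this forces $w^\star$ to be a maximizer in the definition of $V(\mu,z^*,\hat p)$, i.e. an optimizer of $\mathbf{P3}$. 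Conversely, given such a maximizer $w^\star$ at $z^*$, setting $T^\star=\tau_1(1-z^*)/z^*$ and $N_a^\star=T^\star w_a^\star$ recovers an optimal point of $\mathbf{P2}(\mu)$. The identity is purely algebraic in $(\mu,\hat p)$, so the argument applies verbatim with $\hat\mu(t)$ and $i^*(t)$ in place of $\mu$ and arm $1$, yielding the equivalence for the empirical problem too.

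The main obstacle is this last sandwiching step — showing that feasibility at the binding $z^*$ pins the $\mathbf{P2}$ proportions to the inner maximizer of $V$ — together with the degenerate case $T^\star=0$ (i.e. $z^*=1$), where the offline data alone already meets every constraint and the online proportions are immaterial; both are resolved by the monotonicity of Lemma~\ref{lem:monotonicVmuz}.
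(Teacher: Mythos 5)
Your proposal is correct and follows essentially the same route as the paper: the same change of variables $z=\tau_1/(\tau_1+T)$, $w={\bf N}/T$, the same factoring of $(\tau_1+T)$ out of the constraints, the same observation that moving the supremum over $w$ into the constraint preserves feasibility, and the same recovery formulas $N_a^\star = w_a^\star\,\tau_1(1/z^\star-1)$. Your added remarks on the binding of the constraint at $z^\star$ and the degenerate case $z^\star=1$ are sound elaborations of steps the paper leaves implicit, not a different argument.
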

Suppose $(z^*, w^*)=(z^*(\mu), w^*(\mu,z^*(\mu)))$ solves ${\bf P3(\mu)}$, and let ${\bf N^*}$ be the optimal solution for $\bf P2(\mu)$. We show that $w^*_a$ corresponds to the optimal proportion of pulls from arm $a$ in the online samples, i.e., $w^*_a = N^*_a/\sum_b N^*_b$. Moreover, $z^* = \tau_1/(\tau_1 + \sum_b N^*_b)$, i.e., it corresponds to the optimal fraction of offline data. We refer the reader to Appendix \ref{sec:EquivForm} for a proof of the lemma. 

It also follows that the lower bound on $\mathbb{E}(\tau_\delta) + \tau_1$  (Theorem~\ref{thm:Lower_bound}) can equivalently be shown to equal $\frac{\tau_1}{z^*}$. Recall that $\tau_1$ is allowed to be a function of $\delta$. Using the normalized formulation in $\bf P3$, we prove the following non-asymptotic bound on the expected sample complexity of the algorithm.

\begin{theorem}\label{thm:stop_bound1}
    For $\delta > 0$, suppose $\mu\in\mathcal S^K$ is such that $ 1 \ge z^*(\mu) > \eta$, for some $\eta > 0$. Let $\epsilon'>0, \tilde{\epsilon}>0, \epsilon_1>0 $ be constants. If the given problem instance $(\mu,\hat{p},\tau_1)$ is such that:
    \begin{equation}\label{eq:cond_tau_1}\frac{\tau_1}{\log \frac{1}{\delta} + \log \log \frac{1}{\delta}} \notin (C_1^a(c_{\epsilon_1}),C_2^a(c_{\epsilon',\tilde{\epsilon}})) \quad \forall a\ne 1,\end{equation}
    then the algorithm satisfies:
    \[ \mathbb{E}[\tau_{\delta}] + \tau_1 \le   \frac{\tau_1}{z^*}\lrp{1 + \frac{2\alpha(\epsilon') + \tilde{\epsilon}}{\epsilon_1 C(\mu,\hat{p},\eta)} } + T(\epsilon') + T(\tilde{\epsilon}) + 1 + o\lrp{\log\frac{1}{\delta}}.\]
    Here, for an instance-dependent constant $L_\mu$, $c_{\epsilon_1} := \epsilon_1 L_{\mu} ({1}/{\eta}-1) $ and $c_{\epsilon',\tilde{\epsilon}}:= \lrp{2 \alpha(\epsilon') + \tilde{\epsilon}}/{\eta}$, where $\alpha(\epsilon')$ is a continuous function of $\epsilon'$ such that $\alpha(\epsilon'){\rightarrow} 0 $ as $\epsilon' \rightarrow 0$. Moreover, $C_1^a(\cdot)$ and $C_2^a(\cdot)$ are functions such that $C^a_1(c) \rightarrow C^a_1(0)$ and $C^a_2(c) \rightarrow C^a_2(0)$ as $c\rightarrow 0$, and $C^a_1(0) = C^a_2(0)$. $C(\mu,\hat{p},\eta)$ is a non-negative function of $\mu, \hat{p}$ and $\eta$ that is strictly positive for $\eta > 0$.
\end{theorem}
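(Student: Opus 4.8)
The plan is to follow the Track-and-Stop stopping-time template (as in \cite{garivier2016optimal}), but carried out inside the compact normalized formulation $\mathbf{P3}$ rather than $\mathbf{P2}$, since the equivalence in Lemma~\ref{lem:equivalence} lets me transfer statements about proportions between the two. The central observation I would exploit is that, writing $z_t := \tau_1/(\tau_1 + t)$ for the running offline fraction, the identity $\tfrac{N^o_a + N_a(t)}{\tau_1 + t} = z_t\hat p_a + (1-z_t)\tfrac{N_a(t)}{t}$ shows that $\tfrac{1}{\tau_1+t}\min_{b\ne 1} Z_{1,b}(\mathbf N(t))$ is, up to empirical-mean error, exactly $\min_{b}\inf_x g_b(\mathbf N(t)/t,\, z_t,\, x,\, \hat\mu(t),\, \hat p)$. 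Once the online proportions track the maximizer $w^*$, this equals $V(\mu, z_t, \hat p)$, so the stopping rule $\min_b Z_{1,b}(\mathbf N(t)) \ge \beta(\tau_1+t,\delta)$ fires essentially when $(\tau_1 + t)\,V(\mu, z_t, \hat p) \ge \log\frac{1}{\delta}$. By the defining equation of $z^*$ in $\mathbf{P3}$ this first happens around $\tau_1 + t \approx \tau_1/z^*$, which matches the lower bound $\tau_1/z^*$ recorded after Lemma~\ref{lem:equivalence}; the rest of the proof is the quantitative accounting of the slack.

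First I would fix a concentration radius and define, for each tolerance $\epsilon'$ and $\tilde\epsilon$, a deterministic transient length $T(\epsilon')$, $T(\tilde\epsilon)$ after which the empirical means are within the required tolerance of $\mu$ \emph{for every arm}. This is where the periodic forced exploration matters: because the sampling rule injects $\mathbf U_K$ at the times $t = r^2 K$, every arm (including those whose optimal online proportion $w^*_a$ is near zero) accrues $\Omega(\sqrt t)$ pulls, so a union bound over arms and over time gives simultaneous concentration with a failure probability summable in $t$. On the resulting good event I would invoke continuity of $\mathbf{P3}$: since its feasible set is the product of the simplex and $[0,1]$ and $V(\mu,z,p)$ is continuous and, by Lemma~\ref{lem:monotonicVmuz}, monotone in $z$, the optimizer $(z^*,w^*)$ and optimal value move by at most $\alpha(\epsilon')$ when the plugged-in means move by $\epsilon'$, with $\alpha(\epsilon')\to 0$. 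Combined with the tracking guarantee of Appendix~\ref{sec:tracking}, which keeps $N_a(t)/t$ within $O(K/t)$ of the running average $w_a(t)$, this yields $N_a(t)/t \to w^*_a$ and hence the approximate-$V$ growth of the statistic asserted above.

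Next I would convert the value-level slack into a $z$-level (equivalently, sample-count) slack. The function $C(\mu,\hat p,\eta)$ plays the role of a lower bound on the local rate at which $V(\mu,z,\hat p)$ separates from the threshold line $\frac{z}{\tau_1}(\log\frac1\delta + \log\log\frac1\delta)$ for $z > \eta$; strict positivity of $C$ for $\eta>0$ is exactly what lets me divide by it and turn an additive value perturbation of order $2\alpha(\epsilon') + \tilde\epsilon$ into the multiplicative factor $1 + \frac{2\alpha(\epsilon')+\tilde\epsilon}{\epsilon_1 C(\mu,\hat p,\eta)}$ on $\tau_1/z^*$. The parameter $\epsilon_1$ enters because the plug-in problem uses the relaxed threshold $\log\frac1\delta + \log\log\frac1\delta$, and the hypothesis $1\ge z^*(\mu)>\eta$ keeps us in the region where this linearization is valid. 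The exclusion condition \eqref{eq:cond_tau_1}, forbidding $\tau_1/(\log\frac1\delta+\log\log\frac1\delta)$ from the shrinking interval $(C_1^a(c_{\epsilon_1}), C_2^a(c_{\epsilon',\tilde\epsilon}))$ around the common critical value $C_1^a(0)=C_2^a(0)$, is used precisely here: it guarantees that for each $a\ne 1$ the active/inactive status of arm $a$ in the optimal allocation is stable under the $\epsilon$-perturbations, so the plug-in solution does not jump across a boundary of the set $A_1$ from Theorem~\ref{th:optsol.char}.

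Finally I would assemble the expectation via $\mathbb E[\tau_\delta] = \sum_{t\ge 0}\mathbb P(\tau_\delta > t)$, splitting each term by whether the good event holds. On the good event the deterministic analysis caps $\tau_\delta$ by $\frac{\tau_1}{z^*}\big(1 + \frac{2\alpha(\epsilon')+\tilde\epsilon}{\epsilon_1 C(\mu,\hat p,\eta)}\big) + T(\epsilon') + T(\tilde\epsilon) + 1$; off the good event the summable concentration-failure probabilities contribute only $O(1)$, while the gap between $\beta(\tau_1+t,\delta)$ and $\log\frac1\delta$, together with the $\Theta(\sqrt t)$ forced-exploration overhead, contributes the $o(\log\frac1\delta)$ remainder. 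I expect the main obstacle to be the second paragraph: establishing uniform concentration and continuity of the $\mathbf{P3}$ optimizer when some $w^*_a$ vanishes. Near such a boundary the map from means to optimal proportions is only continuous, not Lipschitz, and an arm with vanishing optimal online mass is exactly the arm for which tracking alone cannot guarantee enough samples, so the interplay between the $\sqrt t$ forced exploration (large enough to concentrate every arm, yet small enough to stay $o(\log\frac1\delta)$) and the stability condition \eqref{eq:cond_tau_1} is the delicate technical heart of the argument.
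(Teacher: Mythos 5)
Your overall architecture is the paper's: a good event on which the empirical means stay in an $\epsilon'$-neighbourhood of $\mu$ from time $T^{1/4}$ onward, the tracking guarantee (Lemma~\ref{lem:emp_prop_close}) to pin $N_a(t)/t$ near $w^*$, a deterministic surrogate stopping time $T_0(\delta)$ defined through the overshoot inequality for $\min_a G_a(\mu,\hat z_t,w^*)$, the decomposition $\mathbb{E}[\tau_\delta]\le T_{\epsilon'}+T_0(\delta)+1+\sum_T\mathbb{P}(\mathcal E_T^c)$ with the last sum $o(\log\frac1\delta)$ (Lemma~\ref{lem:compGoodSet}), and a derivative bound converting the value slack $2\alpha(\epsilon')+\tilde\epsilon$ into a multiplicative inflation of $\tau_1/z^*$ (Lemma~\ref{lem:T0delta.exp.complexity}). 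Your first, second and fourth paragraphs track the paper's proof closely.

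The gap is in your third paragraph, and it sits exactly at the step you yourself flag as the crux. First, $\epsilon_1$ does not come from the relaxed threshold $\log\frac1\delta+\log\log\frac1\delta$; that relaxation (and the $\log\log(\tau_1+t)$ terms of $\beta$) is absorbed entirely by $\tilde\epsilon$ and the transient $T(\tilde\epsilon)$. In the paper, $\epsilon_1$ is a threshold on the optimal weights: the quantity that must be bounded below is the derivative $\partial(G_a/z)/\partial(1/z)=w^*_1\KL(\mu_1,x_{1,a})+w^*_a\KL(\mu_a,x_{1,a})$, and Lemma~\ref{lem:largeweights} delivers the lower bound $\epsilon_1 C(\mu,\hat p,\eta)$ only when $\max(w^*_1,w^*_a)\ge\epsilon_1$. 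For an arm with $w^*_1+w^*_a<\epsilon_1$ this derivative is itself at most $\epsilon_1 L_\mu$, so there is no uniform positive separation rate to divide by, and your plan of ``dividing by $C$'' breaks precisely for the vanishing-weight arms. Second, the mechanism by which condition~\eqref{eq:cond_tau_1} rescues these arms is not stability of the active set $A_1$ of Theorem~\ref{th:optsol.char} under perturbation of the means. It is the dichotomy of Lemma~\ref{lem:smallweights}: if $w^*_1+w^*_a<\epsilon_1$, then either $\tau_1\ge C_1^a(c_{\epsilon_1})(\log\frac1\delta+\log\log\frac1\delta)$ fails --- which, by integrating the derivative upper bound $\epsilon_1 L_\mu$ from $z=1$ to $z=z^*$, contradicts feasibility of $z^*$ --- or $\tau_1> C_2^a(c_{\epsilon',\tilde\epsilon})(\log\frac1\delta+\log\log\frac1\delta)$, in which case arm $a$'s constraint is already over-satisfied with margin $(2\alpha(\epsilon')+\tilde\epsilon)/\eta$ at $z=1$ by the offline data alone, and positivity of the derivative in $1/z$ keeps it out of the binding minimum for every subsequent $z$. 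Only after these arms are removed from the minimum does the $\epsilon_1 C(\mu,\hat p,\eta)$ bound apply to what remains. As written, your proposal does not supply this case analysis, so the conversion from value slack to the stated multiplicative factor is not established.
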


\noindent{\bf Discussion. } Observe that the first term in the bound on expected sample complexity is the dominant term for small $\delta$ and the lower bound is $\frac{\tau_1}{z^*}$. We show in Appendix~\ref{app:samplecomplexity} that $C^a_1(0)=C^a_2(0)$ is precisely the scaling of $\tau_1$ where the offline data from arm $a$ is \textit{just} sufficient, \emph{i.e.,} the inequality in~\eqref{equiv3} is \textit{just} satisfied for arm $a$, when $\mu$ is exactly known. Let us now make the the statement of Theorem~\ref{thm:stop_bound1} more intelligible by considering different cases. 

Consider a sequence of problems $(\mu,\hat{p}, \tau^\delta_1)$ indexed by $\delta$ such that the liminf of the ratio on l.h.s. in~\eqref{eq:cond_tau_1} is smaller than $C^a_1(0)$ for all arms $a$. This is the setting when in the limit as $\delta\rightarrow 0$, the offline samples for each arm are not sufficient to identify the best arm, and the algorithm needs to generate some online samples from each arm. Then, for small enough $\delta$, we can find $\epsilon_1$ small enough, so that the condition in~\eqref{eq:cond_tau_1} is satisfied, and the bound in Theorem~\eqref{thm:stop_bound1} then gives: \begin{equation}\label{eq:bound_special1} \limsup\limits_{\delta \rightarrow 0} \frac{\mathbb{E}(\tau_\delta) + \tau^\delta_1}{ \log\frac{1}{\delta}} \le \limsup\limits_{\delta\rightarrow 0}\frac{\tau^\delta_1}{z^* \log\frac{1}{\delta}}\lrp{1+ \frac{2\alpha(\epsilon') + \tilde{\epsilon}}{\epsilon_1 C(\mu,\hat{p}, \eta)}}. \end{equation}
Since $\epsilon'$ and $\tilde{\epsilon}$ are arbitrary, we take them to $0$ to get a matching upper bound.

Next, consider the other case when the limsup of the ratio on l.h.s. in~\eqref{eq:cond_tau_1} is greater than $C^a_2(0)$. Here again for small enough $\delta$, we can find $\epsilon'$ and $\tilde{\epsilon}$ such that $\tau^\delta_1$ satisfies~\eqref{eq:cond_tau_1}, and Theorem~\ref{thm:stop_bound1} again gives a bound similar to~\eqref{eq:bound_special1}. Again taking Since $\epsilon'$ and $\tilde{\epsilon}$ to $0$, we get a matching upper bound. 

It is interesting to note that unlike in the purely-online setting, the amount of offline data available for each arm, influences the proportion of online samples that need to be generated from them.  While in the purely-online setting, these are guaranteed to be strictly positive, it is no longer the case in our setting. We carefully handle this nuance in our analysis.

\noindent{\bf Gain achieved through offline data } The benefit of availability of the offline data is best seen through a reduction in the lower bound on online samples generated for o-o problem compared to the pure online problem. Recall from~\cite{garivier2016optimal} that in the online setting, it equals $\frac{1}{V(\mu,0)} \log\frac{1}{2.4 \delta}$. In the o-o setting this equals $\frac{1-z^*}{V(\mu,z^*)}\log\frac{1}{2.4 \delta}$ for $z^*>0$. Further, it is easy to see from definition of $V$ that $\frac{1-z}{V(\mu,z)}$ decreases with $z$. Thus offline samples always reduce the lower bound, and the benefit is essentially $\frac{(1-z^*)V(\mu,0)}{V(\mu,z^*)}$.

\subsection{Computational complexity}\label{sec:comp_complexity}
In this section, we present a $O(\log^2\frac{1}{\epsilon})$ algorithm for computing an $\epsilon$-approximate solution for $\mathbf{P2}$ that uses a nested bisection search. To simplify the presentation, we assume the empirical-best arm $i^*(t) = 1$. Let $\lrset{N^*_a: ~ a\in [K]}$ denote the optimal solution for $\mathbf{P2}(\hat{\mu}(t))$. Recall that these represent the number of samples required to be generated from arm $a$ in addition to  $N^o_a$ offline samples. Theorem \ref{th:optsol.char} characterizes the unique solution to problem $\mathbf{P2}(\hat{\mu}(t))$. Since the solution exists, each of the optimal allocations $N^*_a$ are bounded. Denote the maximum of these bounds by $B$. 

Next, for any given $N_{1}$, let ${N}_a(N_{1})$ denote the solution to the following equation: $Z_{1,a}(N_1, \{{N}_b(N_1)\}_{b\ne 1}) = \log\frac{1}{\delta} + \log\log\frac{1}{\delta}, $ with  $Z_{1,a}(N_1, \{{N}_b(N_1)\}_{b\ne 1})$ defined below Equation~\eqref{eq:beta}.  Notice that ${N}_a(N_1)$ can possibly be negative. If the above equation does not admit a solution in $\Re$, then $N_1$ is infeasible for $\mathbf{P2}(\hat{\mu}(t))$ and we need to increase it (detailed later). 

With these notation, $\mathbf{P2}$ is equivalent to solving the following for $n_1$. Call it $ \mathcal  O_2$: $\min ~ n_1 + \sum\nolimits_{b\ne 1} \max\lrset{0,{N}_b(n_1)} \text{ such that } {n_1 \in [0,B]}.$
Let $n^*_1$ be its optimal solution. Lemma~\ref{lem:convex.alloc} shows that the objective in $\mathcal O_2$ is convex in $n_1$, which we compute $n_1^*$ using a bisection search (Algorithm~\ref{alg:bisection} in the appendix). Then, $N^*_1 = n^*_1$ and $N^*_a = \max\{0, N_a(n^*_1)\}$ (Lemma \ref{lem:optall}).

Bisection search for $n_1$ converges to an $\epsilon$-approximate $n^*_1$ in $O(\log\frac{1}{\epsilon})$ iterations (see, \cite[Section 2.2]{bubeck2015convex}). Each iteration of this bisection search involves computing $N_a(n_1)$ for $a\ne 1$. Since for fixed $N_1=n_1$, $Z_{1,a}({\bf N})$ is a monotonic function of $N_a$ (Lemma~\ref{lem:convex.alloc}), we again rely on bisection search to compute $N_a(n_1)$. However, for the bisection search to succeed, we first require existence of $N_a(n_1)$. This can be checked by computing the maximum value of the index ($Z_{1,a}$) for arm $a$ for a fixed $n_1$. If this maximum value is smaller than  $\log\frac{1}{\delta} + \log\log\frac{1}{\delta}$, we increase $n_1$. We perform bisection search to compute $N_a(n_1)$ only if the maximum value of the index for each arm is larger than $\log\frac{1}{\delta} + \log\log\frac{1}{\delta}$. We discuss existence of $N_a(n_1)$ and its computation in Appendix \ref{sec:Nan}.

\section{Experiments and discussions}\label{sec:exps}
We first present empirical evidence showing the importance of online-offline paradigm over purely online, purely offline learning paradigms in Figure \ref{fig:online_offline} (and in Appendix~\ref{sec:additional_exps}). We generated offline data from two policies: (a) a policy which uniformly samples all the arms, and (b) a policy that uniformly samples all the arms except the best arm. The latter policy doesn't pull the best arm at all. Figure~\ref{fig:online_offline} presents the expected stopping time of our algorithm, as we vary the amount of offline data. Here are two important takeaways from these results: (a) the number of online samples decreases as the amount of offline data increases, (b) even if the offline data is of poor quality (\emph{e.g.,} data generated by the second policy above), it  helps reduce the number of online rounds. Note that learning algorithms that solely rely on offline data wouldn't have worked in such cases. 

We considered the important problem of
merging offline and online data to improve learning outcomes. Specifically, we focused on the best arm identification problem in the fixed confidence setting, where the learner has access to offline samples 
from the same bandit instance as online samples. We developed a lower bound
on the sample complexity, 
and developed a TaS algorithm that matched the lower bounds.  A direction for future work would be to consider algorithms outside TaS akin to those in the purely-online setting like $\beta$-top two. Extending these techniques to o-o setting is non-trivial. This is because the optimality conditions for our problem (Theorem~\ref{th:optsol.char}) differ from the usual online problem.

\begin{figure}[tb]
\begin{center}
\includegraphics[scale=0.25,  trim={1cm 0 0 0cm},clip]{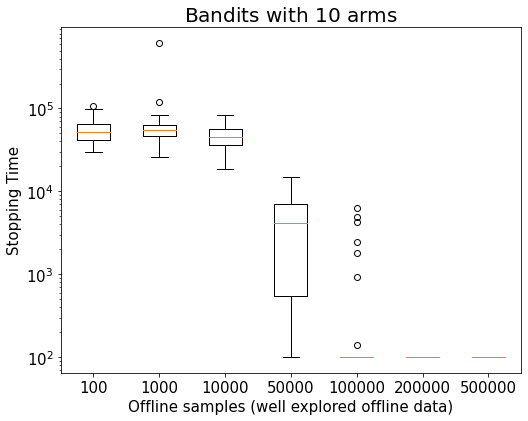} 
    \includegraphics[scale=0.25,  trim={1cm 0 0 0cm},clip]{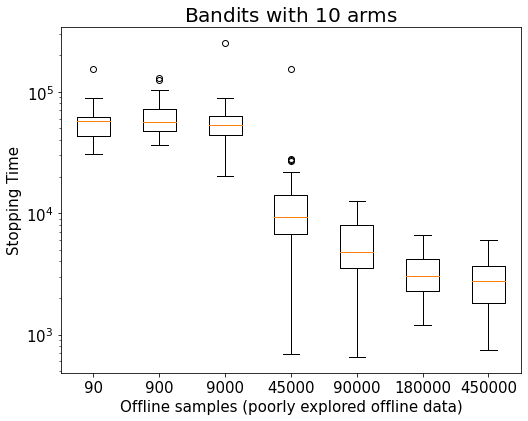}
      \caption{Stopping time of Algorithm~\ref{alg:tas_batch} with varying number of offline samples. The offline samples are collected using two policies that are described in Section~\ref{sec:exps}. The rewards of the arms follows Gaussian distribution with variance $1$. We chose $\delta = 10^{-3}$ for these experiments. Results are averaged over $50$ trials.}
    \label{fig:online_offline}
\end{center}
\end{figure}

\bibliography{ref}

\begin{thebibliography}{55}
\providecommand{\natexlab}[1]{#1}
\providecommand{\url}[1]{\texttt{#1}}
\expandafter\ifx\csname urlstyle\endcsname\relax
  \providecommand{\doi}[1]{doi: #1}\else
  \providecommand{\doi}{doi: \begingroup \urlstyle{rm}\Url}\fi

\bibitem[Agrawal(1995)]{agrawal1995sample}
Rajeev Agrawal.
\newblock Sample mean based index policies by o (log n) regret for the
  multi-armed bandit problem.
\newblock \emph{Advances in Applied Probability}, 27\penalty0 (4):\penalty0
  1054--1078, 1995.

\bibitem[Agrawal and Goyal(2012)]{agrawal2012analysis}
Shipra Agrawal and Navin Goyal.
\newblock Analysis of thompson sampling for the multi-armed bandit problem.
\newblock In \emph{Conference on learning theory}, pages 39--1. JMLR Workshop
  and Conference Proceedings, 2012.

\bibitem[Agrawal(2022)]{thesis}
Shubhada Agrawal.
\newblock \emph{Bandits with Heavy Tails: Algorithms, Analysis and Optimality}.
\newblock PhD thesis, Tata Institute of Fundamental Research, Mumbai, 2022.

\bibitem[Agrawal et~al.(2020)Agrawal, Juneja, and Glynn]{pmlr-v117-agrawal20a}
Shubhada Agrawal, Sandeep Juneja, and Peter Glynn.
\newblock Optimal $\delta$-correct best-arm selection for heavy-tailed
  distributions.
\newblock In \emph{Proceedings of the 31st International Conference on
  Algorithmic Learning Theory}, volume 117 of \emph{Proceedings of Machine
  Learning Research}, pages 61--110. PMLR, 08 Feb--11 Feb 2020.

\bibitem[Agrawal et~al.(2021)Agrawal, Juneja, and Koolen]{agrawal2021regret}
Shubhada Agrawal, Sandeep~K Juneja, and Wouter~M Koolen.
\newblock Regret minimization in heavy-tailed bandits.
\newblock In \emph{Conference on Learning Theory}, pages 26--62. PMLR, 2021.

\bibitem[Audibert et~al.(2010)Audibert, Bubeck, and Munos]{audibert2010best}
Jean-Yves Audibert, S{\'e}bastien Bubeck, and R{\'e}mi Munos.
\newblock Best arm identification in multi-armed bandits.
\newblock In \emph{COLT}, pages 41--53, 2010.

\bibitem[Auer et~al.(2002)Auer, Cesa-Bianchi, and Fischer]{auer2002finite}
Peter Auer, Nicolo Cesa-Bianchi, and Paul Fischer.
\newblock Finite-time analysis of the multiarmed bandit problem.
\newblock \emph{Machine learning}, 47:\penalty0 235--256, 2002.

\bibitem[Banerjee et~al.(2022)Banerjee, Sinclair, Tambe, Xu, and
  Yu]{banerjee2022artificial}
Siddhartha Banerjee, Sean~R Sinclair, Milind Tambe, Lily Xu, and Christina~Lee
  Yu.
\newblock Artificial replay: a meta-algorithm for harnessing historical data in
  bandits.
\newblock \emph{arXiv preprint arXiv:2210.00025}, 2022.

\bibitem[Barrier et~al.(2022)Barrier, Garivier, and Stoltz]{barrier2022best}
Antoine Barrier, Aur{\'e}lien Garivier, and Gilles Stoltz.
\newblock On best-arm identification with a fixed budget in non-parametric
  multi-armed bandits.
\newblock \emph{arXiv preprint arXiv:2210.00895}, 2022.

\bibitem[Berge(1997)]{berge1997topological}
Claude Berge.
\newblock \emph{Topological Spaces: including a treatment of multi-valued
  functions, vector spaces, and convexity}.
\newblock Courier Corporation, 1997.

\bibitem[Bouneffouf et~al.(2019)Bouneffouf, Parthasarathy, Samulowitz, and
  Wistub]{bouneffouf2019optimal}
Djallel Bouneffouf, Srinivasan Parthasarathy, Horst Samulowitz, and Martin
  Wistub.
\newblock Optimal exploitation of clustering and history information in
  multi-armed bandit.
\newblock \emph{arXiv preprint arXiv:1906.03979}, 2019.

\bibitem[Bu et~al.(2020)Bu, Simchi-Levi, and Xu]{bu2020online}
Jinzhi Bu, David Simchi-Levi, and Yunzong Xu.
\newblock Online pricing with offline data: Phase transition and inverse square
  law.
\newblock In \emph{International Conference on Machine Learning}, pages
  1202--1210. PMLR, 2020.

\bibitem[Bubeck et~al.(2009)Bubeck, Munos, and Stoltz]{bubeck2009pure}
S{\'e}bastien Bubeck, R{\'e}mi Munos, and Gilles Stoltz.
\newblock Pure exploration in multi-armed bandits problems.
\newblock In \emph{Algorithmic Learning Theory: 20th International Conference,
  ALT 2009, Porto, Portugal, October 3-5, 2009. Proceedings 20}, pages 23--37.
  Springer, 2009.

\bibitem[Bubeck et~al.(2012)Bubeck, Cesa-Bianchi, et~al.]{bubeck2012regret}
S{\'e}bastien Bubeck, Nicolo Cesa-Bianchi, et~al.
\newblock Regret analysis of stochastic and nonstochastic multi-armed bandit
  problems.
\newblock \emph{Foundations and Trends{\textregistered} in Machine Learning},
  5\penalty0 (1):\penalty0 1--122, 2012.

\bibitem[Bubeck et~al.(2015)]{bubeck2015convex}
S{\'e}bastien Bubeck et~al.
\newblock Convex optimization: Algorithms and complexity.
\newblock \emph{Foundations and Trends{\textregistered} in Machine Learning},
  8\penalty0 (3-4):\penalty0 231--357, 2015.

\bibitem[Burnetas and Katehakis(1996)]{burnetas1996optimal}
Apostolos~N Burnetas and Michael~N Katehakis.
\newblock Optimal adaptive policies for sequential allocation problems.
\newblock \emph{Advances in Applied Mathematics}, 17\penalty0 (2):\penalty0
  122--142, 1996.

\bibitem[Capp{\'e} et~al.(2013)Capp{\'e}, Garivier, Maillard, Munos, and
  Stoltz]{cappe2013kullback}
Olivier Capp{\'e}, Aur{\'e}lien Garivier, Odalric-Ambrym Maillard, R{\'e}mi
  Munos, and Gilles Stoltz.
\newblock Kullback-leibler upper confidence bounds for optimal sequential
  allocation.
\newblock \emph{The Annals of Statistics}, pages 1516--1541, 2013.

\bibitem[Chen et~al.(2017{\natexlab{a}})Chen, Li, and Qiao]{chen17b}
Lijie Chen, Jian Li, and Mingda Qiao.
\newblock Towards instance optimal bounds for best arm identification.
\newblock In Satyen Kale and Ohad Shamir, editors, \emph{Proceedings of the
  2017 Conference on Learning Theory}, volume~65 of \emph{Proceedings of
  Machine Learning Research}, pages 535--592. PMLR, 07--10 Jul
  2017{\natexlab{a}}.
\newblock URL \url{https://proceedings.mlr.press/v65/chen17b.html}.

\bibitem[Chen et~al.(2017{\natexlab{b}})Chen, Li, and Qiao]{chen2017nearly}
Lijie Chen, Jian Li, and Mingda Qiao.
\newblock Nearly instance optimal sample complexity bounds for top-k arm
  selection.
\newblock In \emph{Artificial Intelligence and Statistics}, pages 101--110.
  PMLR, 2017{\natexlab{b}}.

\bibitem[Combes and Proutiere(2014)]{combes2014unimodal}
Richard Combes and Alexandre Proutiere.
\newblock Unimodal bandits without smoothness.
\newblock \emph{arXiv preprint arXiv:1406.7447}, 2014.

\bibitem[Cover and Thomas(2006)]{cover2006elements}
Thomas~M Cover and Joy~A Thomas.
\newblock Elements of information theory 2nd edition (wiley series in
  telecommunications and signal processing).
\newblock \emph{Acessado em}, 2006.

\bibitem[Degenne et~al.(2019)Degenne, Koolen, and M{\'e}nard]{degenne2019non}
R{\'e}my Degenne, Wouter~M Koolen, and Pierre M{\'e}nard.
\newblock Non-asymptotic pure exploration by solving games.
\newblock \emph{Advances in Neural Information Processing Systems}, 32, 2019.

\bibitem[Even-Dar et~al.(2006)Even-Dar, Mannor, Mansour, and
  Mahadevan]{even2006action}
Eyal Even-Dar, Shie Mannor, Yishay Mansour, and Sridhar Mahadevan.
\newblock Action elimination and stopping conditions for the multi-armed bandit
  and reinforcement learning problems.
\newblock \emph{Journal of machine learning research}, 7\penalty0 (6), 2006.

\bibitem[Gabillon et~al.(2012)Gabillon, Ghavamzadeh, and
  Lazaric]{gabillon2012best}
Victor Gabillon, Mohammad Ghavamzadeh, and Alessandro Lazaric.
\newblock Best arm identification: A unified approach to fixed budget and fixed
  confidence.
\newblock \emph{Advances in Neural Information Processing Systems}, 25, 2012.

\bibitem[Garivier and Capp{\'e}(2011)]{garivier2011kl}
Aur{\'e}lien Garivier and Olivier Capp{\'e}.
\newblock The kl-ucb algorithm for bounded stochastic bandits and beyond.
\newblock In \emph{Proceedings of the 24th annual conference on learning
  theory}, pages 359--376. JMLR Workshop and Conference Proceedings, 2011.

\bibitem[Garivier and Kaufmann(2016)]{garivier2016optimal}
Aur{\'e}lien Garivier and Emilie Kaufmann.
\newblock Optimal best arm identification with fixed confidence.
\newblock In \emph{Conference on Learning Theory}, pages 998--1027. PMLR, 2016.

\bibitem[Garivier et~al.(2019)Garivier, M{\'e}nard, and
  Stoltz]{garivier2019explore}
Aur{\'e}lien Garivier, Pierre M{\'e}nard, and Gilles Stoltz.
\newblock Explore first, exploit next: The true shape of regret in bandit
  problems.
\newblock \emph{Mathematics of Operations Research}, 44\penalty0 (2):\penalty0
  377--399, 2019.

\bibitem[Honda and Takemura(2009)]{honda2009asymptotically}
Junya Honda and Akimichi Takemura.
\newblock An asymptotically optimal policy for finite support models in the
  multiarmed bandit problem.
\newblock \emph{arXiv preprint arXiv:0905.2776}, 2009.

\bibitem[Honda and Takemura(2010)]{honda2010asymptotically}
Junya Honda and Akimichi Takemura.
\newblock An asymptotically optimal bandit algorithm for bounded support
  models.
\newblock In \emph{COLT}, pages 67--79. Citeseer, 2010.

\bibitem[Jamieson et~al.(2014)Jamieson, Malloy, Nowak, and Bubeck]{jamieson14}
Kevin Jamieson, Matthew Malloy, Robert Nowak, and Sébastien Bubeck.
\newblock lil' ucb : An optimal exploration algorithm for multi-armed bandits.
\newblock In Maria~Florina Balcan, Vitaly Feldman, and Csaba Szepesvári,
  editors, \emph{Proceedings of The 27th Conference on Learning Theory},
  volume~35 of \emph{Proceedings of Machine Learning Research}, pages 423--439,
  Barcelona, Spain, 13--15 Jun 2014. PMLR.
\newblock URL \url{https://proceedings.mlr.press/v35/jamieson14.html}.

\bibitem[Jourdan et~al.(2022)Jourdan, Degenne, Baudry, de~Heide, and
  Kaufmann]{jourdan2022top}
Marc Jourdan, R{\'e}my Degenne, Dorian Baudry, Rianne de~Heide, and Emilie
  Kaufmann.
\newblock Top two algorithms revisited.
\newblock \emph{arXiv preprint arXiv:2206.05979}, 2022.

\bibitem[Kalyanakrishnan et~al.(2012)Kalyanakrishnan, Tewari, Auer, and
  Stone]{kalyanakrishnan2012pac}
Shivaram Kalyanakrishnan, Ambuj Tewari, Peter Auer, and Peter Stone.
\newblock Pac subset selection in stochastic multi-armed bandits.
\newblock In \emph{ICML}, volume~12, pages 655--662, 2012.

\bibitem[Karnin et~al.(2013)Karnin, Koren, and Somekh]{karnin13}
Zohar Karnin, Tomer Koren, and Oren Somekh.
\newblock Almost optimal exploration in multi-armed bandits.
\newblock In Sanjoy Dasgupta and David McAllester, editors, \emph{Proceedings
  of the 30th International Conference on Machine Learning}, volume~28 of
  \emph{Proceedings of Machine Learning Research}, pages 1238--1246, Atlanta,
  Georgia, USA, 17--19 Jun 2013. PMLR.
\newblock URL \url{https://proceedings.mlr.press/v28/karnin13.html}.

\bibitem[Kaufmann and Kalyanakrishnan(2013)]{kaufmann2013information}
Emilie Kaufmann and Shivaram Kalyanakrishnan.
\newblock Information complexity in bandit subset selection.
\newblock In \emph{Conference on Learning Theory}, pages 228--251. PMLR, 2013.

\bibitem[Kaufmann and Koolen(2021)]{Kauffman_21}
Emilie Kaufmann and Wouter~M. Koolen.
\newblock Mixture martingales revisited with applications to sequential tests
  and confidence intervals.
\newblock \emph{Journal of Machine Learning Research}, 22\penalty0
  (246):\penalty0 1--44, 2021.
\newblock URL \url{http://jmlr.org/papers/v22/18-798.html}.

\bibitem[Kaufmann et~al.(2012)Kaufmann, Korda, and Munos]{kaufmann2012thompson}
Emilie Kaufmann, Nathaniel Korda, and R{\'e}mi Munos.
\newblock Thompson sampling: An asymptotically optimal finite-time analysis.
\newblock In \emph{Algorithmic Learning Theory: 23rd International Conference,
  ALT 2012, Lyon, France, October 29-31, 2012. Proceedings 23}, pages 199--213.
  Springer, 2012.

\bibitem[Kaufmann et~al.(2016)Kaufmann, Capp{\'e}, and
  Garivier]{kaufmann2016complexity}
Emilie Kaufmann, Olivier Capp{\'e}, and Aur{\'e}lien Garivier.
\newblock On the complexity of best-arm identification in multi-armed bandit
  models.
\newblock \emph{The Journal of Machine Learning Research}, 17\penalty0
  (1):\penalty0 1--42, 2016.

\bibitem[Korda et~al.(2013)Korda, Kaufmann, and Munos]{korda2013thompson}
Nathaniel Korda, Emilie Kaufmann, and Remi Munos.
\newblock Thompson sampling for 1-dimensional exponential family bandits.
\newblock \emph{Advances in neural information processing systems}, 26, 2013.

\bibitem[Lai and Robbins(1985)]{lai1985asymptotically}
Tze~Leung Lai and Herbert Robbins.
\newblock Asymptotically efficient adaptive allocation rules.
\newblock \emph{Advances in applied mathematics}, 6\penalty0 (1):\penalty0
  4--22, 1985.

\bibitem[Lattimore and Szepesv{\'a}ri(2020)]{lattimore2020bandit}
Tor Lattimore and Csaba Szepesv{\'a}ri.
\newblock \emph{Bandit algorithms}.
\newblock Cambridge University Press, 2020.

\bibitem[Levine et~al.(2020)Levine, Kumar, Tucker, and Fu]{levine2020offline}
Sergey Levine, Aviral Kumar, George Tucker, and Justin Fu.
\newblock Offline reinforcement learning: Tutorial, review, and perspectives on
  open problems.
\newblock \emph{arXiv preprint arXiv:2005.01643}, 2020.

\bibitem[Li et~al.(2010)Li, Chu, Langford, and Schapire]{li2010contextual}
Lihong Li, Wei Chu, John Langford, and Robert~E Schapire.
\newblock A contextual-bandit approach to personalized news article
  recommendation.
\newblock In \emph{Proceedings of the 19th international conference on World
  wide web}, pages 661--670, 2010.

\bibitem[Liu et~al.(2020)Liu, Swaminathan, Agarwal, and
  Brunskill]{liu2020provably}
Yao Liu, Adith Swaminathan, Alekh Agarwal, and Emma Brunskill.
\newblock Provably good batch off-policy reinforcement learning without great
  exploration.
\newblock \emph{Advances in neural information processing systems},
  33:\penalty0 1264--1274, 2020.

\bibitem[Nguyen-Tang et~al.(2021)Nguyen-Tang, Gupta, Nguyen, and
  Venkatesh]{nguyen2021offline}
Thanh Nguyen-Tang, Sunil Gupta, A~Tuan Nguyen, and Svetha Venkatesh.
\newblock Offline neural contextual bandits: Pessimism, optimization and
  generalization.
\newblock \emph{arXiv preprint arXiv:2111.13807}, 2021.

\bibitem[Oetomo et~al.(2021)Oetomo, Perera, Borovica-Gajic, and
  Rubinstein]{oetomo2021cutting}
Bastian Oetomo, R~Malinga Perera, Renata Borovica-Gajic, and Benjamin~IP
  Rubinstein.
\newblock Cutting to the chase with warm-start contextual bandits.
\newblock In \emph{2021 IEEE International Conference on Data Mining (ICDM)},
  pages 459--468. IEEE, 2021.

\bibitem[Rashidinejad et~al.(2021)Rashidinejad, Zhu, Ma, Jiao, and
  Russell]{rashidinejad2021bridging}
Paria Rashidinejad, Banghua Zhu, Cong Ma, Jiantao Jiao, and Stuart Russell.
\newblock Bridging offline reinforcement learning and imitation learning: A
  tale of pessimism.
\newblock \emph{Advances in Neural Information Processing Systems},
  34:\penalty0 11702--11716, 2021.

\bibitem[Russo(2016)]{russo2016simple}
Daniel Russo.
\newblock Simple bayesian algorithms for best arm identification.
\newblock In \emph{Conference on Learning Theory}, pages 1417--1418. PMLR,
  2016.

\bibitem[Shivaswamy and Joachims(2012)]{shivaswamy2012multi}
Pannagadatta Shivaswamy and Thorsten Joachims.
\newblock Multi-armed bandit problems with history.
\newblock In \emph{Artificial Intelligence and Statistics}, pages 1046--1054.
  PMLR, 2012.

\bibitem[Silver et~al.(2017)Silver, Schrittwieser, Simonyan, Antonoglou, Huang,
  Guez, Hubert, Baker, Lai, Bolton, et~al.]{silver2017mastering}
David Silver, Julian Schrittwieser, Karen Simonyan, Ioannis Antonoglou, Aja
  Huang, Arthur Guez, Thomas Hubert, Lucas Baker, Matthew Lai, Adrian Bolton,
  et~al.
\newblock Mastering the game of go without human knowledge.
\newblock \emph{nature}, 550\penalty0 (7676):\penalty0 354--359, 2017.

\bibitem[Slivkins et~al.(2019)]{slivkins2019introduction}
Aleksandrs Slivkins et~al.
\newblock Introduction to multi-armed bandits.
\newblock \emph{Foundations and Trends{\textregistered} in Machine Learning},
  12\penalty0 (1-2):\penalty0 1--286, 2019.

\bibitem[Sundaram et~al.(1996)]{sundaram1996first}
Rangarajan~K Sundaram et~al.
\newblock \emph{A first course in optimization theory}.
\newblock Cambridge university press, 1996.

\bibitem[Thompson(1933)]{thompson1933likelihood}
William~R Thompson.
\newblock On the likelihood that one unknown probability exceeds another in
  view of the evidence of two samples.
\newblock \emph{Biometrika}, 25\penalty0 (3-4):\penalty0 285--294, 1933.

\bibitem[Wang et~al.(2021)Wang, Tzeng, and Proutiere]{wang2021fast}
Po-An Wang, Ruo-Chun Tzeng, and Alexandre Proutiere.
\newblock Fast pure exploration via frank-wolfe.
\newblock \emph{Advances in Neural Information Processing Systems},
  34:\penalty0 5810--5821, 2021.

\bibitem[Xiao et~al.(2021)Xiao, Wu, Mei, Dai, Lattimore, Li, Szepesvari, and
  Schuurmans]{xiao2021optimality}
Chenjun Xiao, Yifan Wu, Jincheng Mei, Bo~Dai, Tor Lattimore, Lihong Li, Csaba
  Szepesvari, and Dale Schuurmans.
\newblock On the optimality of batch policy optimization algorithms.
\newblock In \emph{International Conference on Machine Learning}, pages
  11362--11371. PMLR, 2021.

\bibitem[Ye et~al.(2020)Ye, Lin, Xie, and Lui]{ye2020combining}
Li~Ye, Yishi Lin, Hong Xie, and John Lui.
\newblock Combining offline causal inference and online bandit learning for
  data driven decision.
\newblock \emph{arXiv preprint arXiv:2001.05699}, 2020.

\end{thebibliography}

\newpage
\appendix
\section{Properties of SPEF}\label{app:SPEF}

Consider an SPEF distribution family ${\cal I}$ with the following property: $\eta_\theta \in \cal I$ has density with respect to $\rho$ (taken to be counting measure or Lebesgue Measure) given by 
\[ \frac{d \eta_\theta}{d\rho} = e^{\theta x - b(\theta)}, \quad \forall x\in \Re, \]
where $b: \Theta \rightarrow \Re$ is a normalizing factor that is twice differentiable and strictly convex. Let $\cal S \subset \Re$ denote the set of means of all distributions in $\cal I$. It can be shown that mean of distribution $\eta_\theta$, denoted by $m(\eta_\theta)$, equals $\frac{d b(\theta)}{d\theta}$ (denoted by $\dot{b}(\theta)$). Thus, there is a one-to-one correspondence between distributions in $\cal I$ and their means in $\cal S$.

Next, for $m_1$ and $m_2$ in $\cal S$ let $\KL(m_1, m_2)$ denote the $\KL$ divergence between the unique distributions in $\cal I$ with means $m_1$ and $m_2$. Then, 
$$\KL(m_1, m_2)=(\dot{b}^{-1}(m_1) - \dot{b}^{-1}(m_2))m_1 - (b(\dot{b}^{-1}(m_1)) - b(\dot{b}^{-1}(m_2))). $$
It follows from the above expression and from properties of $b$ that $\KL(\cdot, \cdot)$ is a jointly continuous function. We henceforth denote every  distribution in our SPEF by its mean,
and we let $\cal S$ denote this collection of distributions.
The following result is well known for SPEF.

\begin{lemma}[Lemma $3$ in \cite{garivier2016optimal}]\label{lem:optima_SPEF}
  Consider two distributions from the SPEF family $ \mu_1, \mu_2 \in \cal S$ with  $\mu_1 > \mu_2$. For fixed $\lambda_1,~\lambda_2 \in \Re^{+}$.
  \begin{align*}
      \inf \limits_{\substack{x \leq y,\\ x,y \in \cal I}}   \lrset{\lambda_1 \KL(\mu_1,x) + \lambda_2 \KL(\mu_2,y)} = \inf \limits_{ x \in \cal I} &  \lrset{\lambda_1 \KL(\mu_1,x) + \lambda_2 \KL(\mu_2,x) }.
  \end{align*}
Furthermore, the infimum is attained at 
\[x^{*} = \frac{ \lambda_1 \mu_1 + \lambda_2 \mu_2}{\lambda_1 + \lambda_2}.\] 
\end{lemma}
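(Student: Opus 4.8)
The plan is to pass to the natural-parameter coordinates of the SPEF, where the objective becomes separable and convex, and then finish with a single first-order convexity inequality. Write $\theta_i = \dot{b}^{-1}(\mu_i)$, let $F(x,y) := \lambda_1 \KL(\mu_1,x) + \lambda_2 \KL(\mu_2,y)$ denote the objective, and reparametrize the two free variables as $x = \dot{b}(\phi)$ and $y = \dot{b}(\psi)$. Since $b$ is strictly convex, $\dot{b}$ is a strictly increasing bijection from $\Theta$ onto $\mathcal{S}$, so the constraint $x \le y$ is equivalent to $\phi \le \psi$. Substituting the displayed formula for $\KL$ and cancelling the terms that do not depend on $\phi,\psi$ gives $\lambda_1 \KL(\mu_1, \dot{b}(\phi)) = \lambda_1(b(\phi) - \phi\mu_1) + c_1$ and $\lambda_2 \KL(\mu_2, \dot{b}(\psi)) = \lambda_2(b(\psi) - \psi\mu_2) + c_2$ for constants $c_1,c_2$. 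Hence $G(\phi,\psi) := F(\dot{b}(\phi), \dot{b}(\psi))$ is the sum of a strictly convex function of $\phi$ alone and a strictly convex function of $\psi$ alone, and so is jointly strictly convex.

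First I would treat the diagonal ($x=y$) problem, which is the right-hand side of the claim. Restricting to $\phi = \psi = \theta$ gives the strictly convex map $\theta \mapsto (\lambda_1+\lambda_2)b(\theta) - (\lambda_1\mu_1 + \lambda_2\mu_2)\theta + \mathrm{const}$; setting its derivative to zero yields $\dot{b}(\theta^*) = (\lambda_1\mu_1 + \lambda_2\mu_2)/(\lambda_1+\lambda_2)$, i.e. the claimed minimizer $x^* = \dot{b}(\theta^*)$. This shows the right-hand side equals $F(x^*,x^*)$ and is attained. Since $(x,x)$ is feasible for the constrained problem, the left-hand side is at most the right-hand side, and it remains only to prove the reverse inequality.

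The crux is to show $G(\phi,\psi) \ge G(\theta^*,\theta^*)$ for every feasible pair $\phi \le \psi$. I would compute the gradient of $G$ at the diagonal point $(\theta^*,\theta^*)$: $\partial_\phi G = \lambda_1(\dot{b}(\phi) - \mu_1)$ evaluates to $\lambda_1(x^* - \mu_1)$, and $\partial_\psi G = \lambda_2(\dot{b}(\psi) - \mu_2)$ evaluates to $\lambda_2(x^* - \mu_2)$. Because $x^*$ is a strict convex combination of $\mu_2 < \mu_1$, the first is negative and the second positive; moreover the defining relation $(\lambda_1+\lambda_2)x^* = \lambda_1\mu_1 + \lambda_2\mu_2$ forces these two quantities to be exact negatives of one another, so $\nabla G(\theta^*,\theta^*) = (-a, a)$ with $a = \lambda_1(\mu_1 - x^*) > 0$. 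The first-order inequality for the convex function $G$ then gives $G(\phi,\psi) \ge G(\theta^*,\theta^*) + a(\psi - \phi)$, and feasibility $\psi \ge \phi$ makes the last term nonnegative. This yields $G(\phi,\psi) \ge G(\theta^*,\theta^*)$, establishing the reverse inequality and simultaneously confirming that the infimum is attained on the diagonal at $x^* = y^*$.

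I expect the main obstacle to be essentially bookkeeping rather than conceptual: carrying out the $\KL$-to-natural-parameter substitution carefully enough that the separable convex form and the exact gradient values emerge, and verifying the cancellation $\lambda_1(\mu_1 - x^*) = \lambda_2(x^* - \mu_2)$ that makes the gradient equal to $(-a,a)$. The conceptual content—that the unconstrained minimizers $(\theta_1,\theta_2)$ satisfy $\theta_1 > \theta_2$ and therefore violate $\phi \le \psi$, pushing the constrained optimum onto the diagonal—is exactly what the sign of $\nabla G(\theta^*,\theta^*)$ records, so no separate KKT case analysis or existence argument is needed.
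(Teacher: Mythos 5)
Your argument is correct. Note first that the paper does not prove this lemma at all: it is stated as ``well known'' and attributed to Lemma~3 of \cite{garivier2016optimal}, so there is no in-paper proof to compare against. Judged on its own, your proof is complete and sound: the passage to natural parameters does make the objective separable, with $\lambda_1\KL(\mu_1,\dot b(\phi)) = \lambda_1\bigl(b(\phi)-\phi\mu_1\bigr)+c_1$ strictly convex in $\phi$ by strict convexity of $b$; the stationarity condition on the diagonal gives exactly $\dot b(\theta^*)=x^*$; and the cancellation $\lambda_1(x^*-\mu_1)+\lambda_2(x^*-\mu_2)=(\lambda_1+\lambda_2)x^*-\lambda_1\mu_1-\lambda_2\mu_2=0$ is immediate from the definition of $x^*$, so $\nabla G(\theta^*,\theta^*)=(-a,a)$ with $a\ge 0$ and the first-order convexity inequality closes the argument. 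The standard route (and, as far as one can tell, the one in the cited reference) instead observes that $x\mapsto\KL(\mu_1,x)$ is decreasing below $\mu_1$ and $y\mapsto\KL(\mu_2,y)$ is increasing above $\mu_2$, so any feasible $(x,y)$ with $x<y$ can be improved by moving both toward each other until $x=y$, reducing to the one-dimensional diagonal problem; your gradient-sign argument encodes exactly the same monotonicity information but dispenses with the exchange step in favor of a single global convexity inequality, which is arguably cleaner. The only implicit hypothesis worth flagging is that $\theta^*=\dot b^{-1}(x^*)$ lies in $\Theta$, which holds because $x^*\in(\mu_2,\mu_1)\subset\mathcal S$ and $\dot b$ is a bijection from $\Theta$ onto $\mathcal S$.
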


\section{Supporting results and proofs for Section \ref{sec:setup_lb}}\label{app:LB}
\subsection{Proofs for Section \ref{sec:lb}}
\begin{proof}[Proof of Theorem \ref{thm:Lower_bound}]
    Let the $N^o_a$ offline samples from arm $a$ be denoted by $\lrset{Y_{a,i}}$, for $i\in \lrset{1, \dots, N^o_a}$. Looking at this offline data for each arm, suppose the $\delta$-correct algorithm collects an additional $\tau_\delta$ samples of which $N_a(\tau_\delta)$ are from arm $a$, for each arm $a\in\lrset{1, \dots, K}$. For simplicity of notation, we let $\lrset{Y_{a,i}}$, for $i\in \lrset{ N^o_a+1, \dots, N^o_a + N_a(\tau_\delta) }$ denote the online samples from arm $a$ till time $\tau_\delta$. 
    
    With this notation, let $L_\mu(\lrset{Y_{a,i}}_{a,i})$ denote the likelihood of the $\tau_1 + \tau_\delta$ samples under the bandit instance $\nu$ and $L_{\tilde{\nu}}(\lrset{Y_{a,i}}_{a,i})$ be that under any alternative bandit model $\tilde{\nu}$. Then,
    \[L_\mu(\{Y_{a,i}\}_{a,i}) = \prod\limits_{a=1}^K\prod_{i=1}^{N^o_a + N_a(\tau_{\delta})} \mu_a(Y_{a,i}),\quad \text{ and }\quad L_{\tilde{\nu}}(\{Y_{a,i}\}_{a,i}) = \prod\limits_{a=1}^K\prod_{i=1}^{N^o_a + N_a(\tau_{\delta})} \tilde{\nu}_a(Y_{a,i}). \]
    Taking expectation of the log-likelihood ratio with respect to all the randomness in the system, 
    \begin{equation} \label{eq:averagell}
    \E\lrs{ \log\frac{L_\mu(\lrset{Y_{a,i}}_{a,i})}{L_{\tilde{\nu}}(\lrset{Y_{a,i}}_{a,i})} } = \sum\limits_{a\in [K]} \lrp{ {\mathbb{E}[N^o_a]}+  \E\lrs{N_a(\tau_\delta)}} \KL(\mu_a,\tilde{\nu}_a)  . 
    \end{equation}

    For $q$ and $r$ in $[0,1]$, let $d(q,r)$ denote the $\KL$ divergence between Bernoulli distributions with mean $q$ and $r$. Clearly, the L.H.S. above is $\KL$ divergence between the joint distribution of $\lrset{Y_{a,i}}_{a,i}$ when the samples are generated from bandit instance $\mu$ and that when they are generated from the bandit instance $\tilde{\nu}$. Data processing inequality (\cite{garivier2019explore}, Section  2.8 in \cite{cover2006elements}) then guarantees that the above is at least 
    \[ d( \mathbb{P}_{\mu}(\mathcal E), \mathbb{P}_{\tilde{\nu}}(\mathcal E) ), \quad \forall \mathcal E \in \mathcal F_{\tau_1 + \tau_\delta}, \]
    where $\mathbb{P}_\mu$ and $\mathbb{P}_{\tilde{\nu}}$ denote the probabilities when the interactions are with bandit instances $\mu$ and $\tilde{\nu}$, respectively. This gives 
    \[ \sum\limits_{a\in [K]} \lrp{ \mathbb{E}[ N^o_a] + \E\lrs{N_a(\tau_\delta)} }\KL(\mu_a, \tilde{\nu}_a) \ge \sup\limits_{\mathcal E\in \mathcal F_{\tau_1 + \tau_\delta}} d(\mathbb{P}_\mu(\mathcal E), \mathbb{P}_{\tilde{\nu}}(\mathcal E)).  \]

Now, we minimize the left hand side over all alternate instance $\mathrm{Alt}(\mu)= \{\tilde{\nu}: \max_a \tilde{\nu}_a \geq \mu_1 ,~ a \neq 1\}$ over the SPEF family. We consider the event ${\cal E} = 1_{k_{\tau_{\delta}} = 1 }$. Recall that $\mu_1$ is the larges mean in $\mu$. By the definition of the alternate instance, $a=1$ is not the best arm in $\tilde{\nu}$. Therefore, RHS becomes $d(\mathbb{P}_\mu(\mathcal E), \mathbb{P}_{\tilde{\nu}}(\mathcal E)) = d(\delta, 1-\delta)$ for a $\delta$ correct algorithm that works for all instances in the SPEF. This shows that:
 \begin{align}\label{eqn:constraint}
       \inf \limits_{\tilde{\nu} \in \mathrm{Alt}(\mu)} \sum\limits_{a\in [K]} \lrp{ \mathbb{E}[ N^o_a] + \E\lrs{N_a(\tau_\delta)} }\KL(\mu_a, \tilde{\nu}_a) \ge d(\delta,1-\delta).
 \end{align}

It is known (see, \cite{garivier2016optimal} and \cite[Chapter 33]{lattimore2020bandit}) that the alternate instance $\nu$ of the form $\tilde{\nu}_1,\tilde{\nu}_a, \{\mu_b\}_{b \notin \{a,1\}}$ such that $\tilde{\nu}_a \ge \tilde{\nu}_1$ are the infimizers in the l.h.s of (\ref{eqn:constraint}). Thus, the inequality in (\ref{eqn:constraint}) reduces to the following.

\begin{align}
       \min\limits_{a\ne 1}\inf \limits_{\tilde{\nu}_1\le \tilde{\nu}_a} \lrp{ \mathbb{E}[ N^o_1] + \E\lrs{N_1(\tau_{\delta})} }\KL(\mu_1, \tilde{\nu}_1) + \lrp{ \mathbb{E}[ N^o_a] + \E\lrs{N_a(\tau_\delta)} }\KL(\mu_1, \tilde{\nu}_a) \nonumber \\
       \geq d(\delta,1-\delta) \geq \log \frac{1}{2.4 \delta}\label{eqn:constraint2}.
 \end{align}

 From Lemma \ref{lem:optima_SPEF}, for fixed $a\ne 1$,  the infimum in the l.h.s. above equals
 \[ \inf \limits_{x} \lrp{ \mathbb{E}[ N^o_1] + \E\lrs{N_1(\tau_{\delta})} }\KL(\mu_1, x) + \lrp{ \mathbb{E}[ N^o_a] + \E\lrs{N_a(\tau_\delta)} }\KL(\mu_1, x), \]
and the optimal $x$, denoted by  $x_{1,a}$ is given by Lemma \ref{lem:optima_SPEF}. Thus, the  constraint in (\ref{eqn:constraint2}) re-writes as 
\begin{align}\label{eq:finalCons} 
\inf \limits_{x} \lrp{ \mathbb{E}[ N^o_1] + \E\lrs{N_1(\tau_{\delta})} }\KL(\mu_1, x) + \lrp{ \mathbb{E}[ N^o_a] + \E\lrs{N_a(\tau_\delta)} }\KL(\mu_1, x) \nonumber \\
\ge \log\frac{1}{2.4\delta}, \quad \forall a\ne 1. 
\end{align}
Clearly, stopping time minimizes for $\sum \mathbb{E}[N_a]$ subject to (\ref{eq:finalCons}) establishing the theorem.
\end{proof}

\subsection{Conditional $\delta$-correctness} \label{sec:cond_correct}
In this section we argue that an exploration algorithm that picks the best arm conditioned on the past realization of offline samples with probability $1-\delta$ uniformly for all realizations (\emph{i.e., } a conditionally $\delta$-correct algorithm defined below) would have to discard offline samples. Observe that this is a very strong notion of $\delta$-correctness. We show that under such a strong requirement, it is not possible to do any better than a naive algorithm that simply discards the offline data. This negative result (Theorem~\ref{them:cond} below) motivates the use of the notion of $\delta$ correctness in the main text.

\begin{definition}[Conditionally $\delta$-correct policy/algorithm]\label{def:cond_delta}{\em
 An online policy that given any history $H$ (of length $\tau_1$, say) which is an event in the sigma algebra ${\cal H}_0$ generated by the offline policy, samples arms adaptively till a stopping time $\tau_{\delta}$, and outputs an estimate for the best arm $k_{\tau_1 + \tau_\delta}$, while guaranteeing 
 \[\mathbb{P} \left(k_{\tau_1 + \tau_\delta} \neq 1 | { H} \right) < \delta, ~ \forall  {H} \mathrm{~a.s} \] 
 is said to be conditionally $\delta$ correct. }
\end{definition}

To keep the discussion simple, for Theorem~\ref{them:cond}, we assume that the distributions in ${\cal S}$ are all Bernoulli with parameter within $(0,1)$. Suppose  the offline data is denoted by ${\bf N^o}$ where $\tau_1= \sum_a N^o_a$.  Let $H$ denote the positive probability event  of seeing the reward history $\{x_{a,\ell}\}_{\ell=1}^{N^o_a},~\forall a \in [K]$. 

\begin{theorem} \label{them:cond}
 A conditionally $\delta$-correct policy has to satisfy the following inequality for all $H$ a.s.
 \begin{equation} \label{eqn:cond_Ineq}
      \sum \nolimits_{a} \mathbb{E}[N_a|H] \mathrm{KL}(\mu_a,\nu_a) \geq d(\delta,1-\delta),~ \forall \nu: \argmax\limits_{a} ~ \nu_a \ne \argmax\limits{b} ~\mu_b.
\end{equation}
\end{theorem}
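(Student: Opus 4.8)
The plan is to run the same change-of-measure argument as in the proof of Theorem~\ref{thm:Lower_bound}, but carried out under the \emph{conditional} laws $\mathbb{P}_\mu(\cdot\mid H)$ and $\mathbb{P}_{\nu}(\cdot\mid H)$ rather than the unconditional ones. Fix the positive-probability offline history $H=\{x_{a,\ell}\}$ and any alternative $\nu$ with $\argmax_a\nu_a\ne 1$. Since all distributions in ${\cal S}$ are Bernoulli with parameter in $(0,1)$, the two conditional laws over the online trajectory are mutually absolutely continuous, so the conditional KL divergence and the conditional data-processing inequality are both well defined. The recommendation $k_{\tau_1+\tau_\delta}$ is measurable with respect to the online trajectory, so I would take the event $\mathcal E=\{k_{\tau_1+\tau_\delta}=1\}$ and apply the data-processing inequality to the conditional laws:
\[
\KL\big(\mathbb{P}_\mu(\cdot\mid H),\,\mathbb{P}_{\nu}(\cdot\mid H)\big)\ \ge\ d\big(\mathbb{P}_\mu(\mathcal E\mid H),\,\mathbb{P}_{\nu}(\mathcal E\mid H)\big).
\]

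The crucial step, and the only real difference from Theorem~\ref{thm:Lower_bound}, is evaluating the left-hand side. Writing the offline policy's action probabilities $\pi^0(a_t\mid\text{past})$ explicitly, they are identical under $\mu$ and $\nu$ and therefore cancel in the likelihood ratio; moreover, conditioning on $H$ fixes the offline rewards. Hence the conditional log-likelihood ratio of the trajectory depends only on the online rewards, namely $\sum_a\sum_i\log\big(\mu_a(Y_{a,i})/\nu_a(Y_{a,i})\big)$ summed over the online pulls of each arm $a$. Taking $\mathbb{E}[\cdot\mid H]$ under $\mu$ and using that, given $a_t=a$, the online reward is drawn from $\mu_a$ independently of the past (so its conditional contribution is $\KL(\mu_a,\nu_a)$), Wald's identity applied to the stopping time $\tau_\delta$ yields
\[
\KL\big(\mathbb{P}_\mu(\cdot\mid H),\,\mathbb{P}_{\nu}(\cdot\mid H)\big)=\sum_a \mathbb{E}[N_a\mid H]\,\KL(\mu_a,\nu_a).
\]
Note that the offline counts $N^o_a$ have vanished entirely: this is the formal content of the claim that offline data, once conditioned upon, cannot separate the instances, and it is exactly what distinguishes this bound from the unconditional one in~\eqref{eq:averagell}.

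Finally, I would invoke conditional $\delta$-correctness on both sides of the hypothesis test. For the true instance $\mu$, arm $1$ is best, so Definition~\ref{def:cond_delta} gives $\mathbb{P}_\mu(\mathcal E\mid H)\ge 1-\delta$ a.s.; for $\nu$, arm $1$ is not the best arm, so applying the same guarantee to $\nu$ gives $\mathbb{P}_{\nu}(\mathcal E\mid H)\le\delta$. By the monotonicity of $d(\cdot,\cdot)$ in each argument together with the symmetry $d(1-\delta,\delta)=d(\delta,1-\delta)$, the right-hand side is at least $d(\delta,1-\delta)$, which combined with the displayed identity proves~(\ref{eqn:cond_Ineq}). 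The main points requiring technical care are the justification of the conditional Wald identity and of the cancellation of the offline policy factors; if $\mathbb{E}[N_a\mid H]=\infty$ for some $a$ the inequality holds trivially, so one may assume finiteness throughout.
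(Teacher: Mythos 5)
Your proposal is correct and follows essentially the same route as the paper's proof: a conditional change-of-measure/data-processing argument in which the offline contribution to the conditional KL vanishes because $H$ is fixed under both conditional laws, the online contribution reduces to $\sum_a \mathbb{E}[N_a\mid H]\,\KL(\mu_a,\nu_a)$ via the standard Wald-type identity, and conditional $\delta$-correctness applied to both $\mu$ and $\nu$ bounds the right-hand side by $d(\delta,1-\delta)$. The only cosmetic difference is that you use the event $\{k_{\tau_1+\tau_\delta}=1\}$ where the paper uses its complement, which is immaterial.
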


Observe that the lower bound on expected number of samples required by a purely-online problem is given by the optimal value of the following problem:
\[ \min ~\sum\limits_{a=1}^K N_a ~\text{ s.t. } \sum_a N_a \KL(\mu_a, \nu_a) \ge d(\delta, 1-\delta) \ge \log\frac{1}{\delta}, \quad \forall \nu: \argmax_{a}~\nu_a \ne \argmax_b~\mu_b. \]
Theorem~\ref{them:cond} then implies that a conditionally $\delta$-correct algorithm would require at least as many samples as a purely-online algorithm would need. We now prove Theorem \ref{them:cond}. Intuitively, since the measures conditioned on offline data differ only on the online data, the  LHS in (\ref{eqn:cond_Ineq}) follows for any offline data $H$. The RHS follows because of the stringent demands we put on the 
 conditional $\delta$-correct policy.

\begin{proof}
Consider the following filtered conditional probability space 
\[(\{H\} \times \Omega, {\cal F}, \left({\cal F}_{\tau_1+t}\right)_{t \in \mathbb{N}}, \mathbb{P}_{\mu}(\cdot |  H)).\]
Here, $\Omega$ corresponds to the space of all possible online outcomes, ${\cal F}$ denotes the sigma algebra corresponding to offline and online outcomes. $\{{\cal F}_t\}$ denotes the filtration, with ${\cal F}_t$ capturing the information from first $t$ samples from the sequence of offline plus online outcomes. See \cite{lattimore2020bandit} for technical details for the probabilistic structure of bandit models.

It is clear, even conditioned on the event $H$ from the past, $\{\tau_{\delta} \leq t\} \in {\cal F}_{\tau_1+t},~\forall t \in \mathbb{N}$, where $\tau_\delta$ is an online sampling stopping time. 
We recall that since Bernoulli variables have bias bounded away from $0$ and $1$, $\mathbb{P}_{\mu}(H) >0$ and our event has nonzero measure under any bandit instance being considered. Let 
\[C_{\delta}= (a_{\tau_1+1},x_{\tau_1+1}, \ldots a_{\tau_1+\tau_{\delta}},x_{\tau_1+\tau_{\delta}}) \] 
be the set of online samples till the online stopping time. Consider an alternate measure $\mathbb{P}_\nu$ corresponding to the bandit instance $\nu$. Then, since $\tau_\delta$ is a valid stopping time with respect to the conditional filtered space, we have the following data-processing inequality with measure change between two conditional measures $\mathbb{P}_{\mu}(|H)$ and $\mathbb{P}_{\nu}( | H)$ (see \cite[Lemma 1]{kaufmann2016complexity}).
 \begin{align}
     \KL \left( \mathbb{P}_{\mu}(H,{C}_{\delta} | H), \mathbb{P}_{\nu}(H,{ C}_{\delta} | H) \right) \geq \KL ( \mathbb{P}_{\mu}({\cal E} | H), \mathbb{P}_{\nu}({\cal E} | H) ),
 \end{align}
 where ${\cal E}$ is any event in ${\cal F}_{\tau_1+\tau_{\delta}}$. In particular, choose 
 \[\mathcal E = \{k_{\tau_1 + \tau_\delta} \neq 1\}.\] 
 Further let $\nu$ an alternate Bernoulli MAB instance such that $\argmax_a \nu_a \neq 1$ (recall that $\argmax_a \mu_a = 1$).  Because our policy is conditionally-$\delta$ correct (Definition~\ref{def:cond_delta}), we have:
 \[\mathbb{P}_{\mu}(k_{\tau_1 + \tau_\delta} \neq 1 | H) < \delta \quad \text{ and } \quad  \mathbb{P}_{\nu}(k_{\tau_1 + \tau_\delta} \neq 1 | H) >1- \delta.\] 
 Therefore, we have
  \begin{align}\label{eq:cond_KL}
     \KL \left( \mathbb{P}_{\mu}(H,{C}_{\delta} | H), \mathbb{P}_{\nu}(H,{ C}_{\delta} | H) \right) \geq d(\delta,1-\delta),
 \end{align}
 where $d(p,q)$ denotes the $\KL$ divergence between Bernoulli distributions with means $p$ and $q$, respectively.
 
 Since, the realizations are identical under both the measures and the policy (offline and online) are identical, (\ref{eq:cond_KL}) becomes:
   \begin{align}
     & \KL \left( \mathbb{P}_{\mu}(H | H), \mathbb{P}_{\nu}(H| H) \right) + \KL \left( \mathbb{P}_{\mu}({C}_{\delta} | H), \mathbb{P}_{\nu}({C}_{\delta}| H) \right)
      \geq d(\delta,1-\delta) \nonumber \\
    \overset{a}{\Rightarrow} & ~~ \KL \left( \mathbb{P}_{\mu}({C}_{\delta} | H), \mathbb{P}_{\nu}({C}_{\delta}| H) \right)
      \geq d(\delta,1-\delta).
 \end{align}
 (a) follows because $\mathbb{P}(H | H) =1$ under both Bernoulli instances $\mu$ and $\nu$ since it is an event of positive probability under both measures. 
 
 Since the policy is identical and given the past ${\cal H}_0$ future rewards from any arm $a$ are sampled independently and identically under both measures, we have: 
   \begin{align}
     \KL \left(  \mathbb{P}_{\mu}(C_\delta | H), \mathbb{P}_{\nu}(C_\delta| H) \right) = \sum_a \mathbb{E}[N_a|H] \KL(\mu_a,\nu_a) ,
   \end{align}
   where $N_a$ denotes the number of observations from arm $a$ in $C_\delta$. Substituting  this in (\ref{eq:cond_KL}) we have:
  \begin{align*}
      \sum_a \mathbb{E}[N_a|H] \KL(\mu_a,\nu_a) \geq d(\delta,1-\delta) ,
  \end{align*}
  proving the result.
\end{proof}

\subsection{Proofs from Section \ref{Sec:lb.prop}}\label{app:lb.prop}

\begin{proof}[Proof of Theorem \ref{th:optsol.char}]
The Lagrangian for the convex programming problem ({\bf P1}) is 
    \begin{align*}
     L(\lambda, {\bf N})
     &= \sum_a N_a -\sum_{a}\gamma_a N_a\\
    & -
      \sum_{a \ne 1} \lambda_a \left((\mathbb{E}[N_1^o]+ N_1) \KL(\mu_1, x_{1,a}) + (\mathbb{E}[N_a^o]+ N_a) \KL(\mu_a, x_{1,a}) - 1\right ) ,
     \end{align*}
    where recall that for  fixed ${\bf N}$,
    \[ x_{1,a} = \frac{ (\mathbb{E}[N^o_1]+N_1)\mu_1 + (\mathbb{E}[N^o_a] + N_a ) \mu_a }{N^o_1 + N_1 + N^o_a + N_a}. \]

    Then, ${\bf \tilde{N}}$ satisfies the first order conditions.
    These are  (differentiating w.r.t. $N_1$)
     \begin{equation} \label{eqn:KKT1}
     1= \sum_{a \ne 1} \lambda_a \KL(\mu_1, \tilde{x}_{1,a})
     +\gamma_1,
     \end{equation}
     (differentiating w.r.t. $N_a$, for each a)
     \begin{equation} \label{eqn:KKT2}
    1- \lambda_a  \KL(\mu_a, \tilde{x}_{1,a}) - \gamma_a=0, 
     \end{equation}
     where  for each $a \ne 1$
     \[
     \tilde{x}_{1,a} = \frac{(\mathbb{E}[N^o_1]+\tilde{N}_1) \mu_1+ (\mathbb{E}[N^o_a]+ \tilde{N}_a) \mu_a}
     {\mathbb{E}[N^o_1]+\tilde{N}_1 +\mathbb{E}[N^o_a] + \tilde{N}_a}.
     \]

    ${\bf \tilde{N}}$ also satisfies complimentary slackness. That is, 
    \[
    \lambda_a \left( 
    (\mathbb{E}[N_1^o]+ \tilde{N}_1) \KL(\mu_1, \tilde{x}_{1,a}) + (\mathbb{E}[N_a^o]+ \tilde{N}_a) \KL(\mu_a, \tilde{x}_{1,a}) - 1\right )
    =0
     \]
     for all $a \ne 1$, $\gamma_a \tilde{N}_a=0$ for all $a$. Further, $\gamma_a, \lambda_a \geq 0$. 
    
    For $a \in A_1$, $\gamma_a=0$,
    \[
    \lambda_a =\frac{1}{KL(\mu_a, \tilde{x}_{1,a})}.
    \]
    Along $A_2$ we have
    \[
    \lambda_a =\frac{1-\gamma_a}{\KL(\mu_a, \tilde{x}_{1,a})}.
    \]
    Therefore,  $0 \leq \gamma_a \leq 1$.
    
    Clearly, $\lambda_a=0$ for $a \notin   A_1\cup A_2$.
    Then, (\ref{eqn:KKT1}) equals
    \begin{equation}\label{eq:dualcond}
            \sum_{a \in A_1} \frac{\KL(\mu_1, \tilde{x}_{1,a})}{\KL(\mu_a, \tilde{x}_{1,a})}
    + \sum_{a \in A_2} \frac{\KL(\mu_1, \tilde{x}_{1,a})}{\KL(\mu_a, \tilde{x}_{1,a})}(1-\gamma_a)
     +\gamma_1=1.
    \end{equation}

    Thus, (\ref{eqn:optsol_1}) follows. 
    Further, if $\tilde{N}_1>0$, $\gamma_1=0$ and (\ref{eqn:optsol_2}) follows. 

To see the uniqueness
of ${\bf \tilde{N}}$, suppose that there 
are two optimal solutions ${\bf \tilde{N}}$ and
${\bf \hat{N}}$. Recall the definitions of 
$A_1, A_2$ and  $A=A_1\cup A_2$.
First suppose that 
$\tilde{N}_1= \hat{N}_1$. Then, 
for all $a \in A$, $\hat{N}_a \geq 
\tilde{N}_a$ and hence $\hat{N}_a = 
\tilde{N}_a$ since ${\bf \hat{N}}$ is optimal.
Further, outside of $A$ we
have $\hat{N}_a = 
\tilde{N}_a=0$.
 
Now suppose that $\tilde{N}_1 > \hat{N}_1 > 0$.
Clearly, for each $a \in A, \hat{N}_a > \tilde{N}_a \geq 0$. By the optimality condition for ${ \bf \tilde{N}}$, we have
    \[
    \sum_{a \in A_1}
    \frac{\KL(\mu_1,\tilde{x}_{1,a})}{\KL(\mu_a,\tilde{x}_{1,a})}
    \leq 1
    \]
    and 
    \[
    \sum_{a \in A}
    \frac{\KL(\mu_1,\tilde{x}_{1,a})}{\KL(\mu_a,\tilde{x}_{1,a})}
    \geq 1.
    \]
    Again, since $ \hat{N}_1
    < \tilde{N}_1$, for $a\in A$ we 
    have $\hat{N}_a > \tilde{N}_a \ge 0$. Let 
    \[
    B_1= \{a \notin A, \ne 1: \hat{N}_a > 0\} 
    \]
    and 
    \[
    B_2= \{a \notin A, \ne 1 : \hat{N}_a = 0
    \mbox{  and the corresponding index constraint is tight.} \} 
    \]
    Let $B= B_1 \cup B_2$.
    Then, due to optimality of ${\bf \hat{N}}$, 
    \[
    \sum_{a \in A \cup B_1}
    \frac{\KL(\mu_1,\hat{x}_{1,a})}{\KL(\mu_a,\hat{x}_{1,a})}
    \leq 1,
    \]
    and 
    \[
    \sum_{a \in A \cup B}
    \frac{\KL(\mu_1,\hat{x}_{1,a})}{\KL(\mu_a,\hat{x}_{1,a})}
    \geq 1,
    \]
    where
    \[
     \hat{x}_{1,a} = \frac{(\mathbb{E}[N^o_1]+\tilde{N}_1) \mu_1+ 
     (\mathbb{E}[N^o_a]+ \hat{N}_a) \mu_a}
     {\mathbb{E}[N^o_1]+\hat{N}_1  +\mathbb{E}[N^o_a]+ \hat{N}_a}.
     \]
    First suppose that $B=\emptyset$.
    Then from above,
    \begin{equation} \label{eqn:uhi}
    \sum_{a \in A}
    \frac{\KL(\mu_1,\hat{x}_{1,a})}{\KL(\mu_a,\hat{x}_{1,a})}
    = 1.
    \end{equation}
This leads to 
    a contradiction as 
    $ \mathbb{E}[N^o_1] +\hat{N}_1 < \mathbb{E}[N^o_1]+ \tilde{N}_1$
    and
    $\mathbb{E}[N^o_a]+ \tilde{N}_a < \mathbb{E}[N^o_a] +\hat{N}_a$
     so that
    $\tilde{x}_{1,a} > \hat{x}_{1,a}$
    for each $ a \in A$. This implies that 
    \[
    \frac{\KL(\mu_1,\tilde{x}_{1,a})}{\KL(\mu_a,\tilde{x}_{1,a})}
    < \frac{\KL(\mu_1,\hat{x}_{1,a})}{\KL(\mu_a,\hat{x}_{1,a})}.
    \]
    Therefore
    the LHS in (\ref{eqn:uhi})
    strictly dominates 
    \[
    \sum_{a \in A}
    \frac{\KL(\mu_1,\tilde{x}_{1,a})}{\KL(\mu_a,\tilde{x}_{1,a})}.
    \]
    But the latter is $ \geq 1$
    providing the desired contradiction. 
  
  Now suppose that   $B$ is not empty.
   Again, the contradiction follows similarly
as
\[
\sum_{a \in A \cup B_1}
\frac{\KL(\mu_1,\hat{x}_{1,a})}{\KL(\mu_a,\hat{x}_{1,a})}
\leq 1
\]
and
\[
\sum_{a \in A}
\frac{\KL(\mu_1,\tilde{x}_{1,a})}{\KL(\mu_a,\tilde{x}_{1,a})}
\geq 1.
\]

Next, suppose that $\tilde{N}_1 > \hat{N}_1 = 0$. Define the sets
\[ \tilde{A}_1 := \lrset{a\ne 1: \tilde{N}_a > 0}, \quad\text{and}\quad \hat{A}_1 := \lrset{a\ne 1: \hat{N}_a > 0}.\]
Similarly, 
\[ \tilde{A}_2 := \lrset{a\ne 1: \tilde{N}_a = 0, ~ S_{1,a}(\tilde{\bf N}) = \log\frac{1}{2.4\delta}},\] and
\[ \hat{A}_2 := \lrset{a\ne 1: \hat{N}_a = 0, ~ S_{1,a}(\hat{\bf N}) = \log\frac{1}{2.4\delta}}.\]

Define $\tilde{A} = \tilde{A}_1 \cup \tilde{A}_2$ and $\hat{A} = \hat{A}_1 \cup \hat{A}_2$. Clearly, $ \tilde{A}_1 \subset  \hat{A}_1$. Moreover, $\tilde{A}_1 \cup \tilde{A}_2 \subset \hat{A}_1$. From the optimality conditions, we have the following: 

\[ \sum\limits_{a\in \tilde{A}_1} \frac{\KL(\mu_1, \tilde{x}_{1,a})}{\KL(\mu_a, \tilde{x}_{1,a})} \le 1, \quad \sum\limits_{a\in\tilde{A}} \frac{\KL(\mu_1, \tilde{x}_{1,a})}{\KL(\mu_a, \tilde{x}_{1,a})} \ge 1, \quad \text{and}\quad \sum\limits_{a\in \hat{A}_1} \frac{\KL(\mu_1, \hat{x}_{1,a})}{\KL(\mu_a, \hat{x}_{1,a})} \le 1.\]

Now, for $a\in \hat{A}_1$, $ 0\le \tilde{N}_a < \hat{N}_a$. This implies that $\tilde{x}_{1,a} > \hat{x}_{1,a}$, which further implies that 
\[ \frac{\KL(\mu_1, \tilde{x}_{1,a})}{\KL(\mu_a, \tilde{x}_{1,a})} < \frac{\KL(\mu_1, \hat{x}_{1,a})}{\KL(\mu_a, \hat{x}_{1,a})}, \quad \forall a\in \hat{A}_1, \] 
giving 
\[ \sum\limits_{a\in\hat{A}_1} \frac{\KL(\mu_1, \tilde{x}_{1,a})}{\KL(\mu_a, \tilde{x}_{1,a})} < \sum\limits_{a\in\hat{A}_1} \frac{\KL(\mu_1, \hat{x}_{1,a})}{\KL(\mu_a, \hat{x}_{1,a})}.  \]
However, from the optimality conditions for $\tilde{N}$ and $\hat{N}$, the  l.h.s. above is at least $1$, while the r.h.s. is at most $1$, contradicting the strict inequality above.

This completes the proof for the necessity of the conditions in the theorem for optimality. To see that these are also sufficient, one can argue that if $\bf N$  satisfies the conditions given by the theorem, then there are feasible dual variables $\lambda_a$ and $\gamma_a$ such that the KKT conditions in~\eqref{eqn:KKT1}-~\eqref{eq:dualcond} hold, proving the optimality of $\bf N$.
\end{proof}

\subsubsection*{Properties of lower bound}
Recall that 
\begin{equation}
A_{\delta}(\mu) = \lrset{{\bf N} \in \Re^K_+ : \inf_{x} \{ N_{1} \KL(\mu_1, x) + N_{a} \KL(\mu_a, x)\} \geq \log\frac{1}{2.4\delta}, \quad \forall  a\ne 1}.  \label{eq:Amudelta}\end{equation}
The infimum in the index constraint for arm $a\ne 1$ is attained at $x_{1,a}$ given by 
\[ x_{1,a} = \frac{N_1 \mu_1 + N_1 \mu_a}{N_1 + N_a}. \]
Moreover, $x_{1,a} \rightarrow \mu_1$ as $N_1 \rightarrow \infty$ and $x_{1,a} \rightarrow \mu_a$ as $N_a \rightarrow \infty$. Now, consider again  the observations made in Section~\ref{Sec:lb.prop}:
\\
\textbf{(a)} If $N_1^o >  N^*_1,$ then $\widetilde{N}_1= 0$.
\\
\textbf{(b)} For 2-armed bandit problems, if $N_2^o >  N^*_2,$ then $\widetilde{N}_2= 0$. But this need not be true when $K>2$.
\\
\textbf{(c)} For ${\bf N^o}$ to lie in the constraint set $A_{\delta}(\mu)$, we need \[N_1^o > \max_{a \in [K]\setminus{\{1\}}} \frac{\log\frac{1}{2.4 \delta}}{\KL(\mu_1, \mu_a)}.\] 
Similarly, for each $a\in [K]\setminus{\{1\}}$, we require 
\[N_a^o > \frac{\log\frac{1}{2.4 \delta}}{\KL(\mu_a, \mu_1)}.\]

\textit{Arguments supporting the observations. } To see the first observation above, first recall that for the optimal solution of the purely-online problem $\bf N^*$, we have from the optimality conditions that 
\[ \sum\limits_{a\ne 1} \frac{\KL(\mu_1,{x}^*_{1,a})}{\KL(\mu_a,{x}^*_{1,a})}
= 1, \quad \text{ where }\quad x^*_{1,a} = \frac{N^*_1\mu_1 + N^*_a\mu_a}{N^*_1 + N^*_a}.\]
Now, consider 
${\bf N} = (N_1^o,N_2^*, \ldots, N_K^*)$ with $N^o_1 > N^*_1$. 
For this, we have
\[
\sum_{a \in [K]\setminus{\{1\}}}
\frac{\KL(\mu_1,{x}_{1,a})}{\KL(\mu_a,{x}_{1,a})}
< 1, \quad \text{ where }\quad x_{1,a} = \frac{N^o_1\mu_1 + N^*_a\mu_a}{N^o_1 + N^*_a}.
\]
This follows since the LHS of the {\em sum-ratio} above is decreasing in $N_1$. 

Now, recall the optimality conditions for our o-o framework (Theorem~\ref{th:optsol.char}). For arms $a \in A_1$, since the corresponding index
constraints are tight, $N_a(N_1^o)$ (total offline + online samples to arm $a$) that solves the index equality are less than $N_a^*$ (since $N^o_1 > N^*_1$ and index is non-decreasing in $N_1$). Hence, at $N_a(N^o_1)$, the above {\em sum-ratio} inequality continues to hold since it is non-decreasing in $N_a$, and thus the optimality conditions are satisfied with $\tilde{N}_1 = 0$.

The second observation follows immediately for $K=2$. To see that it does not hold more generally, consider the case where $N_a^o=\infty$ for $a \geq 3$, $N_1^o=0$. Let $N_2(N_1)$ solve the index constraint corresponding to arm 2 for a given $N_1$. Suppose $ N_2^o \in (N_2^*, \hat{N_2}(N_1))$ where $\hat{N_2}(N_1)$ solves \[
\frac{\KL(\mu_1,{x})}{\KL(\mu_2,{x})}=1, \quad \text{ where }\quad  x = \frac{N_1 \mu_1+ \hat{N}_2(N_1) \mu_2}{N_1  + \hat{N}_2(N_1)},\]
and is non-decreasing in $N_1$.

Observe that for this setup of offline data, the optimality conditions for the online sampling in Theorem~\ref{th:optsol.char} reduce to $A_1 = \lrset{2}$, $A_2 = \emptyset$, and
\[ \frac{\KL(\mu_1, x_{1,2})}{\KL(\mu_2, x_{1,2})} = 1, \quad \text{ where } \quad x_{1,2} = \frac{ N_1 \mu_1 + N_2 \mu_2  }{N_1 + N_2}, \]
where $N_1 = \tilde{N}_1$ (since $N^o_1 = 0$) and $N_2 = N^o_2 + \tilde{N}_2$. This implies that $\hat{N}_2(N^*_1)$ defined earlier, is at least  $N^*_2$. It now follows that for $\tilde{N}_1 < N^*_1$, $\hat{N_2}(\tilde{N}_1) > N_2^*$, and $\tilde{N}_2= \hat{N_2}(\tilde{N}_1) - N_2^o > 0$ satisfy the optimality conditions. 

The third observation follows from  the
index constraint in~\eqref{eq:Amudelta} and the observation that for $N_a^o \rightarrow \infty$, $x_{1,a} \rightarrow \mu_a$ and similarly for $N^o_1$.

\section{$\delta$-correctness of the algorithm}\label{sec:delta_correctness}
In this section, we briefly present a proof for $\delta$-correctness of the proposed algorithm. 
\begin{proof}[Proof of Theorem \ref{thm:delta-correct}] Recall that we assume that arm $1$ is the unique arm with the maximum mean in $\mu$. An error occurs if the algorithm's estimate for the best-arm at the stopping time is not arm $1$. It is well known that a bandit algorithm using GLRT (or an upper bound on it) with an appropriate choice of the stopping threshold as a stopping rule, is $\delta$-correct (see, \cite{garivier2016optimal, Kauffman_21, thesis} for GLRT-based stopping statistics in different settings). For completeness, we briefly outline a proof relating the error event to the deviation of empirical $\KL$ divergences. \cite{Kauffman_21} show that the obtained deviation is a rare event, establishing the  $\delta$-correctness of the algorithm.

Recall that the stopping rule corresponds to checking 
\[ \min\limits_{b\ne i^*(t)} Z_{i^*(t),b}({\bf N}(t))  \ge \beta(t+\tau_1,\delta),\]
where
\[ Z_{a,b}({\bf N}(t)) = \inf\limits_{x} \lrset{(N_a^o+ N_a(t))\KL(\hat{\mu}_a(t), x) + (N_b^o+N_b(t))\KL(\hat{\mu}_b(t), x)}. \]

From Lemma \ref{lem:optima_SPEF}, $Z_{a,b}({\bf N}(t))$ also equals
\[ \inf\limits_{x \le y} \lrset{(N_a^o+ N_a(t))\KL(\hat{\mu}_a(t), x) + (N_b^o+N_b(t))\KL(\hat{\mu}_b(t), y)}. \]

Consider the error event given as below:
    \[ \mathcal E = \lrset{\tau_\delta < \infty, i^*(\tau_\delta) \ne 1 }.  \]
    
    The above event is contained in 
    \[\lrset{ \exists t \in \mathbb{N}, \exists a \ne 1, ~ \min\limits_{b\ne a} Z_{a,b}({\bf N}(t)) \ge \beta(t+\tau_1,\delta)},\]
    which is further contained in 
    \[\lrset{ \exists t \in \mathbb{N}, \exists a \ne 1, ~ Z_{a,1}({\bf N}(t)) \ge \beta(t+\tau_1,\delta)}.\]
    Thus, to bound the probability of error, it suffices to bound the probability of the above event. Clearly, $x=\mu_a$ and $y=
    \mu_1$ are feasible choices for the variables being optimized in $Z_{a,1}({\bf N}(t))$, giving an upper bound on $Z_{a,1}({\bf N}(t))$ in terms of scaled sums of the two $\KL$-divergence terms.
    \cite[Equation (25), Section 5.1]{Kauffman_21} then bounds the probability of this deviation by $\delta$, showing that the proposed algorithm using the GLRT stopping rule described above with the threshold $\beta(t+\tau_1,\delta)$ specified in \eqref{eq:beta} is $\delta$-correct. 
\end{proof}

\section{Max-min normalized formulation and properties of the optimizers}

Recall that the given bandit instance $\mu \in \mathcal S^K$ is such that arm $1$ is the unique optimal arm,  $\tau_\delta$ denotes the total number of online samples generated, which is allowed to be a function of all the observations, $N^o_a$ denotes the total number of samples from arm $a$ in the offline data, and $N_a(\tau_\delta)$ denotes the total number of samples from arm $a$ from the online sampling till time $\tau_\delta$. Moreover, recall that  $p\in\Sigma_K$ denotes the fraction of samples from each arm in the offline data. For $q_1\in\Re$ and $q_2\in\Re$, $\KL(q_1,q_2)$ denotes the $\KL$ divergence between the unique probability distributions in $\cal S$ with means $q_1$ and $q_2$. With this notation, for $z\in [0,1]$, $w\in\Sigma_K$, $x \in \Re$, $j\in [K]$ and $j\ne 1$, 
    $$g_j(w,z,x,p) = (z p_1 + (1-z)w_1)\KL(\mu_1, x) + (z p_j + (1-z)w_j) \KL(\mu_j, x).$$ 

In the sequel, for the simplicity of notation, we remove the dependence on $p$ of the various functions. Define
    \begin{align}\label{eq:V_def}
     V(\mu,z)= \max\limits_{w\in\Sigma_K}~\min\limits_{j\ne 1}~ \inf\limits_{x} ~ g_j(w,z,x). 
     \end{align} 
    Moreover, recall that $z^*$ is the optimal value of the following optimization problem ($\mathbf{P3}$) from the main paper. 
    \begin{align*}
        \max ~~~ &z \\
        \text{s.t.}~~~ &V(\mu,z) \ge \frac{z}{\tau_1} \left(\log\frac{1}{\delta} + \log \log \frac{1}{\delta} \right).
    \end{align*}

\subsection{Alternative formulation}\label{sec:EquivForm}
\begin{proof}[Proof of Lemma \ref{lem:equivalence}]
    Let $p \in \Sigma_K $ denote the fraction of samples of each arm in the offline data, $z \in [0,1]$ denote the fraction of offline samples, and let $w \in \Sigma_K$ denote the fraction of online samples from each arm, i.e., 
    $$p_a = \frac{N^o_a}{\tau_1},\quad z = \frac{\tau_1}{\tau_1 + \sum\limits_{a=1}^K N_a},\quad  w_a = \frac{N_a}{\sum\limits_{b=1}^K N_b}.$$
    Let $N = \sum_b N_b.$ Then, all the left hand side constraints in (\ref{prob:P2}) in $\bf P2(\mu)$ re-write as: 
    
    \begin{align*} 
    (\tau_1 + N) &\lrp{ \inf \limits_{a \neq 1} \inf \limits_{x_{1,a}} (z p_1 + (1-z) w_1) \KL(\mu_1, x_{1,a}) +  (zp_a + (1-z) w_a) \KL(\mu_a, x_{1,a})}\\
    &\geq \log \frac{1}{\delta} + \log \log \frac{1}{\delta}. 
    \end{align*}
 
 Now, we can rewrite problem $\mathbf{P2}(\mu)$ equivalently as (since $\tau_1$ is a constant and using the definition of $z$):
 \begin{align}
     \max & ~~~~z \nonumber\\
     \mathrm{s.t~} &  \lrp{ \inf \limits_{a \neq 1} \inf \limits_{x_{1,a}} (z p_1 + (1-z) w_1) \KL(\mu_1, x_{1,a}) +  (zp_a + (1-z) w_a) \KL(\mu_a, x_{1,a})}\nonumber\\
     &\qquad\qquad \qquad\qquad \geq \frac{z}{\tau_1}\left(\log \frac{1}{\delta} + \log \log \frac{1}{\delta}\right)  \nonumber \\
    & ~~z \in [0,1], ~ w \in \Sigma_{K}.
 \end{align}
 This is equivalent to (just by notational substitution of $g_a(\cdot)$):
 \begin{align}\label{equiv1}
 \max & ~~z \nonumber\\
\mathrm{s.t~} & \min \limits_{a \neq 1} \inf \limits_x g_a(w,z,x) \geq \frac{z}{\tau_1}\left(\log \frac{1}{\delta} + \log \log \frac{1}{\delta}\right)  \nonumber \\
& ~z \in [0,1], ~ w \in \Sigma_{K}.
\end{align}

Suppose there is a feasible point $z, w$ to problem (\ref{equiv1}). Then, $z$ satisfies the following constrained problem as well:

 \begin{align}\label{equiv2}
   \max & ~~z \nonumber\\
\mathrm{s.t~} & \sup \limits_{w \in \Sigma_K} \min \limits_{a \neq 1} \inf \limits_x g_a(w,z,x) \geq \frac{z}{\tau_1}\left(\log \frac{1}{\delta} + \log \log \frac{1}{\delta}\right)  \nonumber \\
& ~z \in [0,1].
 \end{align}
 
 This implies that optimal $z^{*}$ of (\ref{equiv2}) at least optimal $z^{*}$ of (\ref{equiv1}) since the feasible set is bigger in (\ref{equiv2}). In the reverse direction, if $z^{*}$ is an optimal solution to (\ref{equiv2}), consider the corresponding maximizer $w^{*}(\mu,z^{*})$ in the constraint. Then, $z^{*},w^{*}(\mu,z^{*})$  is a feasible solution to (\ref{equiv1}). Therefore, problem in (\ref{equiv1}) is same as problem in (\ref{equiv2}). (\ref{equiv2}) is precisely problem $\mathbf{P3}(\mu)$. 
 
 Thus, suppose we have optimal solutions $(z^*, w^*)$ for ${\bf P3(\mu)}$, then  $$N^*_a = w^*_a \tau_1 \lrp{\frac{1}{z^*}-1}$$ gives optimal solution for $\bf P2(\mu)$. Similarly, let  ${\bf N^*} = \lrset{N^*_1, \dots, N^*_K}$ be optimizers for ${\bf P2}(\mu)$. Then, $$w^*_a = \frac{N^*_a}{\sum_b N^*_b}  \quad \text{and }\quad z^* = \frac{\tau_1}{\tau_1 + \sum_b N^*b}$$
 gives the optimizers for ${\bf P3}(\mu)$.
 This completes the proof.
\end{proof}

\subsection{Properties of the max-min problem and its optimizers: monotonicity}\label{app:mon_propmaxmin}

As in the previous sections, in this section we fix $p$ to the observed fractions of each arm in the offline data. Since $p$ is fixed, we supress the dependence of the various functions in the sequel on $p$.
    
\begin{lemma}\label{lem:monotonicVmuz}
 For $\mu \in \mathcal S^K$ with arm $1$ being the unique arm with the maximum mean, $V(\mu,z)$ defined in \eqref{eq:V_def} is non-increasing in the second argument.
 \end{lemma}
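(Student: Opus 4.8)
The plan is to reparametrize the inner maximization in terms of the \emph{effective} arm proportions and then exploit a nesting of the resulting feasible sets. Concretely, I would introduce $\alpha = z p + (1-z) w$, so that the coefficient multiplying $\KL(\mu_1,x)$ in $g_j$ is exactly $\alpha_1$ and the one multiplying $\KL(\mu_j,x)$ is $\alpha_j$. Writing $\tilde g_j(\alpha,x) := \alpha_1 \KL(\mu_1,x) + \alpha_j \KL(\mu_j,x)$ and $F(\alpha) := \min_{j\ne 1}\inf_x \tilde g_j(\alpha,x)$, we get $g_j(w,z,x) = \tilde g_j(z p + (1-z)w,\, x)$, and hence $V(\mu,z) = \max_{w\in\Sigma_K} F(z p + (1-z)w) = \max_{\alpha \in \mathcal A_z} F(\alpha)$, where $\mathcal A_z := \{ z p + (1-z) w : w \in \Sigma_K\}$ is the image of the simplex under the affine map $w \mapsto z p + (1-z)w$. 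Note that $\mathcal A_z \subseteq \Sigma_K$, since each such $\alpha$ is a convex combination of $p$ and $w$, and crucially that $F$ does not depend on $z$ at all.

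The key step is to show the feasible sets are nested: for $0 \le z_1 < z_2 \le 1$, $\mathcal A_{z_2} \subseteq \mathcal A_{z_1}$. Given $\alpha = z_2 p + (1-z_2) w \in \mathcal A_{z_2}$, I would exhibit the explicit $w'$ realizing $\alpha$ at level $z_1$, namely $w' = \frac{z_2 - z_1}{1 - z_1}\, p + \frac{1 - z_2}{1 - z_1}\, w$ (valid for $z_1 < 1$; the case $z_1 = 1$ forces $z_2 = 1$ and is trivial). A direct check shows $z_1 p + (1-z_1) w' = z_2 p + (1-z_2) w = \alpha$, and that the two coefficients defining $w'$ are nonnegative and sum to $1$, so $w'$ is a convex combination of $p, w \in \Sigma_K$ and therefore lies in $\Sigma_K$. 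Hence $\alpha \in \mathcal A_{z_1}$, proving the inclusion. Equivalently, stated without naming $\mathcal A_z$: starting from an optimal $w^\ast$ for $z_2$, the point $w'$ above is feasible for the $z_1$ problem and gives $g_j(w',z_1,x) = g_j(w^\ast,z_2,x)$ for every $j$ and $x$.

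Finally, monotonicity is immediate: since $\mathcal A_{z_2} \subseteq \mathcal A_{z_1}$, taking the maximum of the fixed function $F$ over the smaller set can only decrease its value, so $V(\mu, z_2) = \max_{\alpha \in \mathcal A_{z_2}} F(\alpha) \le \max_{\alpha \in \mathcal A_{z_1}} F(\alpha) = V(\mu, z_1)$. I expect the only place requiring genuine care to be the nesting claim, i.e.\ verifying that the constructed $w'$ stays in the simplex across the full range of $z_1, z_2$ including the boundary values; everything else is a clean consequence of rewriting the objective through $\alpha$ and observing that neither $F$ nor the inner $\min_{j\ne1}\inf_x$ depends on $z$.
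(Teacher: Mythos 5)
Your proof is correct and follows essentially the same route as the paper: both rewrite $V(\mu,z)$ as a maximization of a $z$-independent function over the set $\{zp+(1-z)w : w\in\Sigma_K\}$ and deduce monotonicity from the nesting of these sets as $z$ increases. The only difference is that you verify the nesting explicitly via the convex combination $w' = \frac{z_2-z_1}{1-z_1}p + \frac{1-z_2}{1-z_1}w$, whereas the paper asserts it without detail.
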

 \begin{proof}
    Recall that 
    $$
        V(\mu,z) = \max\limits_{w\in\Sigma_K}~ \min\limits_{j\ne 1} ~ \inf\limits_{x} ~\lrset{ (zp_1 + (1-z)w_1) \KL(\mu_1, x) + (zp_j + (1-z)w_j)\KL(\mu_j, x) },
    $$
    where $p \in \Sigma_K$. For $z\in [0,1]$, define $\Sigma^z_K = \lrset{z p + (1-z) w :~ w\in\Sigma_K }$. With this notation, 
    \[ V(\mu,z) = \max\limits_{\tilde{w}\in \Sigma^z_K }~ \min\limits_{j\ne 1} ~ \inf \limits_x ~ \lrset{ \tilde{w}_1 \KL(\mu_1, x) + \tilde{w}_j \KL(\mu_j, x) }. \]
    Moreover, for $z_1 < z_2$, $\Sigma^{z_2}_K \subset \Sigma^{z_1}_K $, implying that $V(\mu, z_1) \ge V(\mu, z_2)$.
    \end{proof}

\subsection{Properties of the max-min problem and its optimizers: continuity results for fixed $\delta$ }\label{app:propmaxmin}

For $z\in [0,1]$ and $\nu\in \mathcal I^K$, let $w^*(\nu, z)$ denote the set of optimizers for $V(\nu, z)$. Clearly, $w^*: \mathcal I^K \times [0,1] \rightarrow \Sigma_K$. The following lemma shows that these optimizers satisfy some continuity properties. As we will see later, this will be crucial for proving the convergence of the  proposed plug-and-play strategy.

\begin{lemma}\label{lem:cont.uniq}
For $\mu \in  \mathcal S^K$, the set of optimizers $z^*(\mu)$ and $w^*(\mu, z^*)$ are respectively continuous and upper-hemicontinuous in their respective arguments. Moreover, $V(\cdot, \cdot)$ is a jointly continuous function. In addition, if $\mu$ has a unique optimal arm, then $w^*(\mu,z^*)$ is also jointly continuous. 
\end{lemma}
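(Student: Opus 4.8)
The plan is to realize $V$, $z^*$, and $w^*$ as the outputs of nested parametric optimization problems and read off their continuity from Berge's Maximum Theorem together with the monotonicity already established in Lemma~\ref{lem:monotonicVmuz}.

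\textbf{Continuity of the inner value and of $V$.} First I would fix a compact interval $I \subset \mathcal S$ containing $\mu_1,\dots,\mu_K$ and a neighborhood of them, and observe that for each $j\ne 1$ the infimum $h_j(w,z,\mu):=\inf_x g_j(w,z,x,\mu)$ is attained in $I$: by Lemma~\ref{lem:optima_SPEF} the minimizer is the convex combination $x_{1,j}=\frac{\lambda_1\mu_1+\lambda_j\mu_j}{\lambda_1+\lambda_j}$ with $\lambda_1=zp_1+(1-z)w_1$ and $\lambda_j=zp_j+(1-z)w_j$, which lies in $[\mu_j,\mu_1]$. Since $\KL(\cdot,\cdot)$ is jointly continuous on $\mathcal S\times\mathcal S$ (Appendix~\ref{app:SPEF}), $g_j$ is jointly continuous in $(w,z,x,\mu)$, and restricting the infimum to the fixed compact set $I$, Berge's Maximum Theorem yields that $h_j$ is jointly continuous in $(w,z,\mu)$; arguing through $h_j$ rather than the explicit formula handles the degenerate case $\lambda_1=\lambda_j=0$ cleanly, since we never divide by the weights. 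Then $\min_{j\ne 1}h_j$ is continuous as a finite minimum, and it is concave in $w$ because each $h_j$ is an infimum of functions affine in $w$. Applying Berge once more to $V(\mu,z)=\max_{w\in\Sigma_K}\min_{j\ne 1}h_j(w,z,\mu)$ over the fixed compact simplex $\Sigma_K$ gives that $V$ is jointly continuous and that the argmax correspondence $w^*(\mu,z)$ is nonempty, compact-valued, and upper-hemicontinuous.

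\textbf{Continuity of $z^*(\mu)$.} Let $c_\delta:=\frac{1}{\tau_1}\big(\log\frac{1}{\delta}+\log\log\frac{1}{\delta}\big)$ and define $F(\mu,z):=V(\mu,z)-c_\delta z$. By the previous step $F$ is jointly continuous, and by Lemma~\ref{lem:monotonicVmuz} the map $z\mapsto V(\mu,z)$ is non-increasing while $-c_\delta z$ is strictly decreasing, so $F(\mu,\cdot)$ is strictly decreasing on $[0,1]$. Hence $z^*(\mu)=\max\{z\in[0,1]:F(\mu,z)\ge 0\}$, which is well defined because $F(\mu,0)=V(\mu,0)\ge 0$; it is either the unique zero of $F(\mu,\cdot)$ in $(0,1]$ or equals $1$ when $F(\mu,1)\ge 0$. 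I would then prove continuity by the usual intermediate-value argument: given $\epsilon>0$ with $z^*(\mu)\in(0,1)$, strict monotonicity gives $F(\mu,z^*-\epsilon)>0>F(\mu,z^*+\epsilon)$, and joint continuity of $F$ forces the same strict inequalities for all $\mu'$ in a neighborhood of $\mu$, pinning $z^*(\mu')$ into $(z^*-\epsilon,z^*+\epsilon)$; the boundary case $z^*=1$ follows from a one-sided version of the same estimate.

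\textbf{Upper-hemicontinuity and the unique-arm case.} Upper-hemicontinuity of $\mu\mapsto w^*(\mu,z^*(\mu))$ is then immediate by composing the u.h.c.\ correspondence $(\mu,z)\mapsto w^*(\mu,z)$ with the continuous selection $z^*(\mu)$. For the last claim I would invoke uniqueness: when $\mu$ has a unique best arm, Lemma~\ref{lem:equivalence} identifies the maximizer $w^*(\mu,z^*)$ with the normalized optimal allocation ${\bf N}^*/\sum_b N^*_b$ of $\mathbf{P2}(\mu)$, which Theorem~\ref{th:optsol.char} shows is unique. Thus $w^*(\mu,z^*)$ is single-valued, and a single-valued upper-hemicontinuous correspondence is a continuous function, giving joint continuity. \emph{The main obstacle} I anticipate is exactly this uniqueness transfer: Theorem~\ref{th:optsol.char} is stated in the unnormalized $N$-space for $\mathbf{P1}$, so I must argue that its uniqueness passes through the normalization of Lemma~\ref{lem:equivalence} and holds at the specific optimal value $z^*$, rather than merely for the $w$-optimizers of $V(\mu,z)$ at an arbitrary $z$; equivalently, I must rule out a flat face of maximizers of $\min_{j\ne1}h_j(\cdot,z^*,\mu)$ over $\Sigma_K$, which is precisely what the tightness and strict-ratio structure in Theorem~\ref{th:optsol.char} provides.
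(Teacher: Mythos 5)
Your proposal is correct and follows essentially the same route as the paper: joint continuity of the inner infima via restriction to the compact interval $[\mu_j,\mu_1]$ and Berge's theorem (the paper's Lemma~\ref{lem:contGj}), a second application of Berge over $\Sigma_K$ for continuity of $V$ and upper-hemicontinuity of $w^*$ (Lemma~\ref{lem:contVmu}), and the same uniqueness transfer through Theorem~\ref{th:optsol.char} and Lemma~\ref{lem:equivalence} to upgrade the single-valued u.h.c.\ correspondence to a continuous function. The one place you genuinely diverge is the continuity of $z^*$: the paper treats $z^*(\mu)=\max\{z: z\in\mathcal Z(\mu)\}$ as a parametric optimization with a constant objective, verifies lower- and upper-hemicontinuity of the feasible-set correspondence $\mathcal Z(\cdot)$ from continuity of $V$, and invokes Berge a third time; you instead define $F(\mu,z)=V(\mu,z)-c_\delta z$, use the monotonicity of Lemma~\ref{lem:monotonicVmuz} to make $F(\mu,\cdot)$ strictly decreasing, and sandwich $z^*(\mu')$ by sign changes of $F$. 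Your version is more elementary and makes explicit why the intersection point is unique and stable (strict monotonicity of $F$ is doing the work that lower-hemicontinuity of $\mathcal Z$ does in the paper); the paper's version is more uniform in its reliance on Berge. The obstacle you flag at the end --- that uniqueness of the $w$-maximizer must be established at $z^*$ by passing Theorem~\ref{th:optsol.char}'s uniqueness in $N$-space through the normalization of Lemma~\ref{lem:equivalence} --- is real, but the paper resolves it with exactly the citation you propose and no additional detail, so you are not missing anything the authors supply.
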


\begin{proof} 
The proof of the above lemma proceeds by applying the classical Berge's Theorem (see, \cite{berge1997topological}) at various steps. Towards this, we first prove the continuity of functions being optimized and establish the properties of the feasible regions as a function of the the bandit instance. An application of the Berge's Theorem then gives the desired result. 

The upper-hemicontinuity of $w^*$ follows from Lemma \ref{lem:contVmu}. Moreover, if $\mu$ has a unique optimal arm, the set of maximizers for $V(\mu,z)$ is unique (see, Theorem \ref{th:optsol.char} and Lemma \ref{lem:equivalence}). This then gives the joint-continuity of $w^*$ in its arguments.

\noindent{\bf Continuity of $z^*$: }Consider the set 
    $\mathcal Z(\mu) := \lrset{z\in [0,1] : ~ V(\mu, z) \ge \frac{z}{\tau_1} (\log\frac{1}{\delta} + \log \log \frac{1}{\delta})}.$
    
Recall that $z^*(\mu) = \max\lrset{ z : ~ z \in \mathcal{Z}(\mu) }$. To prove continuity of the optimal value of this optimization problem, first observe that the objective function is independent of $\mu$, hence continuous in $\mu$. It now suffices to show that $\mathcal Z(\cdot)$ is both a lower and upper hemicontinous correspondence, hence continuous correspondence. Berge's Theorem (see, \cite{berge1997topological}) then gives the continuity of $z^*$ in $\mu$. Note that lower- and upper-hemicontinuity of $\mathcal Z(\cdot)$ follows from continuity of $V(\mu,z)$ in $\mu$ (Lemma \ref{lem:contVmu}). This follows from the sequential characterization of upper and lower hemicontinuity (see, Section 9.1.3 in \cite{sundaram1996first} ). 
\end{proof}

Let us now prove the results that we used in the proof of the above continuity-lemma.

For $j\ne 1$, recall the definitions of $g_j(\cdot, \cdot, \cdot, \cdot)$ and $V(\cdot, \cdot)$. Define 
\begin{equation}\label{eq:Gj}
    G_j(\mu,w,z) = \inf\limits_x ~g_j(\mu,w,z,x).
\end{equation}
\begin{lemma}\label{lem:contGj}
    $G_j: \mathcal S^K \times \Sigma_K\times [0,1] \rightarrow \Re^+$ defined in \eqref{eq:Gj} is a jointly continuous function of its arguments.
\end{lemma}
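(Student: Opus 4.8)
The plan is to exploit the explicit form of the minimizer provided by Lemma~\ref{lem:optima_SPEF}, which collapses the infimum defining $G_j$ into an algebraic expression built from jointly continuous pieces. Writing the (fixed) offline fractions as $p$, introduce the two coefficients
\[
\lambda_1(w,z) := z p_1 + (1-z)w_1, \qquad \lambda_j(w,z) := z p_j + (1-z)w_j,
\]
so that $g_j(\mu,w,z,x) = \lambda_1 \KL(\mu_1,x) + \lambda_j \KL(\mu_j,x)$. Both coefficients are affine, hence continuous, in $(w,z)$ and are nonnegative on $\Sigma_K \times [0,1]$. Recall from Appendix~\ref{app:SPEF} that $\KL(\cdot,\cdot)$ is jointly continuous on $\mathcal S \times \mathcal S$ and that $\mathcal S$ is an interval; these are the two facts I would lean on throughout.

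First I would handle the generic region $D := \{(\mu,w,z): \lambda_1(w,z) + \lambda_j(w,z) > 0\}$, which is open. On $D$, Lemma~\ref{lem:optima_SPEF} shows the infimum is attained at
\[
x^*(\mu,w,z) = \frac{\lambda_1 \mu_1 + \lambda_j \mu_j}{\lambda_1 + \lambda_j},
\]
so that $G_j = \lambda_1\KL(\mu_1,x^*) + \lambda_j\KL(\mu_j,x^*)$. Since the denominator is locally bounded away from $0$ on $D$, the map $(\mu,w,z)\mapsto x^*$ is continuous as a ratio of continuous functions; moreover $x^*$ is a convex combination of $\mu_1,\mu_j$, hence lies in the interval $\mathcal S$, so that $\KL(\mu_1,x^*)$ and $\KL(\mu_j,x^*)$ are well-defined and continuous by joint continuity of $\KL$. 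Composing, $G_j$ is continuous on $D$. I note that the one-sided degenerate cases where exactly one coefficient vanishes already fall inside $D$ (there $x^*=\mu_1$ or $\mu_j$ and the formula remains valid), so they need no separate treatment.

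It remains to treat the complementary closed set where $\lambda_1 = \lambda_j = 0$; there $g_j \equiv 0$ and hence $G_j = 0$. I expect this degenerate boundary to be the only delicate point. To establish continuity there, take any sequence $(\mu^{(n)},w^{(n)},z^{(n)})$ converging to such a point $(\mu^0,w^0,z^0)$. The $\mu^{(n)}$ eventually lie in a compact neighborhood of $\mu^0$ in $\mathcal S^K$, so $\mu_1^{(n)},\mu_j^{(n)}$ and their convex combinations all lie in a fixed compact subinterval of $\mathcal S$ on which $\KL$ is bounded, say by $M$. Using the closed form on $D$ and $G_j=0$ otherwise, in either case
\[
0 \le G_j\bigl(\mu^{(n)},w^{(n)},z^{(n)}\bigr) \le M\bigl(\lambda_1^{(n)} + \lambda_j^{(n)}\bigr) \longrightarrow 0,
\]
since $\lambda_1^{(n)},\lambda_j^{(n)} \to 0$ by continuity of the coefficients. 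Hence $G_j \to 0 = G_j(\mu^0,w^0,z^0)$, which combined with continuity on $D$ yields joint continuity on the whole domain and completes the proof.
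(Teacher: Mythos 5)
Your proof is correct, but it takes a genuinely different route from the paper's. The paper observes that the infimizer lies in the compact interval $M_{1,j}=[\mu_j,\mu_1]$, checks that $(\mu,w,z)\mapsto M_{1,j}$ is a continuous, compact-valued correspondence and that $g_j$ is jointly continuous, and then invokes Berge's maximum theorem to conclude in one stroke; no case analysis is needed, because Berge's theorem is indifferent to whether the coefficients $zp_1+(1-z)w_1$ and $zp_j+(1-z)w_j$ vanish. You instead exploit the closed-form minimizer from Lemma~\ref{lem:optima_SPEF} to write $G_j$ explicitly as $\lambda_1\KL(\mu_1,x^*)+\lambda_j\KL(\mu_j,x^*)$ with $x^*$ the $\lambda$-weighted mean, prove continuity by composition on the open set where $\lambda_1+\lambda_j>0$, and handle the genuinely degenerate set $\lambda_1=\lambda_j=0$ (which is reachable, e.g.\ $z=0$ with $w_1=w_j=0$ for $K\ge 3$) by the squeeze $0\le G_j\le M(\lambda_1+\lambda_j)\to 0$ using boundedness of $\KL$ on a compact neighborhood. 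Both arguments are sound: yours is more elementary and self-contained (no appeal to Berge) and produces an explicit expression for $G_j$ that could be reused elsewhere; the paper's is shorter, avoids the boundary case split entirely, and fits a template (Berge's theorem) that the paper applies repeatedly, e.g.\ for the continuity of $V$ and of $z^*$ in Lemmas~\ref{lem:contVmu} and~\ref{lem:cont.uniq}. The only points worth double-checking in your write-up are routine and you handle them correctly: that $x^*$ stays in $\mathcal S$ because it is a convex combination of $\mu_1,\mu_j$ and $\mathcal S$ is an interval, and that the closed-form expression remains valid (and continuous) when exactly one coefficient vanishes.
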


\begin{proof}
    Let $\tilde{w}(\mu,z) \in \Sigma_K$ be given by $zp + (1-z)w$. It is not hard to see that the $x$ achieving the infimum in $G_j$ (denoted by $x_{1,j}$) belongs to the set $M_{1,j}:= [\mu_j, \mu_1]$. This follows from the monotonicity of the two $\KL$ divergence terms in the expression for $g_j$. Hence,  
    \[ G_j(\mu,w,z) = \inf\limits_{x \in M_{1,j}}~ g_j(\mu,w,z,x). \]
    Now, observe that $M_{1,j}$, viewed as a set-valued map from $\mathcal S^K\times \Sigma_K  \times [0,1]$, is jointly continuous and  compact-valued. This follows from the sequential characterization of lower and upper hemicontinuity (see, \cite[Section 9.1.3]{sundaram1996first}). Moreover, $g_j$ is a jointly continuous function of its arguments. Then, Berge's Theorem (see, \cite{berge1997topological}, \cite{sundaram1996first}) gives that $G_j$ is a jointly continuous function. 
\end{proof}

\begin{lemma}\label{lem:contVmu}
    $V: \mathcal S^K \times [0,1] \rightarrow \Re^+$ is a jointly continuous function. Moreover, the set of maximizers in  $V(\cdot,\cdot)$, i.e., $w^*(\mu,z)$ is a jointly upper-hemicontinuous correspondence.
\end{lemma}
\begin{proof}
    Recall that 
    \[ V(\mu,z)  = \max\limits_{w\in\Sigma_K}~ \min\limits_{j\ne 1} ~ G_j(\mu,w,z), \]
    and $w^*$ is the set of maximizers in the $V(\mu,z)$ optimization problem.    From Lemma \ref{lem:contGj}, $G_j$ is a jointly-continuous function. Hence, $\min_j G_j$ is also jointly-continuous in $(\mu,w,z)$. Since $\Sigma_K$ when viewed as a correspondence from $\mu\times [0,1]$ is a constant and compact-valued correspondence, it is  jointly-continuous. Berge's Theorem (see, \cite{berge1997topological}) now implies that $V(\mu,z)$ is a jointly continuous function. Moreover, the set of maximizers, $w^*(\mu, z)$, is a jointly upper-hemicontinuous correspondence.
\end{proof}

\section{Expected sample complexity of the algorithm}\label{app:samplecomplexity}

    \begin{theorem}[Restatement of Theorem \ref{thm:stop_bound1}. Non asymptotic Expected Sample Complexity]\label{thm:mainthm}
        For $\delta > 0$, suppose $\mu\in\mathcal S^K$ is such that $ 1 \ge z^*(\mu) > \eta$, for some $\eta > 0$. Let $\epsilon'>0, \tilde{\epsilon}>0, \epsilon_1>0 $ be constants. If the given  problem instance $(\mu,\hat{p},\tau_1)$ is such that:
        \begin{equation}\label{eq:cond_tau_1}\frac{\tau_1}{\log \frac{1}{\delta} + \log \log \frac{1}{\delta}} \notin (C_1^a(c_{\epsilon_1}),C_2^a(c_{\epsilon',\tilde{\epsilon}}))     \quad \forall a\ne 1,\end{equation}
        then the algorithm satisfies:
        \[ \mathbb{E}[\tau_{\delta}] + \tau_1 \le  \mathbb{E} \left[  \frac{\tau_1}{z^*}\lrp{1 + \frac{2\alpha(\epsilon') + \tilde{\epsilon}}{\epsilon_1 C(\mu,\hat{p},\eta)} } \right] + T(\epsilon') + T(\tilde{\epsilon}) + 1 + o\lrp{\log\frac{1}{\delta}}.\]
        Here, for an instance-dependent constant $L_\mu$, $c_{\epsilon_1} := \epsilon_1 L_{\mu} ({1}/{\eta}-1) $ and $c_{\epsilon',\tilde{\epsilon}}:= \lrp{2 \alpha(\epsilon') + \tilde{\epsilon}}/{\eta}$, where $\alpha(\epsilon')$ is a continuous function of $\epsilon'$ such that $\alpha(\epsilon'){\rightarrow} 0 $ as $\epsilon' \rightarrow 0$. Moreover, $C_1^a(\cdot)$ and $C_2^a(\cdot)$ are functions such that $C^a_1(c) \rightarrow C^a_1(0)$ and $C^a_2(c) \rightarrow C^a_2(0)$ as $c\rightarrow 0$, and $C^a_1(0) = C^a_2(0)$. $C(\mu,\hat{p},\eta)$ is a non-negative function of $\mu, \hat{p}$ and $\eta$ that is strictly positive for $\eta > 0$.
    \end{theorem}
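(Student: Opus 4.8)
The plan is to adapt the Track-and-Stop sample-complexity template to the offline-online setting, writing $\mathbb E[\tau_\delta]$ as a deterministic ``ideal'' crossing time plus fluctuation terms, while paying special attention to the possibility that the optimal online allocation to some arm vanishes. Conditioning on the offline data (which fixes $\hat p$, $\tau_1$, and hence $z^*=z^*(\mu,\hat p)$ and $C(\mu,\hat p,\eta)$), I would prove the bound pointwise and then take the outer expectation over the offline randomness to produce the stated $\mathbb E[\tfrac{\tau_1}{z^*}(\cdots)]$. Writing $\mathbb E[\tau_\delta]=\sum_{T\ge1}\mathbb P(\tau_\delta\ge T)$ and splitting the sum at the ideal horizon, the target inequality is
\[ \mathbb E[\tau_\delta] \le T(\epsilon') + T(\tilde\epsilon) + T_0(\delta) + \sum_{T\ge1}\mathbb P(\mathcal E_T^c) + 1, \]
where $\mathcal E_T$ is the event that the empirical means stay close to $\mu$ up to horizon $T$, the deterministic times $T(\epsilon'),T(\tilde\epsilon)$ account for the two tolerance levels before concentration is in force, and $T_0(\delta)$ is the first time the GLRT threshold $\beta(\tau_1+\cdot,\delta)$ would be crossed under exact tracking. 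This reduces the problem to (i) bounding $T_0(\delta)$ and (ii) showing $\sum_T\mathbb P(\mathcal E_T^c)$ is a $\delta$-independent constant, hence $o(\log\tfrac1\delta)$.

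Next I would control the tracking. The sampling rule $A_t \leftarrow \argmax_a w_a(t)/N_a(t-1)$ keeps the realized counts within $K/t$ of the running average $w(t)$ (Appendix~\ref{sec:tracking}), and the batched re-solving of $\mathbf{P2}(\hat\mu(t))$ at times $r^2K$, together with the injected uniform exploration $\mathbf U_K$, forces $w(t)$ toward the maximizer $w^*(\mu,z^*)$ on the good event. Here the uniqueness and joint continuity of $w^*$ and of $z^*$ in $\mu$ (Lemma~\ref{lem:cont.uniq}, via the $\mathbf{P2}$--$\mathbf{P3}$ equivalence of Lemma~\ref{lem:equivalence}) make the convergence quantitative: using the joint continuity of each $G_a$ (Lemma~\ref{lem:contGj}), for $t\ge T(\epsilon')$ the plugged-in per-arm statistic $\min_{a\ne1}G_a(\hat\mu(t),\hat z_t,\{N_b(t)/t\})$ stays within $\alpha(\epsilon')$ of $\min_{a\ne1}G_a(\mu,\hat z_t,w^*)$, where $\hat z_t=\tau_1/(\tau_1+t)$ and $\alpha(\epsilon')\to0$ as $\epsilon'\to0$.

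The crux, and the hardest step, is bounding $T_0(\delta)$, which is the content of the helper lemma invoked as Lemma~\ref{lem:T0delta.exp.complexity}. The anchor is the observation that at $t^*=\tau_1/z^*-\tau_1$ one has $\hat z_{t^*}=z^*$, and by tightness of the $\mathbf{P3}$ constraint at $z^*$ (Lemma~\ref{lem:monotonicVmuz} guarantees a unique crossing) the quantity $(\tau_1+t)\min_{a}G_a(\mu,\hat z_t,w^*)$ equals $\log\tfrac1\delta+\log\log\tfrac1\delta$ exactly at $t=t^*$. Since $\beta(\tau_1+t,\delta)$ exceeds this right-hand side only by $o(\log\tfrac1\delta)$, and the tracked statistic is smaller by at most the cumulative slack $2\alpha(\epsilon')+\tilde\epsilon$, inverting the crossing condition inflates the time by the multiplicative factor $1+(2\alpha(\epsilon')+\tilde\epsilon)/(\epsilon_1 C(\mu,\hat p,\eta))$. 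The delicate point is that the local rate $C(\mu,\hat p,\eta)$ at which the constraint margin grows in $z$ can degenerate exactly when some arm $a$ sits at the boundary between needing and not needing online samples, i.e.\ when $\tau_1/(\log\tfrac1\delta+\log\log\tfrac1\delta)$ hits the common value $C_1^a(0)=C_2^a(0)$ identified in the discussion. Excluding the shrinking interval $(C_1^a(c_{\epsilon_1}),C_2^a(c_{\epsilon',\tilde\epsilon}))$ in~\eqref{eq:cond_tau_1}, with the instance constant $L_\mu$ governing how $c_{\epsilon_1}$ scales that window, keeps the margin bounded below by $\epsilon_1 C(\mu,\hat p,\eta)>0$, so the inversion is Lipschitz-stable and yields the stated correction. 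This is precisely where the online-only analysis, in which every $w_a^*>0$, fails to transfer.

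Finally I would dispatch the fluctuation term. Exponential concentration of the empirical SPEF means (sub-Gaussian-type tails after the $\dot b^{-1}$ reparametrization) makes $\mathbb P(\mathcal E_T^c)$ decay geometrically in $T$, so $\sum_{T\ge1}\mathbb P(\mathcal E_T^c)$ is a finite, $\delta$-independent constant; this is the role of Lemma~\ref{lem:compGoodSet}, and it is absorbed into the additive and $o(\log\tfrac1\delta)$ terms. Assembling the decomposition, taking the conditional expectation, and then the outer expectation over the offline data gives the non-asymptotic bound; the asymptotic matching with the lower bound $\tau_1/z^*$ of Theorem~\ref{thm:Lower_bound} then follows by sending $\epsilon',\tilde\epsilon\to0$ (and $\epsilon_1$ appropriately), the exclusion intervals shrinking to the single transition point $C_1^a(0)=C_2^a(0)$.
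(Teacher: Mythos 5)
Your proposal is correct and follows essentially the same route as the paper's proof: the same decomposition $\mathbb{E}[\tau_\delta]\le T(\epsilon')+T(\tilde\epsilon)+T_0(\delta)+1+\sum_T\mathbb{P}(\mathcal E_T^c)$ via the good event on empirical means, the same tracking lemma to force $N(t)/t$ within $O(\epsilon')$ of $w^*$, the same inversion of the crossing condition for $T_0(\delta)$ using the derivative of $G_a(\mu,z,w^*)/z$ in $1/z$ (lower-bounded by $\epsilon_1 C(\mu,\hat p,\eta)$, with the exclusion interval in~\eqref{eq:cond_tau_1} handling arms whose optimal weights vanish), and the same Chernoff-based absorption of $\sum_T\mathbb{P}(\mathcal E_T^c)$ into the $o(\log\frac{1}{\delta})$ term. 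The only cosmetic difference is your gloss on the roles of $T(\epsilon')$ and $T(\tilde\epsilon)$ (in the paper the latter absorbs the $c\log\log(t+\tau_1)+Kd$ overhead of the threshold $\beta$, not a concentration burn-in), which does not affect the argument.
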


    \textbf{Interpreting Theorem \ref{thm:mainthm}} We restate some discussion points from the main paper as to how to interpret Theorem \ref{thm:mainthm}. We first point out that the RHS is a function of empirical proportions $\hat{p}$ observed. Our upper bound is a function of empirical proportion observed in the offline samples. The outer expectation averages over all offline realizations. So we now show that this achieves the optimal asymptotic rates in the simple case when $\hat{p}=p$ is fixed for all offline realizations by the offline policy.
    
    Consider a sequence of bandit instances $(\mu,p,\tau_1(\delta))$ with $\delta \rightarrow 0$ such that the offline proportions is fixed at $p$. Further, let $\liminf \limits_{\delta \rightarrow 0} z^{*}(\mu,p) > \eta$. 
    
     1) Suppose (Condition $1$) $\tau_1(\delta): \liminf \limits_{\delta \rightarrow 0} \frac{\tau_1(\delta)}{\log \frac{1}{\delta} + \log \log \frac{1}{\delta}} > C_2^a(0) $.  In other words, offline samples available is \textit{just sufficient} for any arm $a$ to be not sampled in the online phase according to the lower bound problem. Then there exists a $\delta_0: \forall \delta < \delta_0(\epsilon',\tilde{\epsilon})$ and small enough $\epsilon', \tilde{\epsilon}$: 
     $ \frac{\tau_1(\delta)}{\log \frac{1}{\delta} + \log \log \frac{1}{\delta}} >  C_2^a(c_{\epsilon',\tilde{\epsilon}})$. Then, the theorem's conditions hold.

    2) Suppose (Condition $2$) $\tau_1(\delta): \limsup \limits_{\delta \rightarrow 0} \frac{\tau_1(\delta)}{\log \frac{1}{\delta} + \log \log \frac{1}{\delta}} < C_1^a(0) $.  In other words, offline samples available is \textit{just not sufficient} for some arm $a$ to be not sampled in the online phase according to the lower bound problem. Then there exists a $\delta_0: \forall \delta < \delta_0(\epsilon_1)$ and small enough $\epsilon_1 >0$: 
     $ \frac{\tau_1(\delta)}{\log \frac{1}{\delta} + \log \log \frac{1}{\delta}} <  C_1^a(c_{\epsilon_1})$. Again, the conditions in theorem holds. 
    
If for a subset of sub optimal arms Condition $1$ occurs, and for the complement amongst the suboptimal arms Condition $2$ occurs.
One takes the minimum of $\epsilon_1, \epsilon', \tilde{\epsilon} $ that are needed for the respective conditions. Then, we first take $\delta \rightarrow 0$ we have:
 \begin{align}
           \limsup \limits_{\delta \rightarrow 0} \frac{\mathbb{E} \left[\tau_{\delta} + \tau_1(\delta) \right]}{\log (1/\delta)}  \leq  \left(1 + \frac{2\alpha(\epsilon') + \tilde{\epsilon}}{\epsilon_1 C(\mu,p,\eta)} \right) \limsup \limits_{\delta \rightarrow 0} \frac{ \frac{\tau_1}{z^*(\mu,p)} }{\log (1/\delta)}.
      \end{align}

Now, we take $\epsilon', \tilde{\epsilon} $ to $0$. Therefore, if for every sub-optimal arm if either Condition $1$ or $2$ is true, we have: 
   \begin{align}
           \limsup \limits_{\delta \rightarrow 0} \frac{\mathbb{E} \left[\tau_{\delta} + \tau_1(\delta) \right]}{\log (1/\delta)}  \leq  \limsup \limits_{\delta \rightarrow 0} \frac{ \frac{\tau_1}{z^*(\mu,p)} }{\log (1/\delta)}.
      \end{align}
      
   \textbf{Varying $p$:}      
   Suppose offline proportions are random. Consider the case when $(\mu,p,\tau_1(\delta)): \lim_{\delta \rightarrow 0} p = p^{*}~ \mathrm{a.s.}, \lim_{\delta \rightarrow 0} z^{*}(\mu,p) = \tilde{z}^{*} > \eta \mathrm{a.s.}$. Then, under same conditions on all the sub-optimal arms in the previous remark, we have: 
     \begin{align}
       \limsup \limits_{\delta \rightarrow 0} \frac{\mathbb{E} \left[\tau_{\delta} + \tau_1(\delta) \right]}{\log (1/\delta)} & \leq \limsup \limits_{\delta \rightarrow 0} \frac{\mathbb{E} \left[ \frac{\tau_1}{z^*(\mu,p)}\lrp{1 + \frac{2\alpha(\epsilon') + \tilde{\epsilon}}{\epsilon_1 C(\mu,p,\eta)} } \right]}{\log (1/\delta)} \nonumber \\
       \hfill & \leq \lrp{1 + \frac{2\alpha(\epsilon') + \tilde{\epsilon}}{\epsilon_1 C(\mu,p^{*},\eta)} } \limsup \limits_{\delta \rightarrow 0} \frac{ \left[ \frac{\tau_1}{\tilde{z}^*} \right]}{\log (1/\delta)}.
     \end{align}

    \begin{proof}[Proof of Theorem \ref{thm:mainthm}]
        In our proof we take $p$ to be the empirical proprotions $\hat{p}$. Our proof extends the analysis of track-and-stop to the current setup with access to offline data as well as tracking the proportions computed in batches, tackling additional complications due to $w^*_a$ being close to $0$ which does not happen in online problems. 
        
        Recall that $\mu$ is a SPEF bandit with arm $1$ being optimal. Fix an $\epsilon' > 0$. By upper-hemicontinuity of the set of maximizers $\tilde{w}$  in the bandit instance and uniqueness of the maximizer (uniqueness follows from Theorem \ref{th:optsol.char}  and Lemma $\ref{lem:equivalence}$.), there exists $\zeta(\epsilon') \le \min\limits_{a \ge 2}\Delta_a/4$ such that the set of means,
        \[ \mathcal I_{\epsilon'} := [\mu_1 - \zeta(\epsilon'), \mu_1 + \zeta(\epsilon')] \times \dots \times [\mu_K - \zeta(\epsilon'), \mu_K + \zeta(\epsilon')] ,  \]
        is such that for all bandit instances, $\mu' \in \mathcal I_{\epsilon'}$
\[ \lVert w^{*}(\mu', z^{*}(\mu')) - w^{*}(\mu, z^{*}(\mu))   \rVert_{\infty} \leq \epsilon' \]
        Recall that for $a\ne 1$, 
        \[ G_a(\mu,w,z) := \inf\limits_x \lrset{ (zp_1 + (1-z)w_1) \KL(\mu_1, x) + (z p_a + (1-z)w_a) \KL(\mu_a, x) }. \]
        From joint continuity of $G_a$ (Lemma \ref{lem:contGj}), it follows that for $\mu' \in \mathcal I_{\epsilon'}$, $\forall z\in [0,1]$ and for all $w' \in\Sigma_K$ such that  $$ \| w' - w^{*}(\mu, z^{*}(\mu)) \| \le 4\epsilon',$$

         \begin{equation}\label{eq:Gabound}
            \max\limits_{a\ne 1}| G_a\lrp{\mu', z, w'} - G_a\lrp{\mu, z, w^{*}(\mu, z^{*}(\mu))} |  \le \alpha(\epsilon'),
        \end{equation}
        
        for some $\alpha(\cdot)$ such that $\alpha(\epsilon')\rightarrow 0$ as $\epsilon' \rightarrow 0$.

        Furthermore, recall that $\hat{\mu}(t)$ is the empirical distribution for the total  $\tau_1+t$ offline and online samples available with the algorithm after $t$ online trials. Furthermore, for each arm $a$, recall that $N_a(t)$ denotes the total  online samples allocated to arm $a$ till time  $t$. Observe that whenever $\hat{\mu}(n) \in \mathcal I_{\epsilon'}$, the empirically-best arm is arm $1$. 
         
        Next, let $T\in \mathbb{N}$, $h(T) = T^{\frac{1}{4}}$, and $h_1(T) = T^{\frac{1}{2}}$. Define the `good set' as the event  
        \[ \mathcal E_T(\epsilon'):= \bigcap\limits_{t= h(T)}^T \lrp{\hat{\mu}(t) \in \mathcal I_{\epsilon'}}. \]
        
        Let $\hat{z}_t$ denote the fraction of offline samples at time $t$, i.e.,  
        \[ \hat{z}_t := \frac{\tau_1}{\tau_1 + t}. \]
        
        On $\mathcal E_T$, for $t \ge h(T)$, the empirically-best arm is arm $1$ and the stopping statistic is 
        \begin{align*} 
            \min\limits_{b\ne 1} Z_{1,b}({\bf N}(t))
            &= \min\limits_{b\ne 1} ~ \inf\limits_{x} ~ \lrset{(N^o_1+N_1(t)) \KL(\hat{\mu}_1(t), x) + (N^o_a + N_a(t))\KL(\hat{\mu}_a(t), x) }\\
            &= ( \tau_1 + t) ~ \min\limits_{b\ne 1} ~ G_b\lrp{\hat{\mu}(t), \hat{z}_t, \lrset{\frac{N_a(t)}{t}}_a},
        \end{align*} 
        where, recall that for $\nu\in\mathcal S^K$ with arm $1$ being the unique best arm,
        $$ G_b(\nu,z,w) = \inf\limits_{x} \lrset{(zp_1 + (1-z)w_1) \KL(\nu_1,x) + (zp_b + (1-z)w_b) \KL(\nu_b,x) } .$$
        
        From Lemma \ref{lem:emp_prop_close}, there exists $T_{\epsilon'}$ such that for all $T\ge T_{\epsilon'}$ and $t\ge h(T)$, on $\mathcal E_T$ the empirical fractions of online sample is close to  $w^{*}(\mu, z^*(\mu))$, i.e.,

         \[  \left\| \lrset{\frac{N_a(t)}{t}}_a - w^{*}(\mu, z^{*}(\mu)) \right\| \le 4\epsilon'.  \]

        Using this with \eqref{eq:Gabound}, we get that for $T\ge T_{\epsilon'}$ and $t\ge h(T)$, on $\mathcal E_T$,

       \[  \max\limits_{a\ne 1} ~ \left| G_a\lrp{\hat{\mu}(t), z, \lrset{\frac{N_b(t)}{t}}_b} - G_a\lrp{\mu,z, w^{*}(\mu, z^{*}(\mu)} \right| \le \alpha(\epsilon'), \quad \forall z\in [0,1]. \]
       
        Recall that $\alpha(\epsilon') \rightarrow 0$ as $\epsilon' \rightarrow 0$.
Then, for $T \ge T_{\epsilon'}$, on the set $\mathcal E_T$, \begin{align*}
        \min ({\tau_\delta},T) &\le h(T) + \sum\limits_{t= h\lrp{T}+1}^{ T} {\bf 1}\lrp{ \tau_\delta \ge t}\\
        &\le  h(T) + \sum\limits_{t=h(T)+1}^{T} {\bf 1}\lrp{  \min \limits_{b \ne i^*(t)} Z_{i^*(t),b}({\bf N}(t)) \le \beta(t+\tau_1,\delta) }\\
        & = h(T) + \sum\limits_{t=h(T)+1}^{T} {\bf 1}\lrp{  (\tau_1 + t) \min\limits_{b\ne 1} G_b\lrp{ \hat{\mu}(t),\hat{z}_t, \lrset{\frac{N_a(t)}{t}}_a } \le \beta(t+\tau_1,\delta) }\\
        &\le h(T) + \sum\limits_{t=h(T)+1}^T {\bf 1}\lrp{  \min\limits_{a\ne 1} ~G_a\lrp{\mu, \hat{z}_t, w^*(\mu,z^*(\mu))} - \alpha(\epsilon') \le \frac{\hat{z}_t}{\tau_1} \beta(t+\tau_1,\delta)}\\
        & \le h(T)  + T_0(\delta) - h(T) + 1,
        \end{align*}
        where $T_0(\delta)$ is defined as 
        \begin{equation} \label{eq:T0delta}
            T_0(\delta) := \inf\lrset{t\in \mathbb{N}: ~  \min\limits_{a\ne 1} ~G_a\lrp{\mu, \hat{z}_t, w^{*}(\mu, z^{*}(\mu))} - \alpha(\epsilon') \ge \frac{\hat{z}_t}{\tau_1} \beta(t+\tau_1,\delta)}. 
        \end{equation}

        Thus for $T > T_{\epsilon'} + T_0(\delta) + 1$, on the good set $\mathcal{E}_T$ we have that $\tau_\delta \le T_0(\delta) + 1$. Hence, $\tau_\delta > T_{\epsilon'} + T_0(\delta) + 1 $ implies  the complement of good set. Using this,
        \begin{align*} 
        \mathbb{E}_{\mu}\lrs{\tau_\delta} &= \sum\limits_{T=0}^\infty \mathbb{P}_\mu\lrp{ \tau_\delta \ge T } \\
        &\le  T_{\epsilon'} + T_0(\delta) + 1 + \sum\limits_{T =  T_{\epsilon'}+ T_0(\delta) + 2 }^{\infty} \mathbb{P}\lrp{ \tau_\delta \ge T }\\
        &\le  T_{\epsilon'} + T_0(\delta) + 1 + \sum\limits_{T = 1 }^{\infty} \mathbb{P}\lrp{ \mathcal E^c_T }.
        \end{align*}
        Adding $\tau_\delta$ on both sides to get the total number of samples (including offline) and dividing by $\log\frac{1}{\delta}$, we get 
        \[
        \tau_1 +  \mathbb{E}_\mu\lrs{\tau_\delta} \le T_{\epsilon'} + 1 + \mathbb{E} [T_0(\delta) + \tau_1] + \sum \limits_{T=1}^\infty \mathbb{P}(\mathcal E^c_T) .    
        \]
        
        Under the conditions on $\tau_1$, substituting from Lemma \ref{lem:T0delta.exp.complexity} and using Lemma \ref{lem:compGoodSet} to bound the error probabilities ${\cal E}_T^c$, we have:
        
        \[
        \tau_1 +  \mathbb{E}_\mu\lrs{\tau_\delta} \le T_{\epsilon'} + T(\tilde{\epsilon}) + 1 + \mathbb{E} \left[ \frac{\tau_1}{z^*}\lrp{1 + \frac{2\alpha(\epsilon') + \tilde{\epsilon}}{\epsilon_1 C(\mu,p,\eta)} } \right] + o\lrp{\log \frac{1}{\delta}}.    
        \]
        This proves the result.
        \end{proof}

    \begin{lemma}\label{lem:T0delta.exp.complexity}
    Suppose $\mu$ is such that $ 1 \ge z^*(\mu) > \eta$, for some $\eta > 0$. Let $\epsilon'>0$ be a constant used in (\ref{eq:T0delta}). Let $\tilde{\epsilon}>0, \epsilon_1>0 $ be constants. Let $\alpha(\epsilon')$ be as in (\ref{eq:T0delta}). Let 
    \[c_{\epsilon_1} = \epsilon_1 L_{\mu} \lrp{\frac{1}{\eta}-1},\quad c_{\epsilon_2}= \frac{2 \alpha(\epsilon') + \tilde{\epsilon}}{\eta},\]
    where $L_{\mu} = \max (\KL(\mu_1,\mu_a), \KL (\mu_a,\mu_1))$ and 
    $\epsilon_2=(\epsilon',\tilde{\epsilon})$. Then, there exists functions $C_1^a(\cdot)$ and $C_2^a(\cdot)$ such that $C^a_2(c) \rightarrow C^a_2(0)$ and $C^a_1(c) \rightarrow C^a_1(0)$ as $c\rightarrow 0$. Moreover,  $C^a_1(0) = C^a_2(0)$. Furthermore, suppose 
    \[\frac{\tau_1}{\log \frac{1}{\delta} + \log \log \frac{1}{\delta}} \notin (C_1^a(c_{\epsilon_1}),C_2^a(c_{\epsilon_2})), \]  then for bandit instance $\mu$, $T \ge T_{\epsilon'} + T_0(\delta) + 1$, on the set  $\mathcal E_T$, there exists $T(\tilde{\epsilon}) > 0$ such that 
$$T_0(\delta) + \tau_1 \le T(\tilde{\epsilon}) + \frac{\tau_1}{z^*}\lrp{1 + \frac{2\alpha(\epsilon') + \tilde{\epsilon}}{\epsilon_1 C(\mu,p,\eta)} } , \quad a.s.$$
    \end{lemma}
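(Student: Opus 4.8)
The plan is to bound $T_0(\delta)$ from above by exhibiting an explicit time $t_1$ at which the defining inequality in \eqref{eq:T0delta} already holds, and then to read off $\tau_1+t_1$. Write $L := \log\frac1\delta + \log\log\frac1\delta$, and recall that $z^*=z^*(\mu)$ satisfies $V(\mu,z^*)=\min_{a\ne1}G_a(\mu,z^*,w^*)=\frac{z^*}{\tau_1}L$, where $w^*=w^*(\mu,z^*)$ is the (unique, by Theorem~\ref{th:optsol.char} and Lemma~\ref{lem:equivalence}) maximizer; set $t^*:=\frac{\tau_1}{z^*}-\tau_1$ so that $\hat{z}_{t^*}=z^*$ and $\tau_1+t^*=\frac{\tau_1}{z^*}$. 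The first step is to note that, multiplying through by $(\tau_1+t)$ and using $\hat{z}_t(\tau_1+t)=\tau_1$, the condition in \eqref{eq:T0delta} is equivalent to $(\tau_1+t)\lrp{\Phi(\hat{z}_t)-\alpha(\epsilon')}\ge\beta(t+\tau_1,\delta)$, where $\Phi(z):=\min_{a\ne1}G_a(\mu,z,w^*)$. This collapses the coupling between $z$ and $t$ into a single index on the left that is monotone in $t$.

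Next I would control $\Phi$ as $z$ decreases below $z^*$. By the envelope theorem the infimizer $x_{1,a}$ in $G_a$ lies in $[\mu_a,\mu_1]$, so $\lvert\partial_z G_a(\mu,z,w^*)\rvert\le \KL(\mu_1,\mu_a)+\KL(\mu_a,\mu_1)$, a quantity proportional to $L_\mu$; hence $\Phi$, being a minimum of such functions, is Lipschitz in $z$ with constant proportional to $L_\mu$, giving $\Phi(z^*(1-\rho))\ge\frac{z^*}{\tau_1}L-L_\mu z^*\rho$ for any $\rho\in(0,1)$. I then take the candidate $t_1$ defined by $\hat{z}_{t_1}=z^*(1-\rho)$, i.e.\ $\tau_1+t_1=\frac{\tau_1}{z^*}\cdot\frac1{1-\rho}$, with $\rho$ of order $\frac{2\alpha(\epsilon')+\tilde{\epsilon}}{\epsilon_1 C(\mu,\hat{p},\eta)}$.

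The crux is verifying $(\tau_1+t_1)\lrp{\Phi(\hat{z}_{t_1})-\alpha(\epsilon')}\ge\beta(t_1+\tau_1,\delta)$ at this $t_1$. Here I split the suboptimal arms according to hypothesis \eqref{eq:cond_tau_1}: for an arm with $\frac{\tau_1}{L}>C_2^a(c_{\epsilon_2})$ (Condition~1, offline data over-sufficient, since informally $C_2^a(0)=1/G_a(\mu,1,w^*)$ is the scaling at which $a$'s offline data just meets the threshold) the constraint for $a$ holds with a strict slack that survives the move from $z^*$ to $z^*(1-\rho)$ and the budgets $\alpha(\epsilon'),\tilde{\epsilon}$; for an arm with $\frac{\tau_1}{L}<C_1^a(c_{\epsilon_1})$ (Condition~2) the online index for $a$ grows in $t$ at a rate strictly exceeding the threshold growth, again by a margin. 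I would \emph{define} $C(\mu,\hat{p},\eta)$ to be the minimum over arms of this guaranteed margin, and then verify, using $z^*>\eta$ together with the widenings $c_{\epsilon_1}=\epsilon_1 L_\mu\lrp{\frac1\eta-1}$ and $c_{\epsilon_2}=\frac{2\alpha(\epsilon')+\tilde{\epsilon}}{\eta}$, that $C(\mu,\hat{p},\eta)>0$ and that the chosen $\rho$ suffices to absorb $L_\mu z^*\rho+\alpha(\epsilon')$. The gap between $\beta(t_1+\tau_1,\delta)$ and $L$, which is only $O(\log\log\frac1\delta)=o(\log\frac1\delta)$, together with the $\tilde{\epsilon}$ slack, is collected into the additive term $T(\tilde{\epsilon})$.

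Finally, since the inequality holds at $t_1$ we get $T_0(\delta)\le t_1$, and $\tau_1+t_1=\frac{\tau_1}{z^*}\cdot\frac1{1-\rho}\le\frac{\tau_1}{z^*}\lrp{1+\frac{2\alpha(\epsilon')+\tilde{\epsilon}}{\epsilon_1 C(\mu,\hat{p},\eta)}}$ for small $\rho$, which with the $T(\tilde{\epsilon})$ correction gives the claim. The main obstacle I anticipate is the uniform strict-margin step: showing that under the simultaneous per-arm interval-avoidance \eqref{eq:cond_tau_1}, with some arms in Condition~1 and others in Condition~2, the single perturbed point $z^*(1-\rho)$ keeps \emph{every} constraint feasible. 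This forces one to track how decreasing $z$, which shifts weight from the offline proportions $\hat{p}$ onto the fixed online proportions $w^*$, can temporarily hurt an over-sufficient arm, and to certify that the margin $\epsilon_1 C(\mu,\hat{p},\eta)$ is genuinely positive and correctly scaled relative to the widenings $c_{\epsilon_1},c_{\epsilon_2}$.
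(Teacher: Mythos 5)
Your skeleton matches the paper's: reduce $T_0(\delta)$ to $T(\tilde{\epsilon})$ plus the first time the overshoot inequality holds, exhibit an explicit $t_1$ via a perturbed value of $\hat{z}_t$, and absorb the $\beta$-versus-$\log\frac{1}{\delta}$ gap into $T(\tilde{\epsilon})$. But the step you call the crux is exactly where your argument breaks, and the tool you propose for it cannot work. Your Lipschitz bound $\Phi(z^*(1-\rho)) \ge \frac{z^*}{\tau_1}L - L_\mu z^* \rho$ only controls how much $\Phi$ can \emph{fall}; it gives no positive drift for the quantity $\Phi(z) - \frac{z}{\tau_1}L$ that must become $\ge \alpha(\epsilon') + \tilde{\epsilon}$ as $z$ decreases below $z^*$. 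Tracing your inequality through, the verification at $t_1$ reduces to $\rho\lrp{L - \tau_1 L_\mu} \gtrsim \frac{\tau_1 \alpha(\epsilon')}{z^*}$, which requires $\tau_1 < L/L_\mu$ — false precisely in the interesting regime where offline data is close to sufficient (there $\tau_1 \approx L/G_a(\mu,1,w^*) \ge L/L_\mu$). The paper instead works with $G_a(\mu,z,w^*)/z$ as a function of $1/z$ and computes its derivative explicitly: it equals $w^*_1\KL(\mu_1,x_{1,a}) + w^*_a\KL(\mu_a,x_{1,a})$, a quantity depending \emph{only on the online weights} and not on $\tau_1$ or $\hat{p}$. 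Lower-bounding this derivative by $\epsilon_1 C(\mu,\hat{p},\eta)$ is what converts the required overshoot $\frac{2(\alpha(\epsilon')+\tilde{\epsilon})}{z^*}$ into the multiplicative factor $1 + \frac{2\alpha(\epsilon')+\tilde{\epsilon}}{\epsilon_1 C(\mu,\hat{p},\eta)}$ on $1/z^*$. You never identify or compute this derivative, so the margin $C(\mu,\hat{p},\eta)$ remains undefined in your write-up.

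Your case analysis is also organized along the wrong axis. The paper splits arms by whether $\max\lrset{w^*_1, w^*_a} \ge \epsilon_1$: for large-weight arms the derivative bound holds unconditionally (Lemma~\ref{lem:largeweights}), and the interval-avoidance hypothesis~\eqref{eq:cond_tau_1} is invoked \emph{only} for small-weight arms, where Lemma~\ref{lem:smallweights} shows such an arm must already satisfy the overshoot condition at $z=1$ (and, since $G_a(z)/z$ is increasing in $1/z$, forever after), so it never realizes the minimum and can be discarded. Your split by which side of the interval $\tau_1/L$ falls on conflates these two roles: for an arm in your ``Condition 2'' ($\tau_1/L < C_1^a$), the claimed ``rate strictly exceeding the threshold growth'' is precisely the derivative bound, and that bound is \emph{not} implied by the condition on $\tau_1$ alone — it can fail when both $w^*_1$ and $w^*_a$ are small, which is the degenerate case the whole hypothesis exists to exclude. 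As written, the proposal is a plan with the central estimate asserted rather than proved, and the one concrete estimate it does supply (the $L_\mu$-Lipschitz bound) points in a direction that does not close.
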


    \begin{proof}
    Consider $\delta > 0$. Recall that $\hat{z}_t = \frac{\tau_1}{t+\tau_1}$ denotes the observed fraction of the offline data at time $t$. For non-negative constants $c$ and $d$ recall that  
    $$\beta(T+\tau_1,\delta) = \log\frac{1}{\delta} + \log\log\frac{1}{\delta} + c\log\log(T+\tau_1)+Kd,$$ and $T_0(\delta)$ (defined in (\ref{eq:T0delta})) is the smallest $t\in\mathbb N$ such that 
        \begin{equation*} 
             - \alpha(\epsilon') \ge \frac{\hat{z}_t}{\tau_1}\lrp{\log\frac{1}{\delta} + \log\log\frac{1}{\delta} + c\log\log(T+\tau_1)+Kd} ~ - ~ \min\limits_{a\ne 1} ~G_a\lrp{\mu, \hat{z}_t, {w}^*({z}^*(\mu), \mu)}.
        \end{equation*}
    Let $T(\tilde{\epsilon})$ be the smallest time $t$ such that $c\log\log(t+\tau_1) + K d \le \tilde{\epsilon} (t+\tau_1) $. Then, $T_0(\delta)$ is at most $T(\tilde{\epsilon})$ plus the smallest time $t$ such that 
        \begin{equation}\label{eq:cond1T0delta}
            - \alpha(\epsilon') - \tilde{\epsilon} \ge \frac{\hat{z}_t}{\tau_1} \lrp{\log\frac{1}{\delta} + \log\log\frac{1}{\delta}}  ~ - ~ 
\min\limits_{a\ne 1} ~G_a\lrp{\mu, \hat{z}_t, {w}^*({z}^*(\mu), \mu)}.
        \end{equation}
    Call this time $T_1(\delta)$. Moreover, recall that $z^{*}$ is the maximum $z$ satisfying
    \begin{equation}\label{eq:defz}
    V(\mu,z) \ge \frac{z}{\tau_1}\lrp{\log \frac{1}{\delta} + \log\log\frac{1}{\delta}}.  \end{equation}
    In \eqref{eq:cond1T0delta}, r.h.s. is a continuous function of $\hat{z}_t$. This follows from the joint-continuity of $G_a$ in its arguments (Lemma \ref{lem:contGj}). 
    
    Furthermore, r.h.s. stays negative for all $z \in [0,z^{*}]$. This is because $G_a(\mu, z, w)$ is a concave function of $z$ which is also non-negative. The first term on the r.h.s. in (\ref{eq:cond1T0delta}) is $0$ at $z=0$ and does not intersect the second term till $z^*$. Thus, the second term is greater than the first term for  $z \in [0,z^*]$.
    
    Now, to argue that $T_1(\delta)$ is not too large compared to $t^*$, we will show that at time $T_1(\delta)$, the fraction of offline data is not too small compared to the optimal fraction $z^*$. To this end, finding the smallest $t$ such that (\ref{eq:cond1T0delta}) holds is same as finding the largest $z$ such that the following holds:
    
    \begin{equation}\label{eq:cond1zdelta}
        - \alpha(\epsilon') - \tilde{\epsilon} \ge \frac{z}{\tau_1} \lrp{\log\frac{1}{\delta} + \log\log\frac{1}{\delta}}  ~ - ~
\min\limits_{a\ne 1} ~G_a\lrp{\mu, z, {w}^*({z}^*(\mu), \mu)}.
    \end{equation}
    
    Since $\mu$ has a unique optimal arm, the set of optimizers $w^*$ is unique. This follows from Theorem \ref{th:optsol.char}  and Lemma $\ref{lem:equivalence}$.

    With this and a few rearrangements, the above constraint re-writes as 
    \begin{equation*}
        -\frac{ \alpha(\epsilon') + \tilde{\epsilon}}{z} \ge \frac{1}{\tau_1} \lrp{\log\frac{1}{\delta} + \log\log\frac{1}{\delta}}  ~ - ~  \min\limits_{a\ne 1} ~\frac{G_a\lrp{\mu, z, w^*}}{z}.
    \end{equation*}
    
    {We show that largest  $z$ satisfying the above is at least $ z^*(1-k(\epsilon', \tilde{\epsilon}))$, for some function $k(\epsilon',\tilde{\epsilon})$ such that $k(\epsilon', \tilde{\epsilon}) \rightarrow 0$ as $(\epsilon', \tilde{\epsilon}) \rightarrow (0,0)$. } Let us restrict to $z\ge \frac{z^*}{2}$. L.h.s. above then is at least 
    \[ -\frac{2(\alpha(\epsilon') + \tilde{\epsilon})}{z^*}. \]
    This tightens the constraint giving a lower bound on the required $z$. Constraint now becomes 
    \[-\frac{2(\alpha(\epsilon') + \tilde{\epsilon})}{z^*} \ge \frac{1}{\tau_1} \lrp{\log\frac{1}{\delta} + \log\log\frac{1}{\delta}}  ~ - ~  \min\limits_{a\ne 1} ~\frac{G_a\lrp{\mu, z, w^*}}{z}.\]
    The above re-writes as 
    \begin{align}\label{eq:overshoot}
    \frac{2(\alpha(\epsilon') + \tilde{\epsilon})}{z^*} \le -\frac{1}{\tau_1} \lrp{\log\frac{1}{\delta} + \log\log\frac{1}{\delta}}  ~ + ~  \min\limits_{a\ne 1} ~\frac{G_a\lrp{\mu, z, w^*}}{z}.
    \end{align}
    Adding and subtracting the second term in the r.h.s., evaluated at $z^*$, tightening the constraint by observing that 
    \[  \min\limits_{a\ne 1} ~\frac{G_a\lrp{\mu, z^*, w^*}}{z^*} -\frac{1}{\tau_1} \lrp{\log\frac{1}{\delta} + \log\log\frac{1}{\delta}} \ge 0, \]
    the constraint becomes 
    \[\frac{2(\alpha(\epsilon') + \tilde{\epsilon})}{z^*} \le  \min\limits_{a\ne 1} \lrp{\frac{G_a\lrp{\mu, z, w^*}}{z} -  ~\frac{G_a\lrp{\mu, z^*, w^*}}{z^*}}.\]    

    Now, recall that 
    \[ G_a(\mu,z,w^*) = \inf\limits_{x}  \lrset{ (zp_1 + (1-z)w^*_1)\KL(\mu_1,x) + (zp_a + (1-z)w^*_a)\KL(\mu_a, x) }. \]
    Let us denote the infimizer above by $x_{1,a}$. By computation, it is not hard to see that 
     \begin{align}
    \frac{\partial {G_a(\mu,z, w^*)}/{z}}{\partial {1}/{z}} & = {w^*_1}\KL(\mu_1, x_{1,a}) + {w_a^*}\KL(\mu_a, x_{1,a})\label{eq:derivinvz}.
    \end{align}
    Suppose for some $\epsilon_1 > 0$, we get a lower bound on the derivative above, say  $\epsilon_1{C(\mu,p,\eta)} > 0$ (to be proven later), that is independent of $\delta$, then
\[ \frac{G_a\lrp{\mu, z, w^*}}{z} -  ~\frac{G_a\lrp{\mu, z^*, w^*}}{z^*} \ge \lrp{\frac{1}{z}-\frac{1}{z^*}}\epsilon_1 C(\mu,p,\eta).\]
    Thus, the required $z$ is at least the largest $z$ satisfying 
    \[ \frac{2(\alpha(\epsilon') + \tilde{\epsilon})}{z^*} \le \lrp{\frac{1}{z}-\frac{1}{z^*}}\epsilon_1 C(\mu,p,\eta), \]
    which equals
    \[ \frac{z^*}{1+ \frac{2(\alpha(\epsilon')+ \tilde{\epsilon
    })}{\epsilon_1 C(\mu,p,\eta)}} =: z_\delta. \]
    This is the required lower bound on largest $\hat{z}_t$ satisfying (\ref{eq:cond1zdelta}), also giving an upper bound on $T_1(\delta)$ as below:
    \[ T_1(\delta) \le \tau_1\lrp{\frac{1}{z_\delta} - 1}. \]
    
    Thus, $T_0(\delta) \le T(\tilde{\epsilon}) + T_1(\delta) \leq T(\tilde{\epsilon}) + t_1,$ where
    \[ t_1 = \frac{\tau_1}{z^*}\lrp{1 + \frac{2\alpha(\epsilon') + \tilde{\epsilon}}{\epsilon_1 C(\mu,p,\eta)} }  - \tau_1. \]
    giving the desired bound on $T_0(\delta)$.
    
    We now prove the existence of the bound $C(\mu,p,\eta)$. Recall that $w^*$ in the above discussion refers to the optimal online proportions for $z^*$. 
    
    \textbf{Case 1 - $\max(w^*_1,w^*_a) > \epsilon_1 $}
    
    To this end, consider an arm $a\ne 1$, such that $\max\lrset{w^*_1, w^*_a} \ge \epsilon_1$. Lemma \ref{lem:largeweights} then gives the required bound.
    
    \textbf{Case 2 - $\max(w^*_1,w^*_a) < \epsilon_1 $:}
    Suppose $w^*$ is such that $w^*_1 < \epsilon_1$ and there exists an arm $a\ne 1$ such that $w^*_a < \epsilon_1$. Then, along the sequence $\delta_n$, from Lemma \ref{lem:smallweights}, we have that arm $a$ already satisfies the overshoot condition in (\ref{eq:overshoot}) before $z^*$ is reached. Since the derivative in~\eqref{eq:derivinvz} is positive, arm $a$ continues to satisfy the overshoot condition in~\eqref{eq:overshoot}. Thus, arm $a$ will not be a candidate for minimum of $G_a$ in the constraint (\ref{eq:overshoot}). It thus suffices to get a bound on the derivative of $G_a(z)/z$ for which $w^*_1 + w^*_a \ge \epsilon_1$.
    \end{proof}           

 \subsection{Analyzing the case when $\max(w^{*}_1, w^{*}_a) > \epsilon_1$} \label{sec:largeweights}

    \begin{lemma}\label{lem:largeweights}
    Suppose $\mu$ is such that $ 1 \ge z^*(\mu) > \eta$, for some $\eta > 0$. If for $\epsilon > 0 $,  $w^*:=w^*(\mu,z^*(\mu))$ is such that for an arm $a\ne 1$, $\max\lrset{w^*_1, w^*_a} \ge \epsilon$, then there exists $C(\mu,p,\eta)$ such that  
    \[ \frac{\partial {G_a(\mu, z, w^*)}/{z}}{\partial {1}/{z}} \ge \epsilon C(\mu,p,\eta) > 0, \quad \text{for all } z\ge \frac{z^*(\mu)}{2}. \]
    \end{lemma}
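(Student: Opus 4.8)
The plan is to start from the derivative identity~\eqref{eq:derivinvz}, which expresses the quantity to be bounded as
\[ \frac{\partial \lrp{G_a(\mu,z,w^*)/z}}{\partial (1/z)} = w^*_1\KL(\mu_1, x_{1,a}) + w^*_a \KL(\mu_a, x_{1,a}), \]
where $x_{1,a}=x_{1,a}(z)$ is the infimizer from Lemma~\ref{lem:optima_SPEF}. Writing $A(z) := zp_1 + (1-z)w^*_1$ and $B(z) := zp_a + (1-z)w^*_a$, this infimizer is $x_{1,a} = (A\mu_1 + B\mu_a)/(A+B)$, a convex combination, so $x_{1,a}\in[\mu_a,\mu_1]$ with $\mu_1 - x_{1,a} = \frac{B}{A+B}\Delta_a$ and $x_{1,a}-\mu_a = \frac{A}{A+B}\Delta_a$, where $\Delta_a := \mu_1-\mu_a>0$. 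Since $p_1+p_a\le 1$ and $w^*_1+w^*_a\le 1$, I get the elementary bound $A(z)+B(z)\le 1$ for all $z$.

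The key preparatory step is to show that both coefficients are bounded away from $0$ on $[z^*/2, z^*]$. Because $(z^*,w^*)$ is the optimal, hence feasible, solution of $\mathbf{P3}$, we have $G_a(\mu,z^*,w^*)>0$ for every $a\ne 1$; and $G_a>0$ forces both $A(z^*)>0$ and $B(z^*)>0$, since if either vanished the infimizer $x_{1,a}$ would collapse onto the opposite mean and $G_a$ would be $0$. As $A$ and $B$ are affine in $z$ and nonnegative (being nonnegative combinations of $p$ and $w^*$), positivity at $z^*$ together with nonnegativity at $z^*/2$ and linearity gives $A(z),B(z)>0$ throughout $[z^*/2,z^*]$. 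I then set $b := \min_{z\in[z^*/2,z^*]} B(z) > 0$ and $\underline a := \min_{z\in[z^*/2,z^*]} A(z) > 0$; note $b\le 1$ since $b\le A+B\le 1$.

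The main argument is a symmetric two-case split on the hypothesis $\max\{w^*_1,w^*_a\}\ge \epsilon$. If $w^*_1\ge \epsilon$, I drop the second term and bound the derivative by $\epsilon\,\KL(\mu_1,x_{1,a})$. Using $\mu_1 - x_{1,a} = \frac{B}{A+B}\Delta_a \ge B\,\Delta_a \ge b\,\Delta_a$ (the first inequality from $A+B\le 1$), we get $x_{1,a}\in[\mu_a,\mu_1 - b\Delta_a]$, a compact subset of $\mathcal S$ not containing $\mu_1$. Since $\KL(\mu_1,\cdot)$ is continuous (Appendix~\ref{app:SPEF}) and strictly positive away from $\mu_1$, its minimum $C_1$ over this set is strictly positive, so the derivative is at least $\epsilon C_1$. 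The case $w^*_a\ge\epsilon$ is identical with $A,\underline a,\mu_a$ in the roles of $B,b,\mu_1$, yielding a positive $C_2$ with derivative at least $\epsilon C_2$. Taking $C(\mu,p,\eta):=\min\{C_1,C_2\}>0$ completes the proof, and the $\eta$-dependence enters because $z\ge z^*/2\ge \eta/2$ confines the entire analysis to a fixed compact range of $z$.

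The main obstacle is not any quadratic estimate but guaranteeing strict positivity without degeneracy: I must rule out configurations where $x_{1,a}$ hits $\mu_1$ or $\mu_a$ and a $\KL$ term vanishes. This is exactly the ``coefficients bounded away from zero'' step, and it is where the feasibility/optimality of $(z^*,w^*)$ for $\mathbf{P3}$ is indispensable, since it is what forbids any arm from receiving zero total (offline plus online) weight. A secondary subtlety is that $z^*$, and hence $w^*$ and the constants $b,\underline a,C_1,C_2$, depend on $\delta$; the hypothesis $z^*>\eta$ is what keeps these quantities in a compact regime and lets the final constant be stated as a function of $(\mu,p,\eta)$ alone.
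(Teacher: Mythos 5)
Your proposal follows the paper's strategy closely---the same derivative identity~\eqref{eq:derivinvz}, the same case split on which of $w^*_1,w^*_a$ exceeds $\epsilon$, and the same idea of separating the infimizer $x_{1,a}$ from $\mu_1$ (resp.\ $\mu_a$) so that the surviving $\KL$ term is bounded below---but there is a genuine gap in how you certify that the final constant is a function of $(\mu,p,\eta)$ alone. You define $b:=\min_{z\in[z^*/2,z^*]}B(z)$ and $\underline{a}:=\min_{z\in[z^*/2,z^*]}A(z)$ and deduce their positivity from feasibility of $(z^*,w^*)$ in $\mathbf{P3}$; the positivity is correct, but $b$ and $\underline{a}$ are then functions of $z^*$ and $w^*$, both of which vary with $\delta$ (through $\tau_1$ and the threshold). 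The hypothesis $z^*>\eta$ bounds $z^*$ away from $0$ but controls neither $w^*_a$ nor $1-z^*$: in the case $w^*_1\ge\epsilon$ with $p_a=0$ one has $B(z)=(1-z)w^*_a$, so $b$ (and hence $C_1=\KL(\mu_1,\mu_1-b\Delta_a)$) can degenerate along a sequence of instances with $w^*_a\to 0$ or $z^*\to 1$. Since the downstream use of the lemma (Lemma~\ref{lem:T0delta.exp.complexity} and Theorem~\ref{thm:stop_bound1}) sends $\delta\to 0$ while treating $C(\mu,\hat p,\eta)$ as fixed, a $\delta$-dependent constant does not suffice, and your closing remark that ``$z^*>\eta$ keeps these quantities in a compact regime'' does not actually establish the needed uniformity.

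The repair is precisely the paper's move: discard the online contribution and keep only the offline part of the relevant coefficient, $B(z)\ge zp_a\ge \eta p_a/2$ (symmetrically $A(z)\ge \eta p_1/2$). This is what the paper achieves by replacing $w^*_1$ by $1$, $w^*_a$ by $0$, and $1/z$ by $2/\eta$ in the formula for $x_{1,a}$, yielding explicit worst-case points $x_a,\tilde{x}_a$ that depend only on $(\mu,p,\eta)$; the price, in both your argument and the paper's, is that strict positivity of the constant then implicitly requires $p_a>0$ (resp.\ $p_1>0$). A secondary point: your positivity-by-affineness step only covers $z\in[z^*/2,z^*]$, whereas the lemma asserts the bound for all $z\ge z^*/2$; the paper's substitution $1/z\le 2/\eta$ handles the whole range at once, and is also what is needed since $z^*$ itself moves with $\delta$.
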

    \begin{proof}
        Recall that 
        \[ G_a(\mu,z,w^*) = \inf\limits_{x}  \lrset{ (zp_1 + (1-z)w^*_1)\KL(\mu_1,x) + (zp_a + (1-z)w^*_a)\KL(\mu_a, x) }.\]
        Let the infimum $x$ above be denoted by $x_{1,a}$ and recall that
        \begin{align*} x_{1,a} &= \frac{(zp_1 + (1-z)w^*_1)\mu_1 + (zp_a + (1-z)w^*_a)\mu_a}{z(p_1 + p_a) + (1-z)(w^*_1+w^*_a)}\\
        &= \frac{ (p_1 - w^*_1 + \frac{w^*_1}{z})\mu_1 + (p_a - w^*_a + \frac{w^*_a}{z})\mu_a }{p_1 + p_a - w^*_1 - w^*_a + \frac{w^*_1 + w^*_a}{z}}, \end{align*}
        and
        \[\frac{\partial {G_a}/{z}}{\partial {1}/{z}} = {w^*_1}\KL(\mu_1, x_{1,a}) + {w^*_a}\KL(\mu_a, x_{1,a}).\]
        Now, for an arbitrary $\epsilon > 0$, if $w^*_1 \ge \epsilon$, the infimizer above increases if we replace $w^*_1$ by $1$ and $w^*_a$ by $0$ and we get 
        \[x_{1,a} \le \frac{(p_1 - 1)\mu_1 + p_a\mu_a + \frac{\mu_1}{z} }{p_1 + p_a -1  + \frac{1}{z}}. \]
        R.h.s. in increasing in $1/z$. Since we restrict to $z$ such that $\frac{1}{z} \le \frac{2}{z^*} \le \frac{2}{\eta}$, we have
        \[ x_{1,a} \le \frac{(p_1 - 1)\mu_1 + p_a\mu_a + \frac{2\mu_1}{\eta} }{p_1 + p_a -1  + \frac{2}{\eta}} =: x_{a}, \]
        which is a constant between $\mu_a$ and $\mu_1$. Using this,
        \[ \frac{\partial {G_a}/{z}}{\partial {1}/{z}} \ge w^*_1 \KL(\mu_1, x_{1,a}) \ge w^*_1 \KL(\mu_1, x_a) \ge \epsilon \KL(\mu_1, x_{a}). \]
        
        In the other case, i.e., when $w^*_1 \le \epsilon$, we get a similar bound by choosing a lower bound on $x_{1,a}$ that is closer to $\mu_a$ instead. We obtain this by replacing $w^*_1$ by  $0$ and $w^*_a$ by  $1$ as below: 
        \[ x_{1,a} \ge \frac{p_1\mu_1 + (p_a-1)\mu_a + \frac{\mu_a}{z}}{p_1 + p_a - 1 + \frac{1}{z}}. \]
        Further lower bounding the above by setting $\frac{1}{z} = \frac{2}{\eta}$ (since the bound above is decreasing in $1/z$), we get 
        \[ x_{1,a} \ge \frac{p_1\mu_1 + (p_a-1)\mu_a + \frac{2\mu_a}{\eta}}{p_1 + p_a - 1 + \frac{2}{\eta}} =: \tilde{x}_a. \]
        Thus in this setting, 
        \[ \frac{\partial \frac{G_a}{z}}{\partial \frac{1}{z}} \ge \epsilon \KL(\mu_a, \tilde{x}_a). \]
        With these,
        \[ C(\mu,p,\eta):= \max\limits_{a\ne 1} \max\lrset{\KL(\mu_1, x_a), \KL(\mu_a, \tilde{x}_a)}. \]
    \end{proof}

    \subsection{Analyzing the case when $w^{*}_1 + w^{*}_a < \epsilon_1$} \label{sec:smallweights}
    Recall that
    \[G_a(\mu,1,w^{*})  = p_1 d(\mu_1, \mu_p) + p_a d(\mu_a,\mu_p),\]
    where 
    \[\mu_p  =  \frac{p_1 \mu_1 + p_a \mu_a}{p_1 + p_a}.\]
    Let the  threshold 
    \[P := \frac{\log \frac{1}{\delta} + \log \log \frac{1}{\delta}}{\tau_1}.\]
    Further, let $C_1^a(c)$ be a function of a scalar $c$ (and also of problem parameters $p,\mu$) such that 
    \[\tau_1 < C_1^a(c) \left(\log \frac{1}{\delta} + \log \log \frac{1}{\delta}\right) \quad \implies  \quad P > G_a(\mu,1,w^{*})+ c, \]
    where $c>0$ will be set later. Similarly, let $C_2^a(c)$ be a function of scalar $c>0$ (and also only of problem parameters $p,\mu$) 
    such that 
    \[\tau_1 > C_2^a(c) \left( \log \frac{1}{\delta} + \log \log \frac{1}{\delta} \right) \implies P < G_a(\mu,1,w^{*}) - c.\]
    We will call $G_a(\mu,z,w^{*})$ by $G_a(z)$ for simplifying the notation in the following lemma.
 
    \textbf{Observation:}  Observe that $C_1^a(c) \rightarrow C_1^a(0)$ and $C_2^a(c) \rightarrow C_2^a(0)$ as $c \rightarrow 0$ and in fact, 
    \[C_1^a(0)=C_2^a(0)= \frac{1}{G_a(\mu,z=1,w^{*})}.\]

     \begin{lemma}\label{lem:smallweights}
       Suppose $z^{*} > \eta > 0$ (for a fixed $\eta$). If $w^{*}_1 + w^{*}_a < \epsilon_1$ and $\frac{G_a(z)}{z}$ does not satisfy the overshoot condition in (\ref{eq:overshoot}) for $z=1$, we have the following implications: 
       \begin{itemize}
       \item $\tau_1 \geq C_1^a(c_{\epsilon_1}) \left(\log \frac{1}{\delta} + \log \log (\frac{1}{\delta})\right)$,
       \item $ \tau_1 \leq C_2^a(c_{\epsilon_2}) \left(\log \frac{1}{\delta} + \log \log (\frac{1}{\delta})\right)$, 
       \end{itemize}
       where 
       \[c_{\epsilon_1} = \epsilon_1 L_{\mu} \lrp{\frac{1}{\eta}-1}, \quad  c_{\epsilon_2}= \frac{2 \alpha(\epsilon') + \tilde{\epsilon}}{\eta}, \quad \text{ and } \epsilon_2=(\epsilon',\tilde{\epsilon}).\]
       
     \end{lemma}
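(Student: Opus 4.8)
The plan is to reduce both claimed implications to a single two-sided sandwich on the scalar $P = \frac{1}{\tau_1}\lrp{\log\frac{1}{\delta}+\log\log\frac{1}{\delta}}$ around $G_a(\mu,1,w^*)$, after which each conclusion follows immediately from the defining (contrapositive) properties of $C_1^a$ and $C_2^a$. Concretely, those definitions give $P \le G_a(\mu,1,w^*)+c \Rightarrow \tau_1 \ge C_1^a(c)\lrp{\log\frac{1}{\delta}+\log\log\frac{1}{\delta}}$ and $P \ge G_a(\mu,1,w^*)-c \Rightarrow \tau_1 \le C_2^a(c)\lrp{\log\frac{1}{\delta}+\log\log\frac{1}{\delta}}$, so it suffices to prove $P \le G_a(\mu,1,w^*)+c_{\epsilon_1}$ and $P \ge G_a(\mu,1,w^*)-c_{\epsilon_2}$, using the small-weight hypothesis $w^*_1+w^*_a<\epsilon_1$ for the former and the overshoot-failure hypothesis for the latter.

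For the upper bound on $\tau_1$ (the $C_2^a$ conclusion), I would simply unwind the assumption that $\frac{G_a(z)}{z}$ fails the overshoot condition \eqref{eq:overshoot} at $z=1$. Since $\frac{G_a(1)}{1}=G_a(\mu,1,w^*)$, this failure reads $\frac{2(\alpha(\epsilon')+\tilde\epsilon)}{z^*} > G_a(\mu,1,w^*)-P$, i.e. $P > G_a(\mu,1,w^*)-\frac{2(\alpha(\epsilon')+\tilde\epsilon)}{z^*}$. Bounding the subtracted term using $z^*>\eta$ gives $\frac{2(\alpha(\epsilon')+\tilde\epsilon)}{z^*} \le \frac{2(\alpha(\epsilon')+\tilde\epsilon)}{\eta}$, which is the constant $c_{\epsilon_2}$; hence $P \ge G_a(\mu,1,w^*)-c_{\epsilon_2}$, exactly the needed inequality.

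For the lower bound on $\tau_1$ (the $C_1^a$ conclusion), I would exploit optimality of $z^*$ together with the small-weight hypothesis. By the equivalence of $\mathbf{P2}$ and $\mathbf{P3}$ (Lemma \ref{lem:equivalence}) and monotonicity of $V(\mu,\cdot)$ (Lemma \ref{lem:monotonicVmuz}), the constraint $V(\mu,z)\ge zP$ is tight at an interior optimum and holds as $V(\mu,1)\ge P$ when $z^*=1$; in either case $G_a(\mu,z^*,w^*)\ge V(\mu,z^*)\ge z^*P$, so that $P \le \frac{G_a(\mu,z^*,w^*)}{z^*}$. Rewriting $\frac{G_a(\mu,z^*,w^*)}{z^*}=\inf_x\lrset{(p_1+\tfrac{1-z^*}{z^*}w^*_1)\KL(\mu_1,x)+(p_a+\tfrac{1-z^*}{z^*}w^*_a)\KL(\mu_a,x)}$ and evaluating the infimand at the $z=1$ minimizer $\mu_p=\frac{p_1\mu_1+p_a\mu_a}{p_1+p_a}$ yields $\frac{G_a(\mu,z^*,w^*)}{z^*}\le G_a(\mu,1,w^*)+\frac{1-z^*}{z^*}\lrp{w^*_1\KL(\mu_1,\mu_p)+w^*_a\KL(\mu_a,\mu_p)}$. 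Since $\mu_p\in[\mu_a,\mu_1]$, monotonicity of the SPEF divergence bounds both $\KL(\mu_1,\mu_p)$ and $\KL(\mu_a,\mu_p)$ by $L_\mu=\max(\KL(\mu_1,\mu_a),\KL(\mu_a,\mu_1))$; combining with $w^*_1+w^*_a<\epsilon_1$ and $\frac{1-z^*}{z^*}=\frac{1}{z^*}-1<\frac{1}{\eta}-1$ gives $P \le G_a(\mu,1,w^*)+\epsilon_1 L_\mu\lrp{\tfrac{1}{\eta}-1}=G_a(\mu,1,w^*)+c_{\epsilon_1}$, as required.

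The step I expect to be the main obstacle is this last comparison between $\frac{G_a(\mu,z^*,w^*)}{z^*}$ and $G_a(\mu,1,w^*)$: one must justify the tightness $V(\mu,z^*)=z^*P$ and separately handle the boundary case $z^*=1$, where the slack $\frac{1-z^*}{z^*}$ vanishes and the bound degenerates cleanly to $P\le G_a(\mu,1,w^*)$, and one must use the $z=1$ minimizer $\mu_p$ as a merely feasible point for the rescaled $z^*$-problem rather than its own minimizer. The divergence bound by $L_\mu$ is the crux that converts the qualitative small-weight assumption into the explicit additive slack $c_{\epsilon_1}$; the remaining manipulations are routine rearrangements.
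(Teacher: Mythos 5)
Your proof is correct and follows essentially the same strategy as the paper's: sandwich $P=\frac{1}{\tau_1}\lrp{\log\frac{1}{\delta}+\log\log\frac{1}{\delta}}$ around $G_a(\mu,1,w^*)$ using the small-weight hypothesis for the upper slack $c_{\epsilon_1}$ and the overshoot failure at $z=1$ for the lower slack $c_{\epsilon_2}$, then invoke the contrapositives of the defining properties of $C_1^a$ and $C_2^a$. The only (harmless) local difference is that for the $C_1^a$ direction the paper integrates the derivative bound $\frac{\partial (G_a/z)}{\partial(1/z)}\le \epsilon_1 L_\mu$ from $1$ to $1/z^*$, whereas you obtain the same increment bound by plugging the $z=1$ minimizer $\mu_p$ into the rescaled infimum as a feasible point --- an equivalent and arguably cleaner step.
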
   
     \begin{proof}
    
    Let $L_{\mu} = \max (\KL(\mu_1,\mu_a), \KL (\mu_a,\mu_1))$. If $w_1 + w_a < \epsilon$, from (\ref{eq:derivinvz}), we have the following:
    \begin{align}\label{eq:derivinvzub}
        \frac{\partial {G_a}/{z}}{\partial {1}/{z}} \overset{(a)}{\leq} \epsilon_1 L_{\mu}
    \end{align}
    The inequality (a) above follows since $x_{1,a} \in [\mu_a,\mu_1]$, hence $\KL(\mu_1,x_{1,a}) \leq \KL(\mu_1,\mu_a)$. Similarly, it holds for the other term $\KL(\mu_a,x_{1,a})$. Next, we know that $z^{*}$ has the following property: \[\frac{G_a(\mu,z^{*},w^{*})}{z^{*}} \geq P.\] 
    We will call $G_a(\mu,z,w^{*})$ by $G_a(z)$ for simplifying the notation. 
    
    \textbf{Case 1:} $\tau_1 < C_1^a(c_{\epsilon_1}) \left(\log \frac{1}{\delta} + \log \log (\frac{1}{\delta})\right)$, where $c_{\epsilon_1}= \epsilon_1 L_{\mu} \lrp{\frac{1}{\eta}-1} $.
    
    From (\ref{eq:derivinvz}), $\frac{G_a(z)}{z}$ increases as $z$ goes from $1$ to  $z^{*}$ since gradient with respect to $1/z$ is positive. The gradient is also upper bounded by $\epsilon_1 L_{\mu}$. Therefore, if $\frac{G_a(z^{*})}{z^{*}} > P$ then, integrating the upper bound on the derivative from $1$ to $\frac{1}{z^{*}}$, we have
    \begin{align} \label{eq:Pub}
    P &\leq G_a(1) + \epsilon_1 L_{\mu} \lrp{\frac{1}{z^{*}}-1 } \nonumber \\
     &\leq  G_a(1) + \epsilon_1 L_{\mu} \lrp{\frac{1}{\eta}-1 }
    \end{align}
    If $c_{\epsilon_1}=\epsilon_1 L_{\mu}\lrp{\frac{1}{\eta}-1}$, then the definition of $C_1^a(c_{\epsilon})$ yields a contradiction that $P > G_a(1) + \epsilon_1 L_{\mu}\lrp{\frac{1}{\eta}-1}$.     This implies, that $\tau_1 \geq C_1^a(c_{\epsilon_1}) \left(\log \frac{1}{\delta} + \log \log (\frac{1}{\delta})\right)$. 

    \textbf{Case 2:}  $\tau_1 > C_2^a(c_{\epsilon_2}) \left(\log \frac{1}{\delta} + \log \log (\frac{1}{\delta})\right)$, where $c_{\epsilon_2}= \frac{2 \alpha(\epsilon') + \tilde{\epsilon}}{\eta}$.   
  
    The above assumption on $\tau_1$ means that 
    \[P< G_a(1) - \frac{2 \alpha(\epsilon') + \tilde{\epsilon} }{\eta} .\] 
    This means that the earliest $z$ at which $\frac{G_a(z)}{z} -P > \frac{2 \alpha(\epsilon') +  \tilde{\epsilon} }{z^{*}}$ is $z=1$ (recall that $\frac{1}{z^{*}} < \frac{1}{\eta}$). Therefore for $G_a$ the smallest time that satisfies the overshoot condition for $G_a$ is $z=1 > z^{*}$. Therefore, if $\frac{G_a(z)}{z}$ does not satisfy the overshoot condition (\ref{eq:overshoot}) in the beginning ($z=1$), then $\tau_1 \leq C_2^a(c_{\epsilon_2}) \left(\log \frac{1}{\delta} + \log \log (\frac{1}{\delta})\right)$.
\end{proof}

    \begin{lemma}\label{lem:compGoodSet}
        \[\limsup\limits_{\delta\rightarrow 0} \frac{\sum\limits_{T=1}^\infty \mathbb{P}\lrp{\mathcal E^c_T} }{\log\frac{1}{\delta} } = 0.\]
    \end{lemma}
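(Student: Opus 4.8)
The plan is to show that the entire sum $\sum_{T=1}^\infty \mathbb{P}(\mathcal E_T^c)$ is bounded by a finite constant that does \emph{not} depend on $\delta$; dividing by $\log\frac{1}{\delta}\to\infty$ then immediately yields the claimed limit of $0$. Recall that $\mathcal E_T(\epsilon') = \bigcap_{t=h(T)}^T\lrp{\hat{\mu}(t)\in\mathcal I_{\epsilon'}}$ with $h(T)=T^{1/4}$ and $\mathcal I_{\epsilon'}$ the cube of half-width $\zeta(\epsilon')$ centred at $\mu$. A union bound gives
\[ \mathbb{P}(\mathcal E_T^c) \le \sum_{t=h(T)}^T \sum_{a=1}^K \mathbb{P}\lrp{|\hat{\mu}_a(t)-\mu_a| > \zeta(\epsilon')}. \]
The first ingredient I would use is a $\delta$-independent lower bound on the number of pulls coming from the forced exploration in Algorithm~\ref{alg:tas_batch}: the uniform component $\mathbf U_K$ injected at the batch times $t\in\{r^2K\}$, together with the tracking guarantee (Appendix~\ref{sec:tracking}), ensures there exist a constant $c_0>0$ depending only on $K$ and a time $t_0$ with $N^o_a+N_a(t)\ge N_a(t)\ge c_0\sqrt{t}$ for all $a$ and all $t\ge t_0$, almost surely and uniformly in $\delta$, since the exploration schedule itself does not depend on $\delta$.

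The second ingredient is concentration of the empirical means. Viewing the rewards of each arm as a fixed i.i.d.\ stream (the standard ``stack of rewards'' coupling), $\hat{\mu}_a(t)$ is the average of the first $N^o_a+N_a(t)$ draws from $\mu_a$, irrespective of how the adaptive, $\delta$-dependent policy interleaves the pulls. For the SPEF, the Cram\'er--Chernoff bound gives $\mathbb{P}(|\bar X_{a,n}-\mu_a|>\zeta)\le 2e^{-n I_a(\zeta)}$ with $I_a(\zeta):=\min\{\KL(\mu_a+\zeta,\mu_a),\KL(\mu_a-\zeta,\mu_a)\}>0$ (the choice $\zeta(\epsilon')\le\Delta_a/4$ keeps the shifted means in the interior of $\mathcal S$). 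Combining with the count bound through a union over the number of samples,
\[ \mathbb{P}\lrp{|\hat{\mu}_a(t)-\mu_a|>\zeta} \le \sum_{n\ge c_0\sqrt t} 2e^{-nI_a(\zeta)} \le \frac{2e^{-c_0 I_a(\zeta)\sqrt t}}{1-e^{-I_a(\zeta)}} \le C_1\, e^{-c_1\sqrt t}, \qquad t\ge t_0, \]
where $c_1:=c_0\min_a I_a(\zeta)>0$ and $C_1$ is a constant, all free of $\delta$.

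Substituting this into the union bound and summing the tail by comparison with $\int_m^\infty e^{-c_1\sqrt x}\,dx \le C_2\sqrt m\, e^{-c_1\sqrt m}$, taken at $m=h(T)=T^{1/4}$, yields $\mathbb{P}(\mathcal E_T^c)\le C_3\, T^{1/8}e^{-c_1 T^{1/8}}$ for all $T$ large enough, while $\mathbb{P}(\mathcal E_T^c)\le 1$ for the finitely many remaining $T$ with $T^{1/4}<t_0$. Hence $\sum_{T=1}^\infty\mathbb{P}(\mathcal E_T^c)\le C_4<\infty$, a constant independent of $\delta$, and the claim follows after dividing by $\log\frac{1}{\delta}$. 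The main obstacle, and the point requiring genuine care, is exactly this uniformity in $\delta$: although both the sampling rule (through $\mathbf{P2}$) and the stopping rule depend on $\delta$, the final bound must not. This is resolved by observing that the concentration estimate uses only the raw i.i.d.\ reward streams and the $\delta$-free forced-exploration count $c_0\sqrt t$; the adaptivity and $\delta$-dependence of the policy affect only \emph{which} arm is pulled, and the maximal union bound over the sample count $n$ absorbs precisely this randomness.
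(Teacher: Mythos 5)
Your proposal is correct and follows essentially the same route as the paper's proof: a union bound over $t\in[h(T),T]$ and over arms, a $\delta$-free lower bound of order $\sqrt{t}$ on the pull counts coming from the forced exploration, a Chernoff bound summed over the possible sample counts to get a geometric series, and summability of the resulting $T\,e^{-D\sqrt{h(T)}}$ tail into a constant independent of $\delta$. Your explicit emphasis that the entire sum is a $\delta$-independent constant (rather than merely $o(\log\frac{1}{\delta})$) is a slightly sharper statement of the same conclusion the paper reaches.
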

    \begin{proof}
        Recall that $\epsilon' > 0$, for $T\ge 0$, $h(T) = \sqrt{ T}$ and
        \[ \mathcal E_T(\epsilon'):= \bigcap\limits_{t= h(T)}^T \lrp{\hat{\mu}(t) \in \mathcal I_{\epsilon'}}, \]
        where for $\zeta(\epsilon') > 0$,
        \[ \mathcal I_{\epsilon'} := [\mu_1 - \zeta(\epsilon'), \mu_1 + \zeta(\epsilon')] \times \dots \times [\mu_K - \zeta(\epsilon'), \mu_K + \zeta(\epsilon')] .  \]
        Clearly, 
        \[ \mathbb{P}\lrp{\mathcal E^c_T} \le \sum\limits_{t=h(T)}^T \sum\limits_{a=1}^K \mathbb{P}\lrp{ \hat{\mu}_a(t) \not\in [\mu_a - \zeta(\epsilon'), \mu_a + \zeta(\epsilon')] }. \]Since each arm is pulled at least $\sqrt{t/K}-1$ times till time $t$, using Chernoff bound
        
        \begin{align*} 
        &\mathbb{P}\lrp{\hat{\mu}_a(t)\not\in[\mu_a - \zeta(\epsilon'), \mu_a + \zeta(\epsilon')]}\\ 
        &\le \sum\limits_{s=\sqrt{\frac{t}{K}}-1}^T e^{-s d(\mu_a - \zeta(\epsilon'), \mu_a) } + e^{-s d(\mu_a + \zeta(\epsilon'), \mu_a)}\\
        &\le \frac{ e^{- \lrp{\sqrt{\frac{t}{K}} -1 } d(\mu_a - \zeta(\epsilon')) } }{1-e^{- d(\mu_a - \zeta(\epsilon')) }} + \frac{e^{- \lrp{\sqrt{\frac{t}{K}} -1 } d(\mu_a + \zeta(\epsilon')) }}{1-e^{-  d(\mu_a + \zeta(\epsilon')) }}.
        \end{align*}
        
        Let $D := \min_a\lrset{ d(\mu_a-\zeta(\epsilon'), \mu_a), d(\mu_a + \zeta(\epsilon'),\mu_a) }$ and define  
        \[  B:= \sum\limits_{a=1}^K \lrp{ \frac{ e^{d(\mu_a - \zeta(\epsilon')) } }{1-e^{- d(\mu_a - \zeta(\epsilon')) }} + \frac{e^{d(\mu_a + \zeta(\epsilon')) }}{1-e^{-  d(\mu_a + \zeta(\epsilon')) }} }.\]
        Then, 
        \[ \mathbb{P}(\mathcal E^c_T) \le \sum\limits_{t=h(T)}^T Be^{- D \sqrt{\frac{t}{K}} } \le BTe^{-D \sqrt{h(T)}}. \]
        On dividing by $\log(1/\delta)$ and taking limits, we get the desired result.
    \end{proof}

\section{Properties of tracking rule} \label{sec:tracking}
In the Algorithm \ref{alg:tas_batch}, for the first $K$ time slots in the online phase, each arm is pulled once and then the arm with the maximum value of  $\frac{w(t)}{N_a(t-1)}$ is chosen, where $w(t)$ is the sequence of weights that the algorithm tracks (see Algorithm~\ref{alg:tas_batch} for the exact expression). 
For this tracking rule, \cite{wang2021fast} show that:
    \begin{align} \label{eq:proportion_closeness}
     N_a(t) \in \left[ t w_a(t) - K-1, t w_a(t) + 1 \right].
    \end{align}

\textbf{Recalling continuity properties:}
\begin{enumerate}
\item  Let $w^{*}(\mu, z^*(\mu))$ denote the  optimal allocations in $\Sigma_K$ for the optimal $z^{*}(\mu)$ for $\mu$. Recall that the map $\mu \rightarrow {w}^{*}(\mu, z^*(\mu))$ is continuous (Lemma \ref{lem:cont.uniq}). We suppress the dependence of $w^*$ on $z^*$, whenever it is clear from the context.

\item For all $a$, $G_a(\mu,z,w)$ is jointly continuous in all the arguments (Lemma~\ref{lem:contGj}).  

\end{enumerate} 

Let $\hat{\mu}(t) = (\hat{\mu}_1(t), \dots, \hat{\mu}_K(t))$ denote the vector of empirical means of the arms at the online time $t$.

\textbf{Good Event - ${\cal E}_{\epsilon}(t)$:} Define the good event ${\cal E}_{\epsilon}(t)$  for time $t$ as: $ \max_k  | \hat{\mu}_k(t) - \mu_k | < \psi(\epsilon)$, where $\psi(\epsilon)>0$ is chosen such that:
   \begin{align}\label{eq:good_bounds}
       \lVert w^{*}(\hat{\mu}(t)) - w^*(\mu) \rVert_2 & < \epsilon, \nonumber \\
       | G_a(\hat{\mu}(t), z, w) - G_a(\mu, z, w) | & < \epsilon, ~ \forall w \in \Sigma_K,~a \in [K].
   \end{align}
Equation (\ref{eq:good_bounds}) is ensured by some $\psi(\epsilon)$ due to continuity properties recalled above.

Next, let $h_1(T) = T^{\gamma_1},~h_2(T) = T^{\gamma_2}$ where $0<\gamma_1 < \gamma_2 < 1$ and let 
\[{\cal E}_{\epsilon}(T) = \bigcap \limits_{t=h_1(T)}^{T} {\cal E}_{\epsilon}(t).\]
Let 
\[T > T_\epsilon:= \inf\lrset{ t: \epsilon > \max\lrset{4 t^{-(\gamma_2-\gamma_1)}, 4 K t^{-\gamma_2},  2 \sqrt{K}T^{-\gamma_2/2 }  } }.\]

\begin{lemma} \label{lem:emp_prop_close}
Let $h_2(T) < t < T$, $T > T_{\epsilon}$. Let the event ${\cal E}_{\epsilon}(T)$ hold. Then, the empirical proportions $\frac{N(t)}{t}$ of Algorithm \ref{alg:tas_batch} satisfy
   \begin{align*}
       \left\| \frac{N(t)}{t} - w^*(\mu) \right\|_{\infty} < 4 \epsilon.
   \end{align*}
\end{lemma}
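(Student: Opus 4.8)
The plan is to reduce the empirical proportions to the tracked weights $w(t)$ and then show that $w(t)$ is close to $w^*(\mu)$ because it is a running average of target vectors that have stabilized near $w^*(\mu)$. First I would invoke the tracking guarantee \eqref{eq:proportion_closeness}, which gives $|N_a(t)/t - w_a(t)| \le (K+1)/t$ for each arm $a$, so that $\|N(t)/t - w^*(\mu)\|_\infty \le \|w(t) - w^*(\mu)\|_\infty + (K+1)/t$; the residual $(K+1)/t$ is controlled by the $t^{-\gamma_2}$ term in the definition of $T_\epsilon$ once $t > h_2(T)$. It therefore suffices to bound $\|w(t)-w^*(\mu)\|_\infty$.

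Next I would unroll the running-average recursion $w(s{+}1)=\frac{s}{s+1}w(s)+\frac{1}{s+1}u(s{+}1)$ of Algorithm~\ref{alg:tas_batch}, where $u(s)$ is the vector added at step $s$ (either $\mathbf{U}_K$ on an exploration round, or the current estimate $\hat{w}(s)$ otherwise), to obtain $t\,w(t) = K\mathbf{U}_K + \sum_{s=K+1}^{t} u(s)$. I would split this sum into the early terms $s\le h_1(T)$, the exploration rounds with $s>h_1(T)$, and the exploitation terms with $s>h_1(T)$. Since the simplex has $\ell_\infty$-diameter at most $1$, the initial $K\mathbf{U}_K$ together with the early block contributes at most $(K+h_1(T))/t \le 2\,t^{-(\gamma_2-\gamma_1)}$ for $t>h_2(T)$, and the exploration block contributes at most $\sqrt{Kt}/t=\sqrt{K/t}\le \sqrt{K}\,T^{-\gamma_2/2}$, because exploration rounds occur in blocks of $K$ indexed by perfect squares and so number at most $\sqrt{Kt}$ up to time $t$.

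The heart of the argument is the exploitation block. On the good event $\mathcal{E}_\epsilon(T)=\bigcap_{t=h_1(T)}^T\mathcal{E}_\epsilon(t)$, every step $s\in[h_1(T),T]$ satisfies $\mathcal{E}_\epsilon(s)$, so the good-event bound \eqref{eq:good_bounds} gives $\|w^*(\hat{\mu}(s))-w^*(\mu)\|<\epsilon$; combining this with the identification of the optimizers of $\mathbf{P2}(\hat{\mu}(s))$ with $w^*(\hat{\mu}(s))$ (Lemma~\ref{lem:equivalence}, using the uniqueness and continuity of $w^*$ from Lemma~\ref{lem:cont.uniq}), each exploitation target obeys $\|\hat{w}(s)-w^*(\mu)\|<\epsilon$. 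Because $\hat{w}$ is recomputed only at batch boundaries, I must check that the batch boundary preceding any such $s$ also falls after $h_1(T)$: for $t>h_2(T)$ the spacing between consecutive boundaries is $O(\sqrt{tK})$, which is below $s-h_1(T)$, so the value of $\hat{w}$ in force at step $s$ was itself computed from empirical means already satisfying $\mathcal{E}_\epsilon$.

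Finally, averaging shows that $w(t)$ is a convex combination of vectors within $\epsilon$ of $w^*(\mu)$ plus the two negligible blocks, so a triangle inequality yields $\|w(t)-w^*(\mu)\|_\infty \le \epsilon + 2\,t^{-(\gamma_2-\gamma_1)} + \sqrt{K}\,T^{-\gamma_2/2}$; adding the tracking residual and using $T>T_\epsilon$ (so that each error term is below $\epsilon$) gives $\|N(t)/t-w^*(\mu)\|_\infty<4\epsilon$. I expect the main obstacle to be the exploitation step: correctly handling the \emph{stale}, batched updates of $\hat{w}$ and verifying that the empirical means governing the in-force value were already within the good set, which is precisely where the separation $\gamma_1<\gamma_2$ and the perfect-square exploration schedule are essential. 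The remaining bookkeeping that matches the three error sources to the three thresholds defining $T_\epsilon$ is routine.
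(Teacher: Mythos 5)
Your proof follows essentially the same route as the paper's: decompose the running average $w(t)$ into an initial block (bounded by the simplex diameter times $q_1^2K/t$ for the first update index $q_1$ past $h_1(T)$), an exploration block of size $O(\sqrt{K/t})$, and an exploitation block whose targets $\hat{w}(s)$ lie within $\epsilon$ of $w^*(\mu)$ on $\mathcal{E}_\epsilon(T)$ by continuity and uniqueness of $w^*$, then add the tracking residual $K/t$ from \eqref{eq:proportion_closeness}. The one slip is your staleness claim that every exploitation step $s>h_1(T)$ has $s-h_1(T)$ exceeding the batch spacing --- false for $s$ just above $h_1(T)$ --- but the paper's remedy is precisely to begin the exploitation accounting at the first batch boundary $q_1^2K$ after $h_1(T)$ and absorb the intervening steps into the initial block, which costs only an $o(h_2(T))/t$ term.
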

\begin{proof}
Note that Algorithm \ref{alg:tas_batch} updates only for $t= q^2 K,~ q \in \mathbb{N}$. Let $q_1$ be the smallest natural number such that $q_1^2K > h_1(T)$. Fix a $h_2(T) < t < T$. Let $q_2$ be the largest natural number such that $q_2^2K < t < T $.

In Algorithm \ref{alg:tas_batch}, let $w(q_1^2 K)=\mathbf{v}$. Let $\hat{w}(q^2 K) = \mathbf{v}_q,~ q \in \left[q_1,q_2\right]$. Let $\mathbf{r}= \left[\frac{1}{K} \frac{1}{K} \ldots \frac{1}{K}\right]$. By the updates in Algorithm \ref{alg:tas_batch}, we have:
\begin{align} \label{eq:w_tilde}
w(t) = \frac{1}{t}\lrp{ q_1^2K \mathbf{v} + (t- q_1^2 K  - (q_2-q_1)K) \tilde{\mathbf{v}} + (q_2-q_1)K \mathbf{r}}.
\end{align}
Here, $\tilde{\mathbf{v}} \in \mathrm{conv}(\mathbf{v}_{q_1} , \mathbf{v}_{q_1+1} \ldots \mathbf{v}_{q_2})$. Let 
\[\eta_{q} \in \mathbb{R}: \sum_{q=q_1}^{q_2} \eta_{q}=1,~ \sum \limits_{q=q_1}^{q_2} \eta_q \mathbf{v}_q = \tilde{\mathbf{v}} ,~\eta_q \geq 0.\]

Let $w^* := w^*(\mu)$. Then, 
\begin{align}\label{eq:iteratew}
  &\lVert w(t) - {w}^* \rVert_{\infty} \nonumber\\
  &\leq \frac{q_1^2 K}{ t} \lVert \mathbf{v} - {w}^* \rVert_{\infty} + \frac{1}{t} (t - q_1^2 K - (q_2-q_1) K)  \lVert \mathbf{\tilde{v}} - {w}^* \rVert_{\infty} + (q_2-q_1)\frac{ K}{t} \lVert \mathbf{r} - {w}^* \rVert_{\infty}     \nonumber \\
  & \leq \frac{K}{t} \left( 1+ \sqrt{\frac{h_1(T)}{K}} \right)^2  + \lVert \mathbf{\tilde{v}} - {w}^* \rVert_{\infty} + \frac{\sqrt{K}}{t} (\sqrt{t}- \sqrt{h_1(T)}) \nonumber \\
  & \leq  \frac{4}{h_2(T)} \max(h_1(T),K) + \frac{2\sqrt{K} }{\sqrt{h_2(T)}} + \lVert \mathbf{\tilde{v}} - {w}^* \rVert_{\infty} \nonumber \\
  & \leq \max ( 4 T^{-(\gamma_2 - \gamma_1)}, 4K T^{-\gamma_2} ) + 2 \sqrt{K}T^{-\gamma_2/2} + \lVert \mathbf{\tilde{v}} - {w}^* \rVert_{\infty} \nonumber \\
  & \overset{(a)}{\leq} 2 \epsilon + \lVert \mathbf{\tilde{v}} - {w}^* \rVert_{\infty}.
\end{align}

From continuity of $w^*(\cdot)$, it follows that on the good set, for $\forall q\ge q_1 $, $\lVert\mathbf{v}_q - w^*(\mu)\rVert_\infty \le \epsilon.$ Thus, 
\begin{align} \label{eq:closeness}
 \lVert \mathbf{\tilde{v}} - {{w}^*} \rVert \leq \sum \limits_{q=q_1}^{q_2} \eta_q  \lVert \mathbf{v}_q - {w}^* \rVert_{\infty}  \leq \epsilon.   
\end{align}

Combining (\ref{eq:iteratew}), (\ref{eq:closeness}), we have for $h_2(T) < t < T$, $T > T_{\epsilon}$ under event ${\cal E}_{\epsilon}(T)$,
\begin{align}\label{eq:close_convex_hull}
  \lVert w(t) - {w}^* \rVert_{\infty} < 3 \epsilon.
 \end{align} 

Next, due to the property of the tracking rule (Equation~\eqref{eq:proportion_closeness}), for $t > h_2(T)$ we have: 

 \begin{align}\label{eq:tracking_closeness}
     \left| \frac{N_a(t)}{t} - w_a(t) \right| < \frac{K}{t} < K T^{-\gamma_2} < \epsilon.
 \end{align}

For $h_2(T) < t < T$, $T > T_{\epsilon}$, under the event ${\cal E}_{\epsilon}(T)$, combining (\ref{eq:close_convex_hull}) and (\ref{eq:tracking_closeness}), we have :
\begin{align*}
      \left\| \frac{N(t)}{t} - w^*(\mu) \right\|_{\infty} < 4 \epsilon,
\end{align*}
proving the result.
\end{proof}

\section{Supporting results and proofs from Section~\ref{sec:comp_complexity}}

\begin{lemma}\label{lem:convex.alloc} For fixed $N_1$, $Z_{1,a}({\bf N})$ is a monotonic function of $N_a$. Moreover, $N_a(N_1)$ is convex.
\end{lemma}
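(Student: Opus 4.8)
Write $M_1 := N_1^o + N_1$ and $M_a := N_a^o + N_a$, so that
\[
Z_{1,a}(\mathbf{N}) = \inf_x\lrset{M_1 \KL(\hat{\mu}_1, x) + M_a \KL(\hat{\mu}_a, x)},
\]
and $Z_{1,a}$ depends on $(N_1,N_a)$ only through $(M_1,M_a)$. By Lemma~\ref{lem:optima_SPEF} the infimum is attained at the unique point $x^* = (M_1\hat{\mu}_1 + M_a\hat{\mu}_a)/(M_1+M_a)$, which lies strictly between $\hat{\mu}_a$ and $\hat{\mu}_1$ whenever $\hat{\mu}_a \ne \hat{\mu}_1$ and $M_1, M_a > 0$. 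I will use this representation for both claims.

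For monotonicity, fix $N_1$ (hence $M_1$) and view $Z_{1,a}$ as a function of $N_a$. For each fixed $x$ the bracketed expression is affine and non-decreasing in $N_a$ (its slope $\KL(\hat{\mu}_a,x) \ge 0$), so the pointwise infimum is non-decreasing in $N_a$. For strict monotonicity I would invoke the envelope (Danskin) computation already used in the excerpt for $S_{1,a}$: since $x^*$ is the unique minimizer, $\partial Z_{1,a}/\partial N_a = \KL(\hat{\mu}_a, x^*)$, and this is strictly positive because $x^* \ne \hat{\mu}_a$. Thus $Z_{1,a}$ is strictly increasing in $N_a$.

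For convexity of $N_a(N_1)$, the key structural fact is that $Z_{1,a}$ is \emph{jointly concave} in $(N_1, N_a)$: it is the pointwise infimum over $x$ of a family of functions that are affine in $(N_1, N_a)$, and an infimum of affine functions is concave. Consequently, for the threshold $C := \log\frac{1}{\delta} + \log\log\frac{1}{\delta}$, the super-level set $S := \lrset{(N_1,N_a): Z_{1,a}(\mathbf{N}) \ge C}$ is convex. By the strict monotonicity in $N_a$ just established, for each $N_1$ in the domain where a solution exists the feasible slice $\lrset{N_a : Z_{1,a} \ge C}$ is an upper half-line $[N_a(N_1), \infty)$; hence $S$ coincides with the epigraph $\lrset{(N_1,N_a): N_a \ge N_a(N_1)}$. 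Since a set is the epigraph of a convex function if and only if it is convex, convexity of $S$ gives convexity of $N_a(N_1)$. This argument goes through even where $N_a(N_1) < 0$, and the truncation $\max\lrset{0, N_a(N_1)}$ appearing in $\mathcal{O}_2$ preserves convexity (a maximum of convex functions), which is what is ultimately needed for the bisection search.

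The step requiring the most care is the identification of $S$ with the epigraph of $N_a(N_1)$: this needs both the monotonicity claim and a verification that the set of $N_1$ for which the equation $Z_{1,a}=C$ admits a solution is an interval. The latter follows by noting that $\sup_{N_a} Z_{1,a} = M_1 \KL(\hat{\mu}_1, \hat{\mu}_a)$ (taking $N_a \to \infty$, so $x^* \to \hat{\mu}_a$), which is increasing in $N_1$; thus feasibility of $N_1$ is one-sided and the domain is a convex interval, as implicitly used in the existence check in Appendix~\ref{sec:Nan}.
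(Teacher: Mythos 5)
Your proposal is correct, and the monotonicity half is essentially the paper's argument: the paper also applies the envelope computation at the optimal $x_{1,a}$ to get $\partial Z_{1,a}/\partial N_a = \KL(\hat{\mu}_a(t), x_{1,a}) \ge 0$ (your preliminary ``infimum of affine non-decreasing functions'' observation is a harmless elementary addition). For the convexity of $N_a(N_1)$, however, you take a genuinely different route. The paper differentiates the level-curve relation $Z_{1,a} = \log\frac{1}{\delta}+\log\log\frac{1}{\delta}$ implicitly to obtain $\frac{\partial N_a}{\partial N_1} = -\KL(\hat{\mu}_1(t),x_{1,a})/\KL(\hat{\mu}_a(t),x_{1,a})$, and then argues this derivative is increasing in $N_1$ by tracking how $x_{1,a}$ drifts toward $\hat{\mu}_1(t)$ along the level curve and invoking monotonicity of $\KL$ in its second argument. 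You instead observe that $Z_{1,a}$ is jointly concave in $(N_1,N_a)$ as an infimum of affine functions, so its super-level set is convex; monotonicity in $N_a$ identifies that set with the epigraph of $N_a(\cdot)$, and convexity of the epigraph gives convexity of the function. Your argument is shorter, requires no differentiability or sign-tracking of the derivative along the level curve, and you correctly flag and resolve the one delicate point (that the feasible set of $N_1$ is an interval, via $\sup_{N_a} Z_{1,a} = (N_1^o+N_1)\KL(\hat{\mu}_1,\hat{\mu}_a)$ being increasing in $N_1$, consistent with the existence check in Appendix~\ref{sec:Nan}). What the paper's computation buys in exchange is the explicit derivative formula $-\KL(\hat{\mu}_1,x_{1,a})/\KL(\hat{\mu}_a,x_{1,a})$, which is not incidental: it is exactly the quantity evaluated by the gradient oracle (Algorithm~\ref{alg:bisection_grad}) in the bisection search, and it mirrors the optimality conditions of Theorem~\ref{th:optsol.char}. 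Your closing remark that $\max\{0,N_a(N_1)\}$ and hence the objective of $\mathcal{O}_2$ remain convex is also correct and is the form in which the lemma is actually used.
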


\begin{proof}
For a fixed $N_1$, ${\bf N}= \lrset{N_1, \dots, N_K}$, and $a\ne 1$
\[ Z_{1,a}({\bf N}) = (N^o_1 + N_1)\KL(\hat{\mu}_1(t), x_{1,a}) + (N^o_a + N_a)\KL(\hat{\mu}_a(t), x_{1,a}).\]
Here, $x_{1,a}$ is the infimizer in the definition of  $Z_{1,a}({\bf N})$, and depends on $\bf N$. 

Differentiating and using optimality of $x_{1,a}$, we get that 
\[ \frac{\partial Z_{1,a}({\bf N})}{\partial N_a} = \KL(\hat{\mu}_a(t), x_{1,a}) \ge 0,  \]
proving the monotonicity of $Z_{1,a}({\bf N})$ in $N_a$ for fixed $N_1$. 

Recall that for a fixed $N_1$, $N_a(N_1)$ is the $N_a$ that solves the following equation:
\begin{equation}\label{eq:NaN1} (N^o_1 + N_1)\KL(\hat{\mu}_1(t), x_{1,a}) + (N^o_a + N_a)\KL(\hat{\mu}_a(t), x_{1,a}) = \log\frac{1}{\delta} + \log\log\frac{1}{\delta},\end{equation}
where $x_{1,a}$ is the infimizer in the definition of $Z_{1,a}({\bf N})$ and is given by
\[ x_{1,a} = \frac{ (N^o_1 + N_1)\hat{\mu}_1(t) + (N^o_a + N_a)\hat{\mu}_a(t) }{N^o_1 + N_1 + N^o_a + N_a}.\]
Differentiating (\ref{eq:NaN1}) with respect to $N_1$ and using optimality of $x_{1,a}$ gives
\[ \KL(\hat{\mu}_1(t), x_{1,a}) + \KL(\hat{\mu}_a(t), x_{1,a})\frac{\partial N_a}{\partial N_1} = 0. \]
This gives that 
\[\frac{\partial N_a}{\partial N_1} = - \frac{\KL(\hat{\mu}_1(t), x_{1,a})}{\KL(\hat{\mu}_a(t), x_{1,a})}.\]
From the above expression, observe that on increasing $N_1$, $N_a(N_1)$ decreases and $x_{1,a}$ increases and gets closer to $\hat{\mu}_1(t)$. This observation with the monotonicity of $\KL$ in the second argument with a fixed first argument implies that $\KL(\hat{\mu}_1(t),x_{1,a})$ reduces and $\KL(\hat{\mu}_a(t), x_{1,a})$ increases. This argues  that the derivative of $N_a$ with respect to $N_1$ increases on increasing $N_1$, implying convexity of $N_a(N_1)$. 
\end{proof}

\begin{lemma}\label{lem:optall}
$N^*_1=n^*_1$ and $N^*_a = \max\lrset{0, N_a(n^*_1)}$ are the optimal allocations for $\mathbf{P2}$. 
\end{lemma}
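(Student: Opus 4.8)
The plan is to show that $\mathbf{P2}$ decouples across the suboptimal arms once $N_1$ is fixed, thereby reducing it to the one–dimensional problem $\mathcal{O}_2$. First I would record the structural fact, immediate from the definition of $Z_{1,a}$, that the constraint associated with arm $a\ne 1$ involves only the two variables $N_1$ and $N_a$: both the term $(N^o_a + N_a)\KL(\hat{\mu}_a(t), x_{1,a})$ and the infimizer $x_{1,a}$ depend on $\mathbf{N}$ solely through the pair $(N_1, N_a)$. Consequently the $K-1$ constraints share only the common variable $N_1$, and for a fixed value of $N_1$ the feasible choices of the remaining coordinates are independent across arms.

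Next, fix any $N_1 = n_1$ for which every constraint is satisfiable, i.e. $\sup_{N_a \ge 0} Z_{1,a}(n_1, N_a) = (N^o_1 + n_1)\KL(\hat{\mu}_1(t), \hat{\mu}_a(t)) \ge C$ for all $a\ne 1$, where $C := \log\frac{1}{\delta} + \log\log\frac{1}{\delta}$ (otherwise $n_1$ is infeasible for $\mathbf{P2}$ and is discarded). For such $n_1$, Lemma~\ref{lem:convex.alloc} gives that $N_a \mapsto Z_{1,a}(n_1, N_a)$ is nondecreasing, so the smallest nonnegative $N_a$ meeting $Z_{1,a}(n_1, N_a) \ge C$ is exactly $\max\{0, N_a(n_1)\}$: if $N_a(n_1) \ge 0$ the constraint binds at $N_a(n_1)$, while if $N_a(n_1) < 0$ then monotonicity yields $Z_{1,a}(n_1, 0) > Z_{1,a}(n_1, N_a(n_1)) = C$, so $N_a = 0$ is already feasible. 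Since the arm-$a$ constraint is unaffected by the other coordinates, minimizing $\sum_a N_a$ over the feasible set with $N_1 = n_1$ held fixed yields the value $n_1 + \sum_{a\ne 1}\max\{0, N_a(n_1)\}$.

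Minimizing this over admissible $n_1$ is precisely $\mathcal{O}_2$, whose minimizer is $n_1^*$; hence the optimal value of $\mathbf{P2}$ equals $n_1^* + \sum_{a\ne1}\max\{0, N_a(n_1^*)\}$ and is attained at $N^*_1 = n^*_1$, $N^*_a = \max\{0, N_a(n^*_1)\}$, as claimed. The restriction of $n_1$ to $[0,B]$ is harmless: Theorem~\ref{th:optsol.char} guarantees that $\mathbf{P2}(\hat{\mu}(t))$ admits a unique optimal solution, so all optimal allocations are finite and bounded by $B$, and the unconstrained minimizer over $n_1\ge 0$ coincides with the minimizer over $[0,B]$.

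The main obstacle I anticipate is the careful handling of the feasible range of $n_1$, namely the values for which $N_a(n_1)$ exists in $\Re$. One must verify that whenever $N_a(n_1)$ fails to exist (the supremum of the index falls short of $C$), no feasible allocation with that value of $N_1$ exists at all, so such $n_1$ are correctly excluded from $\mathcal{O}_2$; this is precisely where the limiting behavior $x_{1,a}\to\hat{\mu}_a(t)$ as $N_a\to\infty$, and the resulting cap $(N^o_1+n_1)\KL(\hat{\mu}_1(t),\hat{\mu}_a(t))$ on the attainable index value, enter the argument. The remaining steps are routine once the decoupling across arms and the monotonicity from Lemma~\ref{lem:convex.alloc} are in place.
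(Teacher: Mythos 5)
Your proof is correct and follows the same route as the paper: the paper's own proof is a one-line appeal to the equivalence between $\mathcal{O}_2$ and $\mathbf{P2}$, and your argument (decoupling of the constraints across arms once $N_1$ is fixed, plus the monotonicity of $Z_{1,a}$ in $N_a$ from Lemma~\ref{lem:convex.alloc}) is exactly the verification of that equivalence, spelled out in more detail than the paper provides. Your handling of the infeasible range of $n_1$ and of the case $N_a(n_1)<0$ is also consistent with the discussion in Appendix~\ref{sec:Nan}.
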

\begin{proof}
The proof of this Lemma follows from the equivalence between problems $\mathcal{O}_2$ and ${\bf P2}$.
\end{proof}

\begin{lemma}\label{lem:convx}
 For a convex function $f:\Re \rightarrow \Re$, the set of points with $0$ in their set of sub-gradients is an interval. At any point to the left of this interval the sub-gradients are all strictly negative. Similarly at a point to the right of this interval, the sub-gradients are strictly positive. 
\end{lemma}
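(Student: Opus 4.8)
The plan is to reduce everything to the standard theory of one-sided derivatives of a convex function on the line. First I would recall the basic facts: for convex $f:\Re\to\Re$, the left and right derivatives $f'_-(x)=\lim_{h\uparrow 0}\frac{f(x+h)-f(x)}{h}$ and $f'_+(x)=\lim_{h\downarrow 0}\frac{f(x+h)-f(x)}{h}$ exist at every $x$, satisfy $f'_-(x)\le f'_+(x)$, and the subdifferential is exactly the closed interval $\partial f(x)=[f'_-(x),f'_+(x)]$. Crucially, the difference-quotient monotonicity of convex functions gives $f'_+(x)\le f'_-(y)$ whenever $x<y$, so both one-sided derivatives are non-decreasing in $x$.

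With this, I would characterize the target set as $S:=\{x: 0\in\partial f(x)\}=\{x: f'_-(x)\le 0\le f'_+(x)\}$. To show $S$ is an interval, take $x_1<y<x_2$ with $x_1,x_2\in S$, and show $y\in S$: from $x_1\in S$ we have $f'_+(x_1)\ge 0$, and monotonicity gives $f'_-(y)\ge f'_+(x_1)\ge 0$; from $x_2\in S$ we have $f'_-(x_2)\le 0$, and monotonicity gives $f'_+(y)\le f'_-(x_2)\le 0$. Since always $f'_-(y)\le f'_+(y)$, these force $f'_-(y)=f'_+(y)=0$, hence $0\in\partial f(y)$ and $y\in S$. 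Thus $S$ is convex in $\Re$, i.e., an interval.

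For the sign statements, assume $S$ is nonempty (otherwise the claim is vacuous) and let $x$ lie strictly to the left of $S$, so there is $x_2\in S$ with $x<x_2$. By monotonicity $f'_-(x)\le f'_+(x)\le f'_-(x_2)\le 0$, so in particular $f'_-(x)\le 0$; were $f'_+(x)\ge 0$ as well, we would have $0\in[f'_-(x),f'_+(x)]=\partial f(x)$, contradicting $x\notin S$. Hence $f'_+(x)<0$, and since $f'_+(x)$ is the largest element of $\partial f(x)$, every subgradient at $x$ is strictly negative. The case of $x$ strictly to the right of $S$ is symmetric: choosing $x_1\in S$ with $x_1<x$, monotonicity yields $f'_+(x)\ge f'_-(x)\ge f'_+(x_1)\ge 0$, and $x\notin S$ rules out $f'_-(x)\le 0$, leaving $f'_-(x)>0$; as $f'_-(x)$ is the smallest subgradient, every subgradient at $x$ is strictly positive.

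I expect no genuine obstacle here, as the result is the classical statement that a convex function on $\Re$ is non-increasing before and non-decreasing after its set of minimizers. The only points requiring care are (i) correctly interpreting ``all subgradients strictly negative/positive'' as a statement about the extreme one-sided derivative $f'_+$ or $f'_-$, and (ii) the degenerate cases where $S$ is empty or a single point, both of which are handled uniformly by the monotonicity argument above.
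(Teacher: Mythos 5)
Your proof is correct and complete, but it takes a slightly different route from the paper's. The paper argues in one step: the set of points with $0$ in the subdifferential is precisely the set of global minimizers of $f$, the minimizer set of a convex function is convex (hence an interval in $\Re$), and the sign claims ``follow from convexity and the definition of sub-gradients.'' You instead work directly with the one-sided derivatives $f'_-,f'_+$, their monotonicity ($f'_+(x)\le f'_-(y)$ for $x<y$), and the identification $\partial f(x)=[f'_-(x),f'_+(x)]$, deriving both the interval property and the strict sign statements from scratch. What your approach buys is that the sign assertions --- which the paper leaves implicit --- are actually proved: you correctly observe that ``all subgradients strictly negative'' reduces to $f'_+(x)<0$, and you obtain this by combining monotonicity with the fact that $x\notin S$ excludes $0\in[f'_-(x),f'_+(x)]$. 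What the paper's route buys is brevity, since the two facts it cites (first-order optimality and convexity of the argmin set) are standard. The only cosmetic caveat in your write-up is that the stated facts about one-sided derivatives require $f$ to be finite and convex on all of $\Re$, which is exactly the hypothesis of the lemma, so nothing is lost.
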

\begin{proof}
The proof of the above lemma follows from the observation that the set of minimizers of a convex function is a convex set. Moreover, the set of points with $0$ in the sub-gradient is precisely the set of minimizers. Hence, this set is convex (interval in the current setup). Sign of the sub-gradients at points to the left or right of this interval follows from convexity of the function and definition of sub-gradients.
\end{proof}
\subsection{Bisection search for solving ${\bf P2}$}
In this section, we present our algorithm for solving optimization problem ${\bf P2}$ described in Equation~\eqref{prob:P2}. Algorithm~\ref{alg:bisection} describes this procedure.z
\begin{algorithm2e}
   \caption{\texttt{BisectionOracle}}
   \label{alg:bisection}
   	\DontPrintSemicolon 
   	\tcc{For simplicity of presentation, we assume arm $1$ is the empirical best arm}
	\KwIn{${\bf N^o}$, $\hat{\mu}(t), \delta$}
    \SetKwProg{Fn}{}{:}{}
    { 
        \textbf{Initialization}{ $N^l_1 = 0$, $N^u_1 = 1, \text{tolerance} = 10^{-3}$}
        
        \tcc{identify appropriate range of $N_1$ for bisection search}
        \While{ $\min\limits_{b\ne 1} (N^o_1+N^u_1) \KL(\hat{\mu}_1(t), \hat{\mu}_b(t)) < \log\frac{1}{\delta}+\log\log\frac{1}{\delta} $}{   
             $N^l_1 \leftarrow N^u_1$,
            
             $N^u_1 \leftarrow 2*N^u_1 $.
        }
        \tcc{perform bisection search}
        \While{ $N^u_1 - N^l_1 > \text{tolerance}$ }{

            $N_1^{\text{next}} \leftarrow \frac{N^l_1 + N^u_1}{2}$
            
            $g \leftarrow \text{Grad}\left(n_1 + \sum\limits_{b\ne 1} \max\lrset{0,{N}_b(n_1)}, N_1^{\text{next}}\right)$ 
            \tcp*{call Algorithm~\ref{alg:bisection_grad}}
        
        \If{
                $g \le 0 $
                }
                {
                
                    $N^l_1 \leftarrow N_1^{\text{next}}$
                }
                \Else{
                    $N^u_1 \leftarrow N_1^{\text{next}}$.
                }
        }
        \KwOut{$N_1$}
    }
\end{algorithm2e}

\begin{algorithm2e}
   \caption{\texttt{GradientComputationOracle}}
   \label{alg:bisection_grad}
   	\DontPrintSemicolon 
   	\tcc{Computes gradient of objective $\mathcal{O}_2$}
	\KwIn{${\bf N^o}$, $\hat{\mu}(t), \delta, N_1$}
    \SetKwProg{Fn}{}{:}{}
    { 
        \textbf{Initialization}{ $ \text{tolerance} = 10^{-3}$}
        
        \tcc{check if $N_1$ is a feasible point}
        \If{ $\min\limits_{b\ne 1} (N_1^o+N_1) \KL(\hat{\mu}_1(t), \hat{\mu}_b(t)) \leq \log\frac{1}{\delta}+\log\log\frac{1}{\delta} $}{   
             \KwOut{-1}
        }
        \tcc{perform bisection search}
        
        Compute $N_a(N_1)$ for all $a>1$ using bisection search

        Let $\mathcal{A} = \{a: N_a(N_1) > 0\}$ be the set of active arms

            For each $a\in \mathcal{A}$, define $x_{1,a}$ as 
            $$ x_{1,a} = \frac{(N^o_1+N_1) \hat{\mu}_1(t) + (N^o_a + N_a(N_1)) \hat{\mu}_a(t)}{N^o_1 + N^o_a + N_1 + N_a(N_1) }$$
            
        \KwOut{$1 - \sum\limits_{a\in \mathcal A}\frac{d(\hat{\mu}_1(t), x_{1,a})}{d(\hat{\mu}_a(t), x_{1,a})}$}
    }
\end{algorithm2e}

\subsection{Existence of $N_a(n_1)$ and computing them}\label{sec:Nan}
\paragraph{Existence of $N_a(n_1)$. } Since for a fixed $n_1$, the index is a monotonic function of $N_a$ (Lemma \ref{lem:convex.alloc}), its value is bounded by its limit when $N_a \rightarrow \infty$. At this point, $x_{1,a} \rightarrow \hat{\mu}_a(t)$, and the value of the index $Z_{1,a}$ is  $(N_1^o + n_1)\KL(\hat{\mu}_1(t), \hat{\mu}_a(t))$. If this quantity is less than $\log\frac{1}{\delta} + \log\log\frac{1}{\delta}$, then $N_a(n_1)$ doesn't exist. We next show how to compute $N_a(n_1)$ efficiently in  $O(\log\frac{1}{\epsilon})$ time. Thus, computing the unique optimizers of $\mathcal O_2$ can be done in  $O(K\log^2\frac{1}{\epsilon})$ computations.

\paragraph{Computing $N_a(n_1)$. } For a fixed $n_1$ for which the maximum value of the index exceeds $\log\frac{1}{\delta} + \log\log\frac{1}{\delta}$,  computing $N_a(n_1)$ can be done in $O(\log(1/\epsilon))$ using another bisection search. This follows from the fact that for a fixed $n_1 \ge 0$, $Z_{1,a}(\cdot)$ is monotonic in $N_a$ (see, Lemma \ref{lem:convex.alloc}). Thus, computing the unique optimizers of $\mathcal O_2$ can be done in  $O(K\log^2(1/\epsilon))$ computations. 

\section{A discussion on UCB-style algorithms for BAI}\label{app:KL-LUCB}
The BAI problems in the purely-online setting were first studied by \cite{even2006action,gabillon2012best, jamieson14, kalyanakrishnan2012pac, kaufmann2013information, karnin13}. While most of these works provide guarantees that hold for finite $\delta$, their bounds are sub-optimal for small values of $\delta$. In this section, we adapt the LUCB-style algorithms of \cite{kalyanakrishnan2012pac, kaufmann2013information} to our o-o framework and show numerically that even for practical values of $\delta$ (set to $0.05$ in the experiments), they under-perform compared to the batched o-o version of the track-and-stop algorithm analysed in the main text. In particular, we observe that the LUCB style algorithms require at least $10$ times as many samples as Algorithm~\ref{alg:tas_batch} before they stop. 

\subsection{Algorithm}
As in the tracking-based algorithms, LUCB-algorithm is a specification of a \emph{sampling} rule, a \emph{stopping} rule, and a \emph{recommendation} rule. We now describe these. For arm $a$, let $N^o_a$ denote the number of offline samples from arm $a$. Till time $t$ of online sampling, let $N_a(t)$ denote the number of online samples generated from that arm. Let $\hat{\mu}_a(t)$ denote the empirical mean of arm $a$ constructed using its $N^o_a + N_a(t)$ samples. 

At each time $t$ and for each arm $a$, the algorithm uses a UCB index to construct upper and lower confidence bounds for the means of each arms, call these $U_a(t)$ and $L_a(t)$. Examples of these used in \cite{kalyanakrishnan2012pac,kaufmann2013information} are given below. The following can be derived using Hoeffding's inequality:
\begin{align}\label{eq:lucb}
& U_a(t) := \hat{\mu}_a(t) + \sqrt{\frac{C(\tau_1 + t, \delta)}{2(N^o_a + N_a(t))}},\\
&L_a(t) := \hat{\mu}_a(t) - \sqrt{\frac{C(\tau_1 + t, \delta)}{2(N^o_a + N_a(t))}}.\nonumber
\end{align}
One can also arrive at the  following upper and lower bounds using concentration for $\KL$-divergences (as in \cite{cappe2013kullback} for SPEF and \cite{agrawal2021regret} for more general distributions).
\begin{align}\label{eq:kl_lucb}
&U_a(t) := \max\lrset{ x\in\Re :~ (N^o_a+N_a(t))d(\hat{\mu}_a(t), x) \le C\lrp{\tau_1 + t,\delta}  }, \\
&L_a(t) := \min\lrset{ x\in\Re :~ (N^o_a+N_a(t))d(\hat{\mu}_a(t), x) \le C\lrp{\tau_1 + t,\delta}  }, \nonumber
\end{align}
where $d(p, q)$ denotes the $\KL$ divergence between unique arms in $\cal S$ with means $p$ and $q$, respectively. The threshold function $C(n,\delta)$ is chosen so that for each arm $a$ for for all $t$, $\mu_a\in [L_a(t),U_a(t)]$ with probability at least $1-\frac{\delta}{K}$ (\cite[Lemma 4]{kaufmann2013information}).  As in the same paper, we use 
\[ C(n,\delta) =  \log\frac{K(\tau_1+t)^2}{\delta}+ \log\lrp{1 + \log\frac{K(\tau_1 + t)^2}{\delta}} \]
in our experiments to follow, when the arms have Bernoulli distributions. As a remark, we point out that one may use $C(\cdot, \cdot)$ from \cite[Proposition 15]{Kauffman_21} to get a lower dependence on the term  $\log(\tau_1 + t)$, which may translate to an improvement in the lower order terms in sample complexity of the LUCB algorithms.

Let $a^*(t)$ denote the empirically-best arm at time $t$. Then, a leader $l_t$, a challenger $c_t$, and the stopping statistic $B(t)$ are defined as below:
\begin{equation}\label{eq:lucb_stats} a^*(t):= \argmax\limits_{a} \hat{\mu}_a(t), ~~ l_t = a^*(t), ~~ c_t = \argmax\limits_{a\ne l_t} U_a(t), ~ \text{ and } ~ B(t) := U_{c_t}(t) - L_{l_t}(t) . \end{equation}

The algorithm proceeds by generating samples from two well-chosen arms (\emph{leader} $l_t$, and \emph{challenger} $c_t$) at each step. It stops as soon as the lower-confidence index for the empirically-best arm (or the leader) is greater that the maximum upper-confidence index for all the sub-optimal arms, \emph{i.e.,} when the optimal arm is well separated from the sub-optimal arms. On stopping, the algorithm outputs the empirically-best arm. The Algorithm \ref{alg:kl-lucb} formally describes the steps.

\begin{algorithm2e}
   \caption{\texttt{LUCB for o-o}}
   \label{alg:kl-lucb}
   	\DontPrintSemicolon 
   	\KwIn{Confidence level $\delta$, historic data $\lrset{N^o_a,\hat{\mu}^o_a}_{a=1}^K$}
    \SetKwProg{Fn}{}{:}{}
    { 
        \textbf{Initialization} Pull each arm once. \\
        Set  $t\leftarrow K, N_a(K) \leftarrow 1$.\\ 
        Update $\hat{\mu}(K)$\\
        Compute $U_a(K)$ and $L_a(K)$ for each arm $a$ using~\eqref{eq:lucb} or ~\eqref{eq:kl_lucb}\\
        Compute $l_K$, $c_K$, $B(K)$ using~\eqref{eq:lucb_stats} \\
        \While{ $B(t) \ge 0$ }{
        Sample arms $l_t$ and $c_t$ and set $t \leftarrow t + 2$\\
        Update $\hat{\mu}(t)$,  $N_{l_t}(t)$, $N_{u_t}(t)$.\\
        Compute $U_a(t), L_a(t)$ using~\eqref{eq:lucb} and~\eqref{eq:kl_lucb}.  \\
        Compute $l_t, c_t, B(t)$ using~\eqref{eq:lucb_stats}
        }
        \KwOut{$\argmax_{a\in[K]} \hat{\mu}_a(t)$}
    }
\end{algorithm2e}

\subsection{Numerical results}
We now present some numerical results for testing the performance of Algorithm~\ref{alg:kl-lucb}, and compare it to the Batched-Track and Stop (Algorithm \ref{alg:tas_batch}) from the main text.

We do two sets of experiments. In both the experiments, to keep the discussion simple, we consider Bernoulli arms. We now describe the setup of each.

\begin{figure}[tb]
\begin{center}
    \includegraphics[scale=0.35]{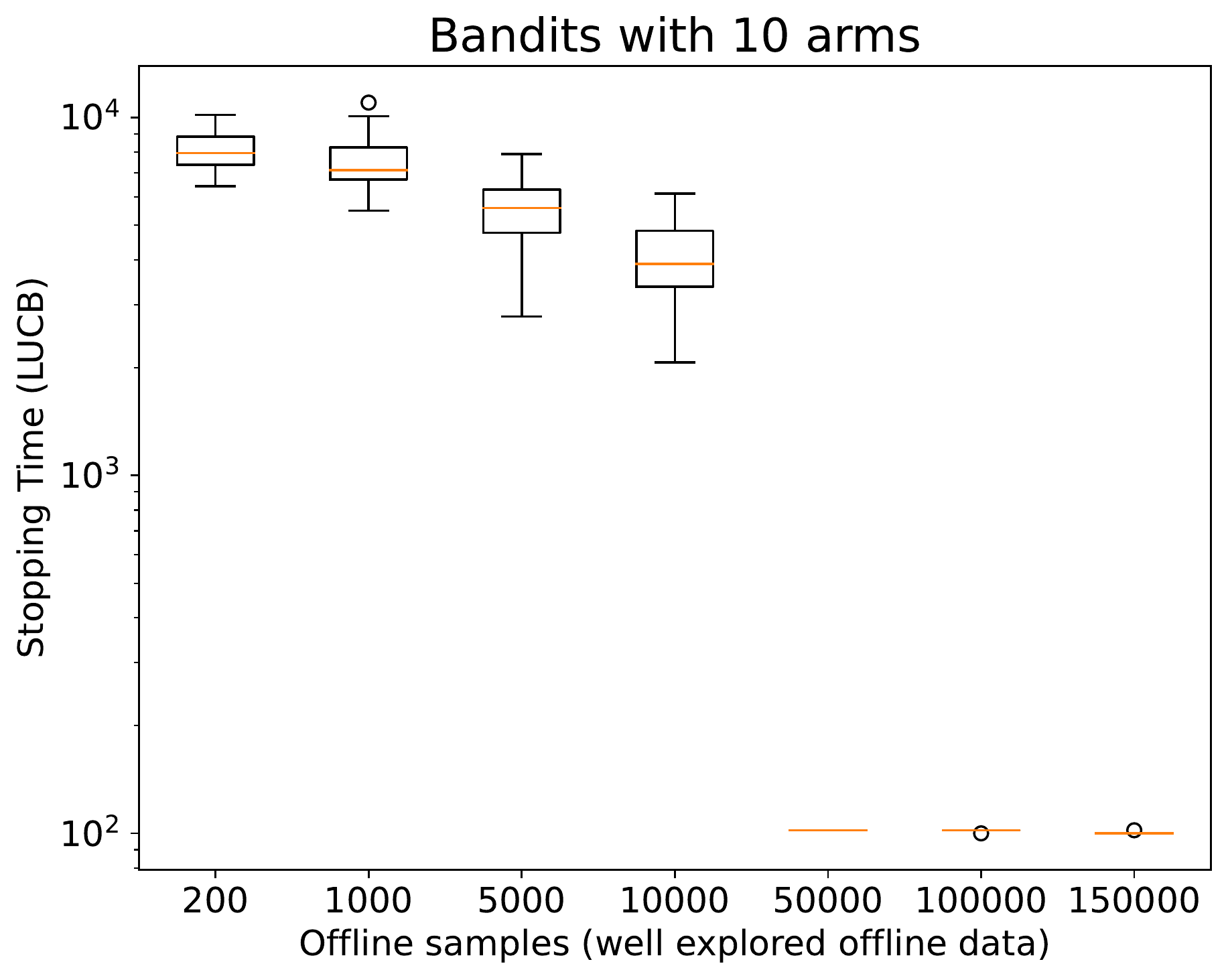}\hfill
    \includegraphics[scale=0.35]{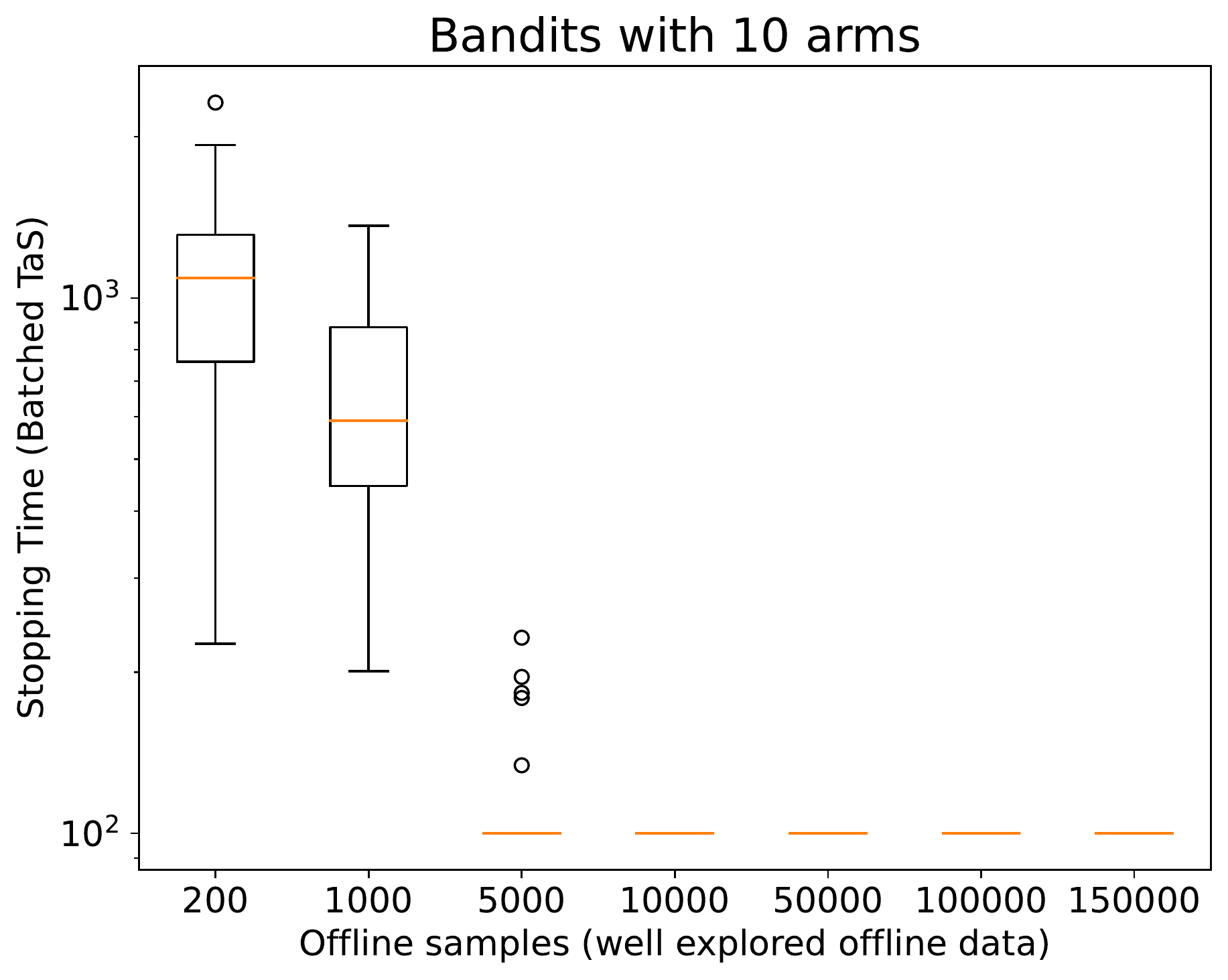}       \caption{Stopping time of LUCB (Algorithm~\ref{alg:kl-lucb}) and Batched-TaS  (Algorithm~\ref{alg:tas_batch}) with varying number of offline samples, when the each arm is uniformly sampled in the offline data. The rewards of the arms follow Bernoulli distributions. $\delta = 0.05$ for these experiments. Results are for $50$ independent trials. LUCB requires at least $10$ times more samples.}
    \label{fig:lucb_tas1}
\end{center}
\end{figure}

In the first experiment, we consider $10$-armed Bernoulli MAB with means \[\mu = (0.298, 0.437,  0.376, 0.651, 0.376,  0.322, 0.600, 0.643, 0.381, 0.8).\] 
All the arms are equally explored in the offline data, \emph{i.e.}, each arm is given equal number of samples in the offline sampling. 
$\delta$ is set to $0.05$ to see the performance of both the algorithms for practical ranges of $\delta$. We record the number of samples required by LUCB-algorithm with indexes constructed using~\eqref{eq:lucb}, when it has access to different amounts of offline data. We repeat this for the Batched-Track and Stop algorithm from the main text. The observations from $50$ independent runs of the experiments for both the algorithms are plotted in Figure~\ref{fig:lucb_tas1}. 

In the second experiment, for the same setup, we record the observations when the offline data is skewed. In particular, we generate offline data with no samples to the best arm, while all other arms are equally sampled. The observations are plotted in Figure~\ref{fig:lucb_tas2}.

\begin{figure}[tb]
\begin{center}
    \includegraphics[scale=0.35]{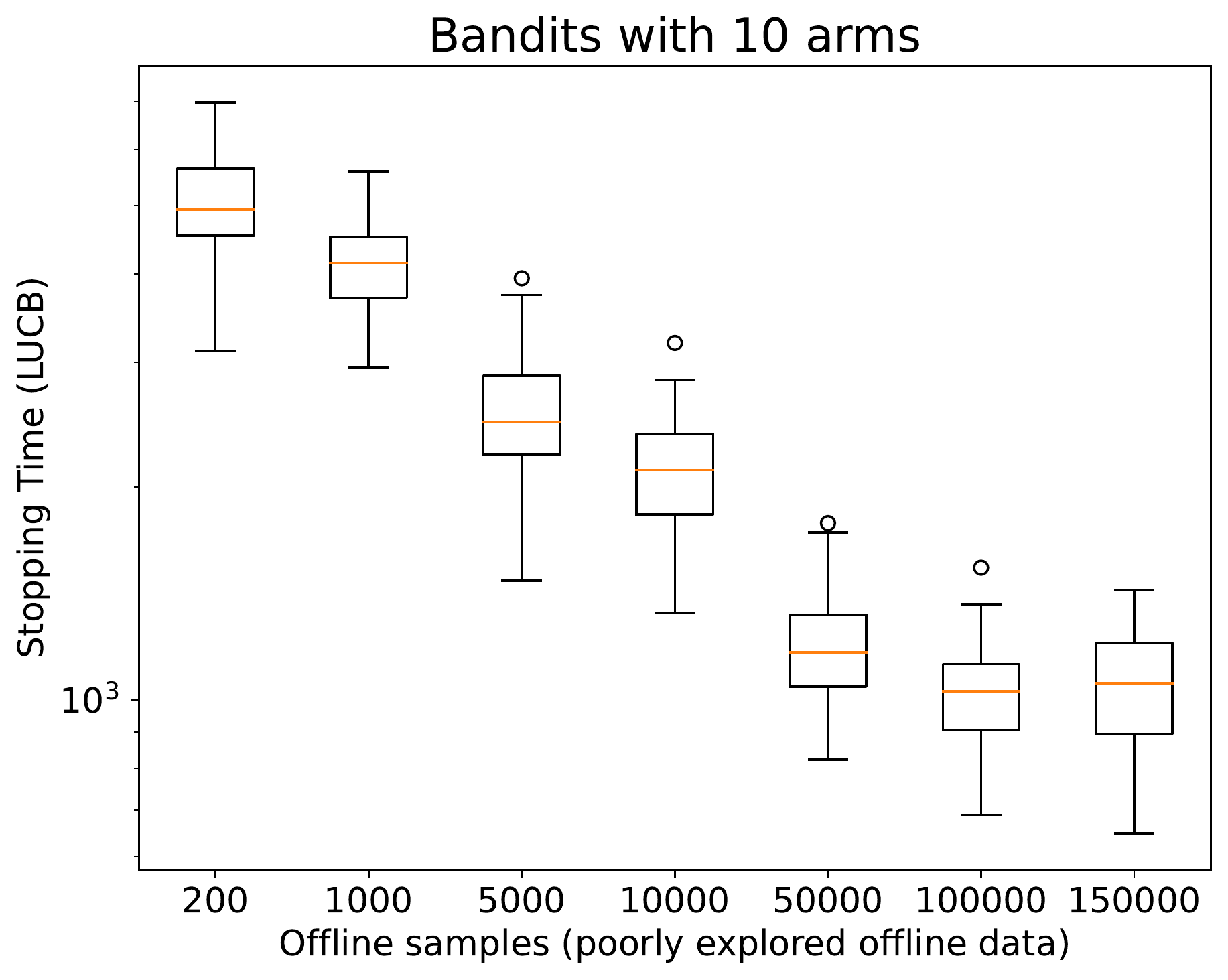}\hfill
    \includegraphics[scale=0.35]{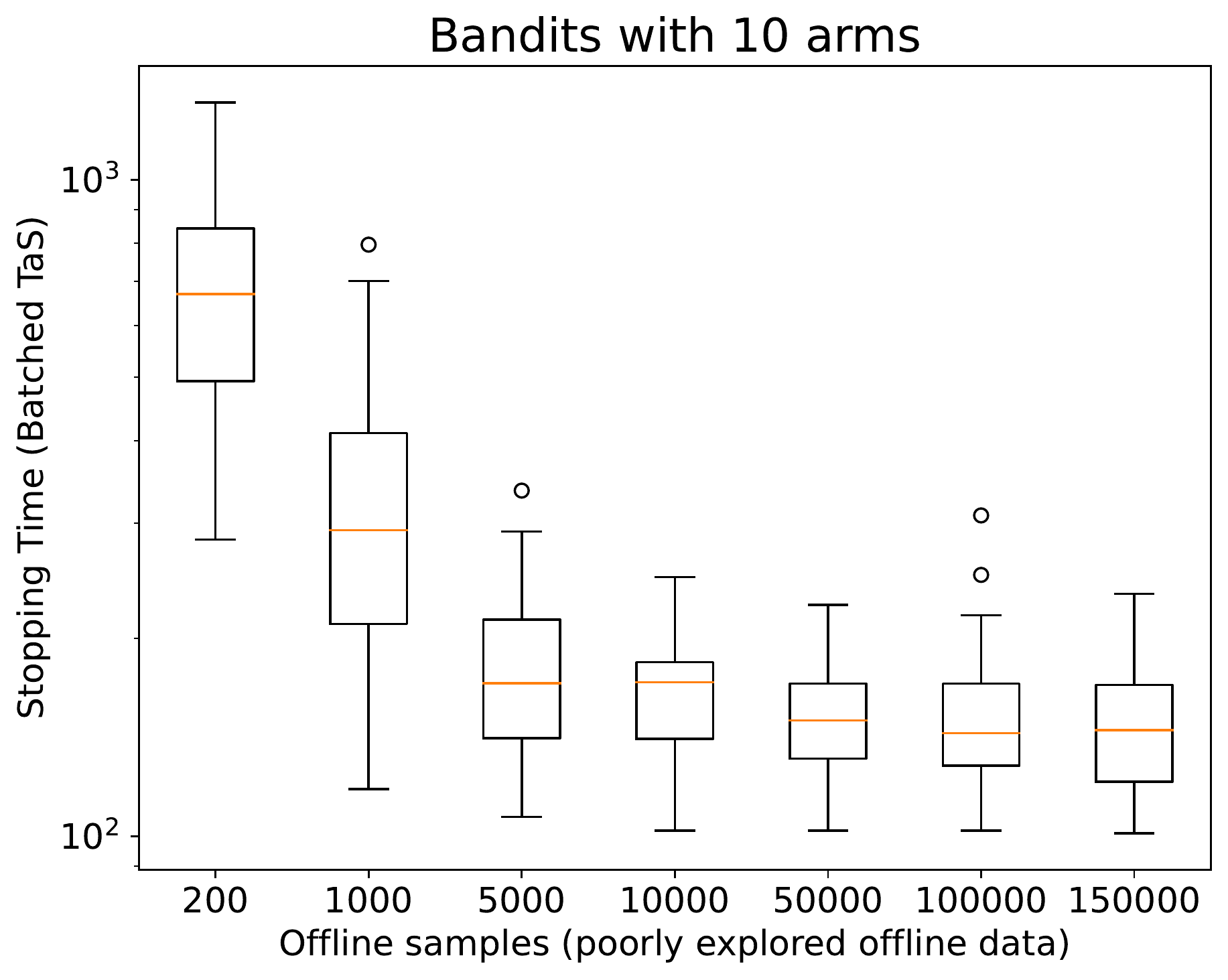}\caption{Stopping time of LUCB (Algorithm~\ref{alg:kl-lucb}) and  Batched-TaS (Algorithm~\ref{alg:tas_batch}) with varying number of offline samples, when the each arm except the best arm is uniformly sampled in the offline data, while the best arm doesn't have any samples. The rewards of the arms follow Bernoulli distributions. $\delta = 0.05$ for these experiments. Results are for $50$ independent trials. LUCB requires almost $10$ times more samples.}
    \label{fig:lucb_tas2}
\end{center}
\end{figure}

Two key takeaways from the experiments are the following:

a) Access to the offline data reduces the number of online samples generated by both the algorithms. 

b) When the offline data is not sufficient, the number of online samples required by the LUCB algorithm is significantly higher (at least $10$ times more) than that generated by the Batched-Track and Stop.

\section{Other Methods for BAI}\label{sec:artificial_replay}
\subsection{Artificial replay}

\cite{banerjee2022artificial} proposed a meta-algorithm called \emph{Artificial Replay} for learning in the offline-online (o-o) setting. Here, one chooses any online learning algorithm as a base algorithm. Suppose at time step $t$, the base algorithm recommends pulling an arm $a_t$. If a sample corresponding to $a_t$ is in the historic data, we remove it from the history and present it to the base algorithm. If $a_t$ is absent in the historic data, we pull that arm and get the reward. 

We now present a formal proof showing that such a strategy is sub-optimal, even if the base algorithm is such that it is an optimal algorithm for the online setting. Consider a 2-armed bandit instance, where the arm rewards follow a Gaussian distribution. In particular, rewards of arm $1$  are sampled from $\mathcal{N}(\mu_1, 1)$ and rewards of arm $2$ are sampled from $\mathcal{N}(\mu_2, 1)$. Without loss of generality, assume $\mu_1 = \mu_2 + \Delta$ for some $\Delta > 0$. Suppose the offline data is such that we have infinite samples from arm $1$, and $0$ samples from arm $2$ (\emph{i.e.,} $N^o_1 = \infty, N^o_2 = 0$). As our choice of base algorithm, we use any optimal BAI algorithm proposed for the purely online setting (see \cite{garivier2016optimal} for one such algorithm). Finally, we consider the setting where $\delta \to 0.$ 

Since our base algorithm is optimal for the online setting, it pulls arms in the  proportions suggested by the following lower bound optimization problem~\citep{garivier2016optimal}
 \[
 \min \sum_{a \in [K]}
N_a ~\mbox{  such that  } ~ {\bf N} \in A_{\delta}(\mu),
\]
where  $A_{\delta}(\nu)$ is the non-negative set of 
${\bf N} \in \Re^K_+$ that satisfy
\begin{equation*} 
\inf_{x} N_{1} \KL(\nu_1, x) +
N_{a} \KL(\nu_a, x) \geq \log  \frac{1}{2.4\delta},~ \forall a\ne 1.
\end{equation*}
The solution to this problem has an analytical expression: $N_1^* = N_2^* = \frac{2}{\Delta^2}\log\left({\frac{1}{2.4\delta}}\right).$
This shows that the sample complexity of artificial replay algorithm is $\frac{2}{\Delta^2}\log\left({\frac{1}{2.4\delta}}\right)$.

Now consider our algorithm for o-o setting. It pulls arms in the proportions suggested by the following lower bound optimization problem
\[
 \min_{{\bf N} \succeq 0} \sum_{a \in [K]}
N_a ~\mbox{  such that  }~ {\bf N^o + N} \in A_{\delta}(\mu).
\]
This solution to this again has a closed form expression: $N_1^* = 0, N_2^* = \frac{1}{\Delta^2}\log\left({\frac{1}{2.4\delta}}\right)$.  This shows there is a factor of $2$ difference in sample complexity between our algorithm and artificial-replay with the best base algorithm.

The above argument shows that even if the base algorithm being used is optimal for purely online setting, artificial replay has sub-optimal sample complexity in the o-o setting. The question that one could potentially ask is : ``are there any other base algorithms, which when combined with artificial-replay, lead to optimal sample complexity in the o-o setting?'' We conjecture that for artificial replay to achieve optimal sample complexity, the base algorithm needs to be chosen based on the available offline data. There is no single base algorithm that will work for all problem instances and all offline policies. 

\subsection{Thompson Sampling (TS)} 
One could consider designing TS style algorithms for BAI in the o-o setting. Here, offline data could be used to set the priors appropriately. While this is an interesting direction, we note that designing optimal TS style algorithms for BAI is an open problem even in the purely online setting~\cite{russo2016simple, jourdan2022top}.

\section{Offline-online regret-minimization problem}\label{app:regret}
In this section, we discuss the regret-minimization MAB problem in the o-o framework. Here, as in the BAI framework, the algorithm is presented with $K$ unknown probability distributions or {\em $K$ arms}. Each time the online algorithm pulls an arm, an independent sample from the corresponding distribution is generated. The samples generated by the algorithm are viewed as rewards. In addition, it also has access to samples generated from the same set of $K$ distributions using some adaptive policy that is independent of the algorithm. We refer to these additional samples as `offline data' as they are not generated by the algorithm, and are available with it as side information before it starts the online sampling. The goal of the algorithm is to maximize the total expected reward in the online trials, while making the best use of the offline data. 

To formally describe the setup, let us introduce some notation. For $t\in\mathbb N$ and $a\in [K] := \lrset{1, \dots, K}$, let $N^o_a(t)$ denote the total offline samples from arm $a$ available with the algorithm till time $t$, and let $N_a(t)$ denote the total online samples that the algorithm has generated from arm $a$ till time $t$. The goal of maximizing average cumulative regret can be formulated in terms of minimizing the average regret (to be defined in the following paragraphs). For simplicity of notation, we assume that the optimal arm in $\mu$ is arm $a$. Let $\Delta_a := \mu_1 - \mu_a$ denote the sub-optimality gap of arm $a$. It is $0$ for the optimal arm. Then for $T\in\mathbb N$,
\begin{equation}
    \E\lrs{R(T)} := \sum\limits_{a} \E\lrs{N_a(T)} \Delta_a.
\end{equation}

Observe that the algorithm's regret is defined only with respect to the samples generated in the online sampling. However, the number of online samples generated may depend on the offline data available. Hence, the expectation above is with respect to all the randomness, including that of the offline sampling algorithm, offline data, online algorithm, and online data. In the sequel, to get a handle on the expected regret, we bound $\E\lrs{N_a(T)}$ for the sub-optimal arms $a$.

\subsection{Lower bound}
We now derive a lower bound on the expected regret suffered by a reasonably-good (consistent) algorithm that in addition to online sampling, has access to offline data as described in the previous section.

For any bandit instance $\kappa\in \mathcal S^K$, let $a^*(\kappa)$ denote the optimal arm in $\kappa$. Let $\mu = \lrp{\mu_1, \dots, \mu_K}$ be the given bandit instance with $a^*( \mu) = 1$. 

\noindent\textbf{Consistent algorithms. }We are interested in algorithms with access to side information in terms of the offline data, that suffer small regret in the online trials, i.e., 
\[ \forall \kappa\in \mathcal S^K, ~ \forall a\ne a^*(\kappa), \quad \E_\kappa\lrs{N_a(T)} = o(T^\alpha), \quad \forall \alpha > 0,\]
where the expectation is with respect to all the randomness in the system including offline policy, offline data, online algorithm, and online data.

\begin{proposition}
For $\mu\in\mathcal S^K$ with $a^*(\mu) = 1$, any consistent algorithm satisfies 
\[ \liminf\limits_{T\rightarrow \infty} \frac{\E\lrs{N^o_a(T)} + \E\lrs{N_a(T)}}{\log T} \ge \frac{1}{\KL(\mu_a, \mu_1)}. \]
\end{proposition}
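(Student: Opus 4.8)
The plan is to adapt the classical change-of-measure argument of Lai–Robbins (and its modern formulation via the data-processing inequality, as in \cite{garivier2019explore, lattimore2020bandit}) to the offline-online setting, exploiting the fact that offline and online samples from a given arm are i.i.d.\ from the same distribution. Fix a suboptimal arm $a \ne 1$ and construct an alternative instance $\tilde{\nu}$ that agrees with $\mu$ on every arm except arm $a$, where we raise the mean of arm $a$ to some value $\tilde{\nu}_a > \mu_1$, making arm $a$ the unique best arm in $\tilde{\nu}$. Since the two instances differ only in the distribution of arm $a$, the log-likelihood ratio of all $N^o_a(T) + N_a(T)$ samples (offline plus online) from arm $a$ accumulated up to horizon $T$ is the only contribution, and taking expectations via Wald's identity gives
\begin{equation}\label{eq:regret_ll}
\E_\mu\!\lrs{ \log\frac{L_\mu}{L_{\tilde{\nu}}} } = \lrp{ \E\lrs{N^o_a(T)} + \E\lrs{N_a(T)} } \KL(\mu_a, \tilde{\nu}_a).
\end{equation}
This mirrors the computation in \eqref{eq:averagell} from the proof of Theorem~\ref{thm:Lower_bound}, except that now only arm $a$ contributes because $\mu$ and $\tilde{\nu}$ coincide elsewhere.

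Next I would apply the data-processing inequality to a well-chosen event to lower bound the left-hand side of \eqref{eq:regret_ll}. The natural event is $\mathcal E = \{N_a(T) > T/2\}$, or more precisely one that has small probability under $\mu$ (where arm $a$ is suboptimal) but large probability under $\tilde{\nu}$ (where arm $a$ is optimal), so that a consistent algorithm must sample arm $a$ often. Consistency controls both tails: under $\mu$, $\E_\mu\lrs{N_a(T)} = o(T^\alpha)$ forces $\mathbb{P}_\mu(\mathcal E)$ to be small, while under $\tilde{\nu}$, the arms $b \ne a$ (including the old best arm $1$) are now suboptimal, so $\E_{\tilde{\nu}}\lrs{\sum_{b\ne a} N_b(T)} = o(T^\alpha)$ forces $\mathbb{P}_{\tilde{\nu}}(\mathcal E)$ to be close to $1$. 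The data-processing inequality then yields
\begin{equation}\label{eq:regret_dpi}
\lrp{ \E\lrs{N^o_a(T)} + \E\lrs{N_a(T)} } \KL(\mu_a, \tilde{\nu}_a) \ge d\lrp{ \mathbb{P}_\mu(\mathcal E), \mathbb{P}_{\tilde{\nu}}(\mathcal E) },
\end{equation}
and a standard estimate $d(\mathbb{P}_\mu(\mathcal E), \mathbb{P}_{\tilde{\nu}}(\mathcal E)) \ge (1+o(1))\log T$ follows from the two consistency bounds (this is the step worked out in detail in \cite[Chapter 16]{lattimore2020bandit} for the purely online case).

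Dividing \eqref{eq:regret_dpi} by $\log T$, taking $\liminf_{T\to\infty}$, and then letting $\tilde{\nu}_a \downarrow \mu_1$ so that $\KL(\mu_a, \tilde{\nu}_a) \to \KL(\mu_a, \mu_1)$ by joint continuity of the divergence (Appendix~\ref{app:SPEF}), gives the claimed bound
\[
\liminf_{T\to\infty} \frac{\E\lrs{N^o_a(T)} + \E\lrs{N_a(T)}}{\log T} \ge \frac{1}{\KL(\mu_a, \mu_1)}.
\]
The main obstacle, and the place where the offline data enters nontrivially, is verifying that the data-processing step goes through cleanly when the ``sample count'' includes offline samples generated by an exogenous adaptive policy: one must check that the offline samples from arm $a$ share the same conditional distribution under $\mu$ and $\tilde{\nu}$ differential only through arm $a$'s law, and that the likelihood-ratio factorization \eqref{eq:regret_ll} correctly accounts for the offline policy's adaptivity (which, since it is independent of the online algorithm and samples arm $a$ from the common arm distribution, contributes only the arm-$a$ divergence term). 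The rest is a routine transcription of the standard consistent-algorithm lower bound, with $N_a(T)$ replaced by the combined offline-plus-online count.
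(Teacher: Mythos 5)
Your proposal is correct and follows essentially the same route as the paper's proof: a change-of-measure to an alternative instance differing only in arm $a$, the data-processing inequality applied to an event controlled on both sides by consistency (the paper uses $E_T=\{N_1(T)\le T-\sqrt{T}\}$ where you use $\{N_a(T)>T/2\}$, an immaterial difference), the estimate $d(\mathbb{P}_\mu(E_T),\mathbb{P}_{\tilde{\nu}}(E_T))\ge (1+o(1))\log T$, and finally the infimum over $\tilde{\nu}_a\ge\mu_1$. The point you flag as the "main obstacle" — that the offline policy's adaptivity cancels in the likelihood ratio because it is common to both measures — is exactly the (implicit) justification the paper relies on, so there is no gap.
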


\begin{proof}
Consider an alternative instance $\nu = \lrp{\mu_1, \nu_2, \mu_3, \dots, \mu_K}$. In $\nu$, only arm $2$ has a different distribution from $\mu$. Without loss of generality, we assume that $\nu$ has arm $2$ as the optimal arm. Suppose the algorithm generated $N_a(T)$ online samples in $T$ online trials, and also observed $N^o_a(T)$ offline samples from arm $a$. In particular, here $\sum_a N_a(T) = T$. Then, from data processing inequality, we have 
\begin{equation}\label{eq:dp} 
\lrp{\E\lrs{N^o_2(T)} + \E\lrs{N_2(T)}} \KL({\mu}_2, {\nu}_2 ) \ge d(\mathbb{P}_\mu(E_T), \mathbb{P}_\nu(E_T) ), 
\end{equation}
for any event $E_T$ that belongs to $\mathcal F_T$. Here $\mathbb{P}_\mu$ and $\mathbb{P}_\nu$ denote the probabilities when samples are from bandit instance $\mu$ and $\nu$, respectively. We refer the reader to Appendix~\ref{app:LB} and \cite[Chapter 4]{lattimore2020bandit} for details of the probability space and the filtration $\mathcal F_T$.

Consider the event  
\[E_T = \lrset{ N_1(T) \le T-\sqrt{T}}.\] 

Then,
\[ \mathbb{P}_\mu(E_T) = \mathbb{P}_\mu\lrp{T-N_1(T) \ge \sqrt{T}} = \mathbb{P}_\mu\lrp{ \sum\limits_{a\ne 1} N_a(T) \ge \sqrt{T} } \le \frac{\sum\limits_{a\ne 1} \E_\mu{N_a(T)}}{\sqrt{T}} \xrightarrow{T\rightarrow \infty} 0. \]

As earlier, here again the expectation $\E_\mu$ is the expectation with respect to all the randomness in the system, when the interaction is with the bandit instance $\mu$. The above limit follows because the algorithm is consistent. Similarly, 
\[ 
    \mathbb{P}_\nu(E^c_T) = \mathbb{P}_\nu\lrp{N_1(T) \ge T-\sqrt{T}} \le \frac{ \sum\limits_{a\ne 2}\E_\nu{N_a(T)}}{T-\sqrt{T}} \xrightarrow{T\rightarrow \infty} 0.
\]

Next, consider

\begin{align*}
    &\lim\limits_{T\rightarrow \infty }\frac{d(\mathbb{P}_\mu(E_T), \mathbb{P}_\nu(E_T) )}{\log T} \\
    &= \lim\limits_{T\rightarrow \infty }\frac{1}{\log T} \lrp{ \mathbb{P}_\mu(E_T)\log\frac{\mathbb{P}_\mu(E_T)}{\mathbb{P}_\nu(E_T)} + (1-\mathbb{P}_\mu(E_T))\log\frac{1-\mathbb{P}_\mu(E_T)}{1-\mathbb{P}_\nu(E_T)}  } \\
    &= \lim\limits_{T\rightarrow \infty } \frac{1}{\log T}\lrp{\log \frac{1}{\mathbb{P}_\nu(E^c_T)} }\\
    &\ge \lim\limits_{T\rightarrow \infty } \frac{1}{\log T}\lrp{\log \frac{T-\sqrt{T}}{\sum\limits_{a\ne 2}\E_\nu(N_a(T))} }\\
    &= \lim\limits_{T\rightarrow \infty}\lrp{1 + \frac{1}{\log T}\log\lrp{1-\frac{1}{\sqrt{T}}} - \frac{1}{\log T}\log\lrp{\sum\limits_{a\ne 2} \E_\nu (N_a(T)) } }\\
    &=1.
\end{align*}

Substituting the above in~\eqref{eq:dp}, we get 
\[ \lim\limits_{T\rightarrow \infty} \frac{\lrp{\E{N^o_2(T)} + \E{N_2(T)}} \KL({\mu}_2, {\nu}_2 ) }{\log T} \ge \lim \limits_{T\rightarrow \infty} \frac{d(\mathbb{P}_\mu(E_T), \mathbb{P}_\nu(E_T) )}{\log T} \ge 1.  \] 

That is, 
\[ \lim\limits_{T\rightarrow\infty} \frac{\E N^o_2(T) + \E N_2(T)}{\log T} \ge \frac{1}{\inf\limits_{\nu_2 \ge \mu_1}\KL(\mu_2, \nu_2) } = \frac{1}{\KL(\mu_2, \mu_1)}.\]
\end{proof}

Observe that the above result highlights a possible gain of having offline data available as the lower bound on the online samples in the o-o setting is at most that in the purely-online setting. 

\subsection{Algorithm}
We now discuss the algorithm \texttt{o-o UCB}, a natural adaptation of the purely-online algorithm \texttt{UCB1} of \cite{auer2002finite} for the regret-minimization MAB problem to the o-o framework. We refer the reader to \cite{auer2002finite, garivier2011kl, cappe2013kullback, agrawal2021regret} for optimal UCB algorithms for the purely online problem under different assumptions. For simplicity, in this section, we assume that the arms have Gaussian distributions with unit variance, i.e., $\mathcal S$ is the collection of unit-variance Gaussian distributions.

The algorithm has access to $N^o_a$ offline samples from each arm. At each time, the algorithm computes an index for each arm using the available samples, and pulls the arm with the maximum index. If the arm chosen is  sub-optimal, it incurs a regret. For each arm $a\in [K]$, let $N_a(t)$ denote the online samples generated from that arm till time $t$. 
Next, let $\hat{\mu}_a(t)$ denote the empirical mean constructed using $N_a(t) + {N}^o_a$ samples from arm $a$. The UCB index for arm $a$ at time $t$, denoted by $U_a(t)$, is given by 
\begin{align*} \label{eq:ucb_index}
    U_a(t) := \hat{\mu}_a(t) + \sqrt{\frac{4 \log t}{N^o_a + N_a(t)}}.
\end{align*}

Also, recall that here, $\sum_a N_a(t) = t$ denotes the total online samples. Then, we have the following guarantee on the expected number of pulls of a sub-optimal arm (and hence, on regret) of the proposed algorithm.

\begin{theorem}
For $\mu\in\mathcal S^K$ with $a^*(\mu) = 1$, \texttt{o-o UCB} has the following bound on the number of pulls of sub-optimal arm $a$ till time $T$:
\[ \E\lrs{N_a(T) }  \le  \E\lrs{\max\lrset{1, \frac{8\log T}{\Delta^2_a} -  {N}^o_a + 1}} + o(\log T). \]
\end{theorem}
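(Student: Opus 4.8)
The plan is to follow the classical optimism-based regret analysis of \texttt{UCB1}, modified so that the offline samples enter the confidence widths and thereby reduce the number of online pulls. Throughout I would condition on the offline data $\mathcal{H}_0$, so that the offline counts $N^o_a$ and the offline empirical sums are treated as fixed; the outer expectation $\E[\cdot]$ over the offline policy and offline samples is taken only at the very end. For unit-variance Gaussians one has $\KL(\mu_a,x)=(\mu_a-x)^2/2$, so the target leading term $\frac{8\log T}{\Delta_a^2}$ is exactly four times the lower-bound rate $\frac{1}{\KL(\mu_a,\mu_1)}=\frac{2}{\Delta_a^2}$, the usual loss of \texttt{UCB1} relative to $\KL$-optimal indices. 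The object to control is $\E[N_a(T)]=\sum_t \mathbb{P}(A_t=a)$ for each suboptimal arm $a$.

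First I would fix the \emph{critical total count} $u_a := \frac{8\log T}{\Delta_a^2}$ and split online pulls of arm $a$ according to whether the pooled count $N^o_a+N_a(t-1)$ has crossed $u_a$. Since each online pull of $a$ raises $N_a$ by one while the offline contribution $N^o_a$ is present from the start, the number of online pulls that can occur \emph{before} the pooled count reaches $u_a$ is at most $(u_a-N^o_a)_+$; together with the single mandatory pull during the round-robin initialization this yields the term $\max\{1,\,u_a-N^o_a+1\}$, where the $\max$ with $1$ covers the favourable case in which the offline data alone already drives the pooled count past $u_a$.

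Then I would show that every online pull of $a$ occurring \emph{after} the pooled count exceeds $u_a$ forces a concentration failure. On the event $\{A_t=a\}$ we have $U_a(t)\ge U_1(t)$; if in addition $U_1(t)\ge\mu_1$ (i.e.\ the optimal arm is not underestimated) and the pooled count of $a$ exceeds $u_a$, then the index inequality forces $\hat{\mu}_a(t)$ to exceed $\mu_a$ by a fixed fraction of $\Delta_a$, an event whose probability decays like a negative power of $t$ by the Gaussian tail bound $\mathbb{P}(\hat{\mu}_{a,s}-\mu_a\ge\varepsilon)\le e^{-s\varepsilon^2/2}$. Summing these probabilities over time (with a union or peeling argument over the admissible pooled counts $s\le t$, using that all samples of arm $a$, offline or online, are i.i.d.\ $\mathcal{N}(\mu_a,1)$ regardless of how they were selected) contributes only a lower-order $o(\log T)$ term. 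Combining the two pieces and taking the outer expectation over $\mathcal{H}_0$ gives the stated bound.

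The main obstacle I anticipate is the concentration step in the presence of adaptively collected offline data: the pooled empirical mean $\hat{\mu}_a(t)$ averages $N^o_a$ offline samples drawn under an unknown, possibly history-dependent offline policy together with the online samples, and $N^o_a$ itself is random. The point to establish carefully is that, conditional on which samples come from arm $a$, the rewards are i.i.d.\ $\mathcal{N}(\mu_a,1)$ independent of the selection rule, so the standard fixed-count Gaussian bound applies to the pooled average, and a union over the count indices keeps the residual probability summable to $o(\log T)$. A secondary bookkeeping point is matching the additive constant $+1$ and the $\max\{1,\cdot\}$ truncation to the initialization and to the regime $N^o_a\ge u_a$; these follow from the integer nature of the counts and do not affect the leading order.
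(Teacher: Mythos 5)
Your proposal follows essentially the same route as the paper's proof, which simply invokes the UCB1 analysis of Auer et al.\ with the pull-count threshold replaced by $\ell=\max\{1,\tfrac{8\log T}{\Delta_a^2}+1-N^o_a\}$: once the \emph{online} count of arm $a$ reaches $\ell$, the \emph{pooled} count exceeds the critical value $u_a=8\log T/\Delta_a^2$, the ``gap event'' of Auer et al.'s Eq.~(9) becomes impossible, and only the two concentration failures (Eqs.~(7)--(8)) remain. Your decomposition, the $(u_a-N^o_a)_+$ accounting, the $\max\{1,\cdot\}$ truncation, and the observation that adaptively collected offline rewards are still conditionally i.i.d.\ given the arm identity all match the paper (the last point is one the paper glosses over, so spelling it out is a genuine improvement).

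One step needs care, though. In the second phase you propose to bound the event that $\hat{\mu}_a(t)$ exceeds $\mu_a$ by a \emph{fixed fraction} of $\Delta_a$. For a pooled count $s$ near $u_a$ that event has probability $e^{-\Theta(s\Delta_a^2)}=T^{-\Theta(1)}$ with a small constant in the exponent (about $4(1-1/\sqrt{2})^2\approx 0.34$), and after the union over admissible counts $s\le t$ and the sum over $t\le T$ this is polynomial in $T$, not $o(\log T)$. The argument closes only if, as in Auer et al., you instead control the underestimation/overestimation events at the confidence-width scale, $\{\hat{\mu}_{1,s}\le\mu_1-c_{t,s}\}$ and $\{\hat{\mu}_{a,s}\ge\mu_a+c_{t,s}\}$ with $c_{t,s}$ the width appearing in the index: these have probability polynomially small in $t$ \emph{uniformly over} $s$ (the paper asserts $t^{-4}$), so the double union over counts and the sum over $t$ contribute $O(1)$, which is the source of the $o(\log T)$ remainder. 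This is a local fix rather than a different approach, but as literally written that step would not deliver the claimed remainder.
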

The analysis follows from the proof in  \cite[Theorem 1]{auer2002finite}. In particular, by the choice of index for \texttt{o-o UCB}, using Hoeffding's inequality, one can see that the probabilities of the events in \cite[Equations (7), (8)]{auer2002finite} are at bounded by $\frac{1}{t^4}$. Choosing
\[ \ell = \max\lrset{1, \frac{8\log T}{\Delta^2_a} + 1 -  N^o_a  } \]
in their analysis, we get the bound.

We point out that the algorithm can be extended beyond Gaussian to SPEF using analysis of \cite{cappe2013kullback} and using the index proposed therein.

\section{Additional experimental results}
\label{sec:additional_exps}

In this section, we present empirical evidence showing the importance of availability of the offline data over purely offline learning. We generated offline data from two policies: (a) a policy which uniformly samples all the arms (well explored offline data), and (b) a policy that uniformly samples all the arms except the best arm. The latter policy doesn't pull the best arm at all (poorly explored offline data). We run the o-o experiments for $3-$ as well as $10-$armed bandit for each of these offline policies. In each experiment, the rewards of the arms follow Gaussian distributions with variance $1$. The mean reward of all the arms, except that of the best arm, is $0.4$. The mean reward of the best arm is $0.5$. Figure~\ref{fig:online_offline_appendix} presents the expected stopping time of our algorithm, as we vary the amount of offline data, $\delta$ is set to $10^{-3}$. 

In Section~\ref{app:KL-LUCB}, we present additional experimental results where the arms follow  Bernoulli distributions (Figures~\ref{fig:lucb_tas1} and~\ref{fig:lucb_tas2}). The experiments in that section are for a larger value of $\delta$ ($=0.05$) and demonstrate that the gain of access to the offline samples is seen even for relatively large values of $\delta$. 

Here are two important takeaways from these results: 

(a) The number of online samples decreases as the amount of offline data increases. 

(b) Even if the offline data is of poor quality (\emph{e.g.,} data generated by the second policy above), it  helps reduce the number of online rounds. Note that learning algorithms that solely rely on offline data wouldn't have worked in such cases.

\begin{figure}[htb]
\begin{center}
    \includegraphics[scale=0.35]{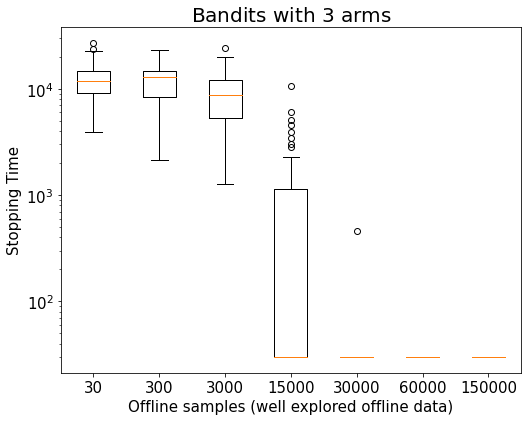}\hfill
    \includegraphics[scale=0.35]{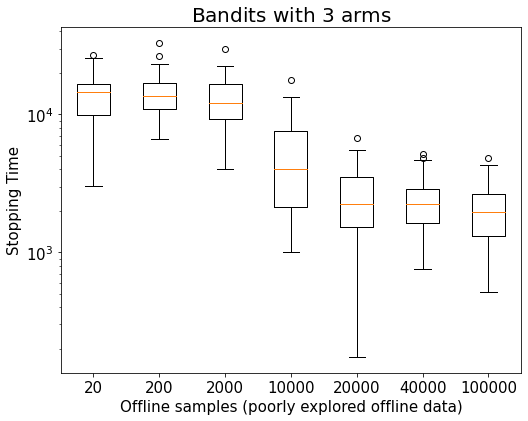}
    \includegraphics[scale=0.35]{bai_10_arms_corrected_well_explored.png}\hfill
    \includegraphics[scale=0.35]{bai_10_arms_corrected_poorly_explored.png}
      \caption{Stopping time of Algorithm~\ref{alg:tas_batch} with varying number of offline samples. The offline samples are collected using two policies that are described in Section~\ref{sec:exps}. The first 2 plots are for MAB instance with $3$ arms and the last 2 are for $10$ arms. The rewards of the arms follows Gaussian distribution with variance $1$. The mean reward of all the arms, except the best arm is $0.4$. The mean reward of the best arm is $0.5$. We chose $\delta = 10^{-3}$ for these experiments. The quantile plots are obtained by repeating each experiment $50$ times.}
    \label{fig:online_offline_appendix}
\end{center}
\end{figure}

\vfill

\end{document}